\icmltitlerunning{DeFoG: Discrete Flow Matching for Graph Generation}
\def\eqref#1{equation~\ref{#1}}
\def\1{\bm{1}}
\DeclareMathAlphabet{\mathsfit}{\encodingdefault}{\sfdefault}{m}{sl}
\SetMathAlphabet{\mathsfit}{bold}{\encodingdefault}{\sfdefault}{bx}{n}
\definecolor{Gray}{gray}{0.9}
\definecolor{LightGray}{gray}{0.97}
\newcolumntype{g}{>{\columncolor{LightGray}}S}
\theoremstyle{plain}
\newtheorem{theorem}{Theorem}
\newtheorem{proposition}[theorem]{Proposition}
\newtheorem{lemma}[theorem]{Lemma}
\newtheorem{corollary}[theorem]{Corollary}
\theoremstyle{definition}
\newtheorem{assumption}[theorem]{Assumption}
\theoremstyle{remark}
\newcommand{\ca}[1]{\textcolor{gray}{\small #1}}
\Crefname{equation}{Eq.}{Eqs.} 
\Crefname{section}{Sec.}{Secs.}
\Crefname{table}{Tab.}{Tabs.}
\Crefname{algorithm}{Alg.}{Algs.}
\begin{document}

\twocolumn[
\icmltitle{DeFoG: Discrete Flow Matching for Graph Generation}

\icmlsetsymbol{equal}{*}

\begin{icmlauthorlist}
\icmlauthor{Yiming Qin}{equal,lts4}
\icmlauthor{Manuel Madeira}{equal,lts4}
\icmlauthor{Dorina Thanou}{lts4}
\icmlauthor{Pascal Frossard}{lts4}
\end{icmlauthorlist}

\icmlaffiliation{lts4}{EPFL, Lausanne, Switzerland}

\icmlcorrespondingauthor{Yiming Qin}{yiming.qin@epfl.ch}
\icmlcorrespondingauthor{Manuel Madeira}{manuel.madeira@epfl.ch}

\icmlkeywords{Graph Generative Models, Discrete Flow Matching}

\vskip 0.3in
]

\printAffiliationsAndNotice{\icmlEqualContribution} %

\doparttoc %
\faketableofcontents %

\begin{abstract}
\looseness=-1

Graph generative models are essential across diverse scientific domains by capturing complex distributions over relational data. Among them, graph diffusion models achieve superior performance but face inefficient sampling and limited flexibility due to the tight coupling between training and sampling stages. We introduce DeFoG, a novel graph generative framework that disentangles sampling from training, enabling a broader design space for more effective and efficient model optimization. DeFoG employs a discrete flow-matching formulation that respects the inherent symmetries of graphs. We theoretically ground this disentangled formulation by explicitly relating the training loss to the sampling algorithm and showing that DeFoG faithfully replicates the ground truth graph distribution. Building on these foundations, we thoroughly investigate DeFoG's design space and propose novel sampling methods that significantly enhance performance and reduce the required number of refinement steps. Extensive experiments demonstrate state-of-the-art performance across synthetic, molecular, and digital pathology datasets, covering both unconditional and conditional generation settings. It also outperforms most diffusion-based models with just 5–10\% of their sampling steps. 
\looseness=-1

\end{abstract}

\vspace{-20pt}
\section{Introduction}
\label{sec:intro}
\vspace{-5pt}
Graph generation has become a fundamental task across diverse fields, from molecular chemistry to social network analysis, due to graphs' capacity to represent complex relationships and generate realistic structured data. Diffusion-based graph generative models~\citep{niu2020permutation,jo2022score}, particularly those tailored for discrete data 
\citep{vignac2022digress}, have emerged as compelling approaches, demonstrating pioneering performance in applications such as molecular generation \citep{irwin2024efficient}, reaction pathway design \citep{igashov2023retrobridge}, neural architecture search \citep{asthana2024multi}, and combinatorial optimization \citep{sun2024difusco}.
Recently, continuous-time discrete diffusion frameworks have further advanced the domain of discrete graph diffusion~\cite{xu2024discrete,siraudin2024cometh}.
These frameworks leverage the robustness and flexibility of continuous-time modeling, 
while preserving the natural alignment with the discrete structure of graphs.

Despite their state-of-the-art performance, the training and sampling stages of diffusion-based models is tightly entangled, restricting sampling options to training-phase choices.
Thus, optimizing components such as noise schedules or rate matrices requires re-training for each configuration, resulting in prohibitive computational costs. Consequently, these models often adopt a single configuration across graph datasets. This one-size-fits-all approach fails to accommodate the diverse structural characteristics of different datasets, leaving room for further improvement.

\begin{figure*}[t]
    \centering
    \begin{subfigure}[b]{0.99\linewidth}
        \centering
        \begin{subfigure}[b]{0.32\linewidth}
                \includegraphics[width=\linewidth]{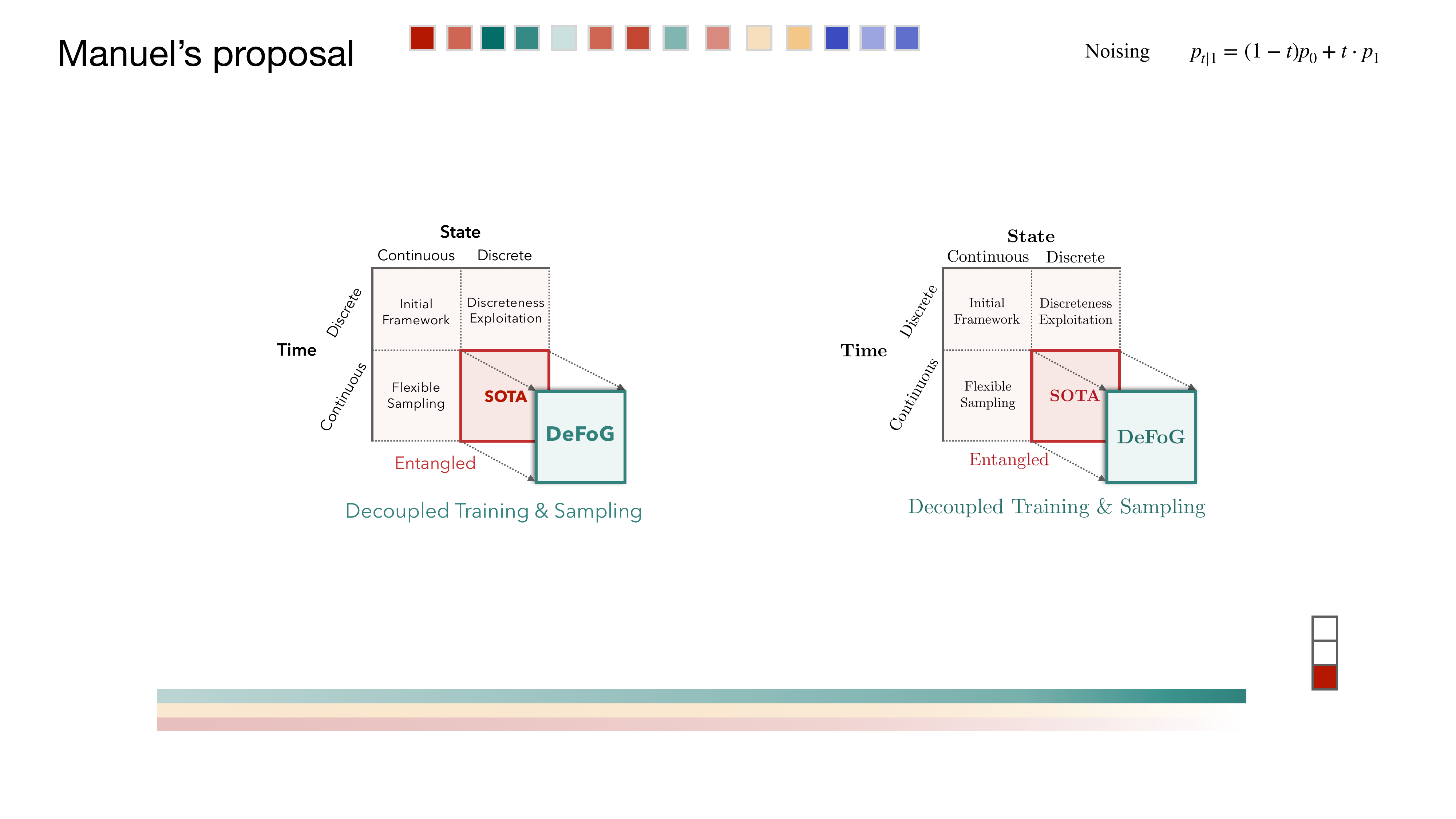}
                \caption{Motivation of DeFoG.}
                \label{fig:motivation}
        \end{subfigure}
        \hfill
        \begin{subfigure}[b]{0.65\linewidth}
                \includegraphics[width=\linewidth]{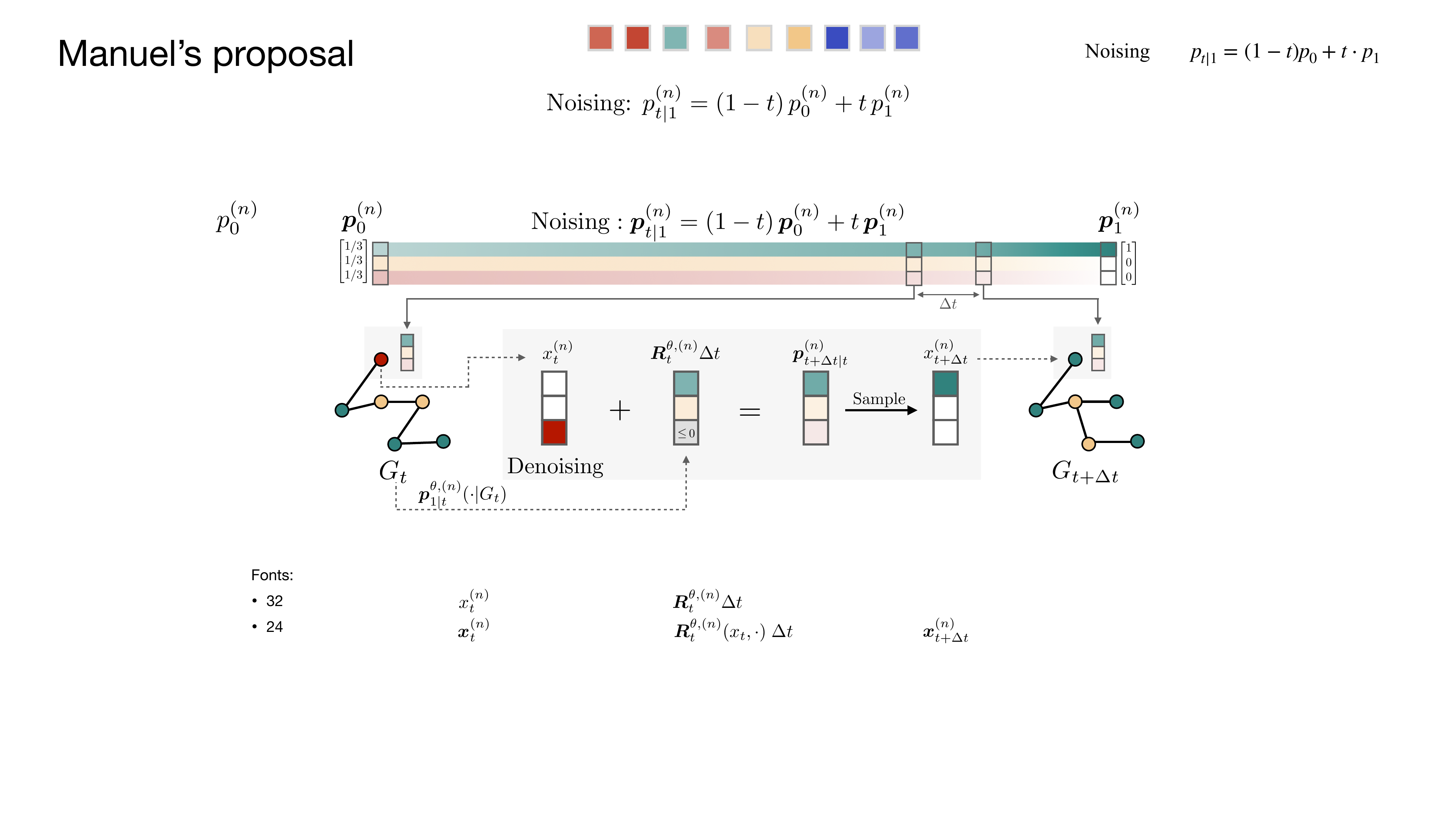}
                \caption{Overview of DeFoG.}
                \label{fig:example}
        \end{subfigure}
        \label{fig:overview}
    \end{subfigure}
    \vspace{-6pt}
    \caption{
    (a) DeFoG enhances graph generation by introducing training-sampling decoupling, an orthogonal improvement within graph iterative refinement models, while preserving the sampling flexibility and inherent discreteness exploitation of prior SOTA models.\
    (b) One node, $x^{(n)}$,\
    is selected to illustrate both \emph{noising} and \emph{denoising} processes.
    For \emph{noising}, DeFoG follows a straight path from the one-hot encoding $p_1$ of the clean node to the initial distribution $p_0$. For \emph{denoising}, a network parameterized by $\theta$ predicts the marginal distributions of the clean graph, there the node's distribution  \scalebox{0.9}{${\bm{p}^{\theta, (n)}_{1|t} (\cdot|G_t)}$} is used to compute its rate matrix 
    \scalebox{0.9}{$\bm{R}_t^{\theta,(n)}$}
    and, subsequently, its probability at the next time point $t+\Delta t$.
    }  %
    \vspace{-8pt}
    \label{fig:main}
\end{figure*}

In this work, we present DeFoG, a novel graph generative framework that disentangles the training and sampling stages (\Cref{fig:motivation}), addressing the inefficiencies in graph diffusion models and achieving state-of-the-art (SOTA) performance.
DeFoG leverages a discrete flow matching (DFM) inspired formulation~\citep{campbell2024generative} that we tailor to graph settings. It features a linear interpolation noising process and a continuous-time Markov chain (CTMC)-based denoising process, while ensuring node permutation equivariance 
and addressing the model expressivity limitations inherent to this data modality~\citep{morris2019weisfeiler}. 
We demonstrate that training-sampling decoupling not only enhances flexibility but is also provably sound.
By theoretically establishing that training loss optimization leads to improved sampling dynamics, DeFoG enables faithful replication of the ground truth graph distribution.
To navigate the 
expanded design space enabled by such disentanglement, we take a critical step by ``defogging" this space.
Specifically, we explore and propose various sampling methods, including time-adaptive methods and modifications to CTMC rate matrices, to better govern denoising trajectories and align with the unique characteristics of graph datasets.

Our experiments show that DeFoG achieves SOTA performance across diverse datasets, with near-saturated validity of 99.5\%, 96.5\%, and 90\% on planar, tree, and stochastic block model (SBM) datasets, respectively. On complex molecular data, it achieves 92.8\% validity on MOSES \citep{polykovskiy2020molecular}, surpassing the previous SOTA of 90.5\%. Moreover, DeFoG achieves 95.0\% and 86.5\% validity on planar and SBM datasets, respectively, with only 5–10\% of the sampling steps used by diffusion models. This performance surpasses all but one diffusion model on the planar dataset and ranks best on SBM, highlighting substantial efficiency gains.
To further highlight the versatility of DeFoG, we also test it in conditional generation tasks for digital pathology, where it largely outperforms existing unconstrained models.
Ablation studies further confirm the need of each proposed sampling method and highlight the importance of dataset-specific
sampling procedures to effectively address diverse data characteristics.
\looseness=-1

Our main contributions are as follows:
\begin{itemize}[itemsep=0.1ex, topsep=0.1ex]
    \item We introduce DeFoG, a novel graph generative model that effectively exploits the training-sampling decoupling enabled by its flow-based formulation, significantly enhancing sampling flexibility and efficiency.
    \item We provide a theoretical foundation for both our training and sampling algorithms, validating the soundness of the disentanglement framework;
    \item We comprehensively explore DeFoG's design space with novel training and sampling approaches, highlight critical configurations for graph data and attain state-of-the-art performance across diverse datasets.
\end{itemize}   
Overall, DeFoG enables more effective graph generation with reduced computational costs under theoretical guarantees, paving the way for broader adoption of graph generative models in real-world applications.

\looseness=-1

\vspace{-3pt}
\section{Background}
\label{sec:dfm}
\vspace{-3pt}

In generative modeling, the primary goal is to generate new data samples from the underlying distribution that produced the original data, $\bm{p}_\mathrm{data}$. 
An effective approach is to learn a mapping between a simpler distribution $\bm{p}_\epsilon$ that can be easily sampled, and $\bm{p}_\mathrm{data}$. 

Iterative refinement models achieve this mapping through a stochastic process over the time interval $t \in [0, 1]$ for variables in discrete state spaces. For the sake of simplicity, we describe here an univariate formulation. At any time $t$, we consider a discrete variable with $Z$ possible values, denoted by $z_t \in \mathcal{Z} = \{1, \ldots, Z\}$. The \emph{marginal distribution} of $z_t$ is represented by the vector $\bm{p}_t \in \Delta^{Z}$, with $\Delta^{Z} = \left\{ \mathbf{u} \in \mathbb{R}^{Z} \mid \sum_{i=1}^{Z} u_i = 1, \; u_i \geq 0,\, \forall i \right\}$. 
The initial distribution is set to a predefined noise distribution, $\bm{p}_0 = \bm{p}_\epsilon$, while $\bm{p}_1 = \bm{p}_\text{data}$ represents the target data distribution. 
We refer to the mapping \( t: 1 \to 0 \) as the \emph{noising} process and \( t: 0 \to 1 \) as the \emph{denoising} process.

DFM \citep{campbell2024generative} builds upon a streamlined noising process.
In particular, the noising trajectory $p_{t|1} (z_t|z_1) \in [0,1]$ is defined through a simple linear interpolation starting from a chosen datapoint $z_1$:
\begin{equation}\label{eq:noising_process}
    p_{t|1} (z_t|z_1) = t \,\delta(z_t, z_1) + (1-t) \,p_0(z_t),
\end{equation}
where $\delta(z_t, z_1)$ is the Kronecker delta (1 when $z_t= z_1$). 
A usual choice for the initial distribution is the uniform distribution over the state space, $\bm{p}_0 = [1/Z, \ldots, 1/Z]$.
\looseness=-1

\looseness=-1

In the \emph{denoising} stage, DFM leverages a CTMC formulation. In general, a CTMC is characterized by an initial distribution, $\bm{p}_0$, and a \emph{rate matrix}, $\bm{R}_t \in \mathbb{R}^{Z \times Z}$ that governs its evolution across time $t\in [0,1]$. 
Specifically, the rate matrix defines the instantaneous transition rates between states, such that:
\begin{equation}\label{eq:first_order_approx}
    p_{t+\mathrm{d}t | t} (z_{t+\mathrm{d}t} | z_t) = \delta(z_t, z_{t+\mathrm{d}t}) + R_t(z_t, z_{t+\mathrm{d}t}) \textrm{d}t,
\end{equation}
where $R_t(z_t, z_{t+\mathrm{d}t})$ denotes an entry in the rate matrix.
Intuitively, $R_t(z_t, z_{t+\mathrm{d}t}) \mathrm{d}t$ 
yields the probability that a transition from state $z_t$ to state $z_{t+\mathrm{d}t}$ will occur in the next infinitesimal time step $\mathrm{d}t$.
By definition, we have $R_t(z_t, z_{t+\mathrm{d}t}) \geq 0$ for $z_t \neq z_{t+\mathrm{d}t}$. Consequently, we further have $R_t(z_t, z_{t}) = - \sum_{z_{t+\mathrm{d}t} \neq z_t} R_t(z_t, z_{t+\mathrm{d}t}) $ to ensure normalization $ \sum_{z_{t+\mathrm{d}t}} p_{t+\mathrm{d}t | t} (z_{t+\mathrm{d}t} | z_t) = 1$. 
Under this definition, the marginal distribution and the rate matrix of a CTCM are related by a conservation law, the Kolmogorov equation, given by $\partial_t \bm{p}_t = \bm{R}^\top_t \bm{p}_t$. If expanded, this expression unveils the time derivative of the marginal distribution
as the net balance between the inflow and outflow of probability mass at that state.

Similarly to the noising process of \Cref{eq:noising_process}, the denoising is also performed under conditioning on $z_1$.
Specifically, we consider a $z_1$-conditional rate matrix, $\bm{R}_t(\cdot \, , \cdot|z_1) \in  \mathbb{R}^{Z \times Z}$, that will govern the denoising in DFM. Under mild assumptions, \citet{campbell2024generative} present a closed-form for a valid conditional rate matrix, i.e., a matrix that verifies the corresponding Kolmogorov equation, for $z_t \neq z_{t+\mathrm{d}t}$, defined as:
\begin{equation}\label{eq:r_star}
    R^*_t{(z_t, z_{t+\mathrm{d}t}|z_1)} = \frac{\operatorname{ReLU}[\partial_t p_{t|1} (z_{t+\text{d}t}|z_1) - \partial_t p_{t|1} (z_t|z_1)]}{Z^{>0}_t \; p_{t|1} (z_t|z_1)} 
\end{equation}
and $Z^{>0}_t = |\{z_t: p_{t|1} (z_t|z_1) >0\}|$. Again, normalization is performed for the case $ z_{t+\mathrm{d}t} = z_t$.
Intuitively, $R^*_t$ 
 applies a positive rate to states needing more mass than the current state $z_t$ (details in \Cref{app:understanding_sampling_dynamics_rstar}). 
Finally, it can be shown that $R_t(z_t, z_{t+\mathrm{d}t}) = \mathbb{E}_{p_{1|t} (z_1|z_t)} [R_{t} (z_{t+\mathrm{d}t}, z_t| z_1)]$, which is employed in \Cref{eq:first_order_approx} for denoising.

While the DFM paradigm enables training-sampling disentanglement, it lacks a complete formulation and empirical validation on graph data. Moreover, how to effectively leverage this disentanglement to enhance sampling performance remains underexplored, particularly for graph-specific tasks.
We introduce DeFoG to address these gaps.

\vspace{-3pt}
\section{DeFoG Framework}\label{sec:method_graphs}
\vspace{-3pt}

In this section, we present DeFoG (\underline{D}iscrete \underline{F}low Matching on \underline{G}raphs), a novel iterative refinement framework for graph generation that leverages the decoupling of training and sampling stages. We begin by describing its noising and denoising processes, highlighting how they enable this disentanglement, as illustrated in \cref{fig:example}. We theoretically demonstrate that this flexible framework is also robust by proving that optimizing the training loss improves sampling dynamics, ensuring that DeFoG can faithfully replicate graph distributions. Then, we discuss the expanded design space enabled by DeFoG's disentanglement, which drives key improvements in graph generation performance.
Finally, we establish the node permutation equivariance/invariance guarantees of DeFoG.

\looseness=-1
\vspace{-3pt}
\subsection{Learning Discrete Flows over Graphs}\label{sec:dfm_to_graphs}
\vspace{-3pt}

We instantiate undirected graphs with $N$ nodes as $G = (x^{1:n:N}, e^{1:i<j:N})$, where $x^{1:n:N} = (x^{(n)})_{1 \leq n \leq N}$ and $e^{1:i<j:N} = (e^{(ij)})_{1 \leq i < j \leq N}$ denote the node and edge sets, respectively, with  $x^{(n)} \in \mathcal{X} = \{1, \dots, X\}$ and $e^{(ij)} \in \mathcal{E} = \{1, \dots, E\}$. Following standard practice in the field \cite{vignac2022digress,xu2024discrete,siraudin2024cometh}, we consider an edge between every pair of nodes, where one of the edge categories explicitly represents the absence of an edge (i.e., a “non-existing” edge).)

\vspace{-5pt}
\paragraph{Noising} We now define the noising process of DeFoG.
According to \Cref{eq:noising_process}, with shared initial distributions across nodes and edges, denoted as $p^{\mathcal{X}}_0$ and $p^{\mathcal{E}}_0$, respectively, we formulate the noising trajectory by independently adding noise to each node and each edge:
\looseness=-1
\begin{equation*}
    p_{t|1}(G_t| G_1) = \prod_{n} p_{t|1}\left(x^{(n)}_t|x^{(n)}_1\right)\prod_{i <j} p_{t|1}\left(e^{(ij)}_t|e^{(ij)}_1\right).
\end{equation*}
\vspace{-15pt}

Different $p^{\mathcal{X}}_0$ and $p^{\mathcal{E}}_0$ are further discussed in \cref{app:initial_distributions}.
\looseness=-1

\vspace{-5pt}
\paragraph{Sampling}
As formulated in \Cref{sec:dfm}, the denoising process requires simulating a CTMC, driven by its rate matrix $\bm{R}_t$. 
We start by sampling a purely noisy graph $G_0$ from the predefined initial distribution 
$p_0(G_0) = \prod_n p^{\mathcal{X}}_0(x_0^{(n)}) \prod_{i<j} p^{\mathcal{E}}_0(e^{ij}_0)$.
Then, we progress in the denoising process by employing independent Euler steps for each node and edge, with a finite time step $\Delta t$, i.e., we iteratively sample progressively denoised graphs from $\Tilde{p}_{t + \Delta t | t} (G_{t + \Delta t}| G_t )$, given by:
\looseness=-1
\begin{equation}
    \prod_{n}\Tilde{p}^{(n)}_{t + \Delta t | t} (x^{(n)}_{t + \Delta t}| G_t )  \prod_{i<j}\Tilde{p}^{(ij)}_{t + \Delta t | t} (e^{(ij)}_{t + \Delta t}| G_t ). \stepcounter{equation} \tag{\theequation} 
    \label{eq:defog_sampling}  
\end{equation} 
Each term $\Tilde{p}^{(n)}_{t + \Delta t | t} (x^{(n)}_{t + \Delta t}| G_t )$  corresponds to the Euler step given in \Cref{eq:first_order_approx}, where the transition dynamics are governed by the rate matrix computed as:
\begin{equation}\label{eq:rate_matrix_denoising}
    \resizebox{0.48\textwidth}{!}{$
        R^{(n)}_t \left(x^{(n)}_t, x^{(n)}_{t+\mathrm{d}t} \right) = \mathbb{E}_{p^{(n)}_{{1|t}} (x^{(n)}_1 | G_t)} \left[ R^{(n)}_t \left(x^{(n)}_t, x^{(n)}_{t + \Delta t} | x^{(n)}_1 \right) \right]
    $},
\end{equation}
and similarly for $\Tilde{p}^{(ij)}_{t + \Delta t | t} (e^{(ij)}_{t + \Delta t}| G_t )$.

\begin{figure*}[h]
    \centering
    \begin{minipage}[t]{0.49\textwidth}
        \centering
        \begin{algorithm}[H]
        \caption{DeFoG Training}
            \begin{algorithmic}[1]
                \State \textbf{Input:} Graph dataset $\mathcal{D} = \{ G^1, \ldots, G^M\}$
                \While{$f_\theta$ not converged}
                    \State Sample $G \sim \mathcal{D}$
                    \State Sample $t \sim \mathcal{T}$
                    \State Sample $G_t \sim p_{t|1}(G_t | G)$ \Comment{\ca{Noising}}
                    \State $h \gets \operatorname{RRWP}(G_t)$ \Comment{\ca{Extra features}}
                    \State $\bm{p}^{\theta}_{1|t}(\cdot|G_t) \gets f_\theta(G_t, h, t)$  \Comment{\ca{Denoising prediction}}
                    \State $\operatorname{loss} \gets 
                    \operatorname{CE}_\lambda (G, \,\bm{p}^{\theta}_{1|t}(\cdot|G_t) )$ 
                    \State $\operatorname{optimizer.step}(\operatorname{loss})$
                \EndWhile
            \end{algorithmic}
        \label{alg:training}
        \end{algorithm}
    \end{minipage}
    \hfill
    \begin{minipage}[t]{0.49\textwidth}
        \centering
        \begin{algorithm}[H]
        \caption{DeFoG Sampling}
        \begin{spacing}{1.04}
            \begin{algorithmic}[1]
                \State \textbf{Input:} $\#$ graphs to sample $S$
                \For{$i = 1$ \textbf{to} $S$}
                    \State Sample $N$ from train set  \Comment{\ca{\# Nodes}}
                    \State Sample $G_0 \sim p_0 (G_0)$
                    \For{$t = 0$ \textbf{to} $1- \Delta t$ \textbf{with step} $\Delta t$}
                        \State $h \gets \operatorname{RRWP}(G_t)$ \Comment{\ca{Extra features}}
                        \State $\bm{p}^{\theta}_{1|t}(\cdot|G_t)\gets f_\theta(G_t, h, t)$ \Comment{\ca{Denoising prediction}}
                        \State $G_{t+\Delta t} \sim \Tilde{p}_{t + \Delta t | t} (G_{t + \Delta t}| G_t )$\Comment{\ca{\Cref{eq:defog_sampling}}}
                    \EndFor
                    \State Store $G_1$
                \EndFor
            \end{algorithmic}
        \end{spacing}
        \label{alg:sampling}
        \end{algorithm}
    \end{minipage}
    \vspace{-12pt}
\end{figure*}

\vspace{-5pt}
\paragraph{Training}
The rate matrix used in the denoising steps above requires the knowledge of the marginal distributions $\bm{p}^{(n)}_{{1|t}} (\cdot | G_t) \in \Delta^{X}$ and $\bm{p}^{(ij)}_{{1|t}} (\cdot | G_t) \in \Delta^{E}$ for all nodes and all edges, respectively. Both are gathered in $\bm{p}_{{1|t}} (\cdot | G_t) = \left( \bm{p}^{1:n:N}_{{1|t}} (\cdot | G_t),\, \bm{p}^{1:i<j:N}_{{1|t}} (\cdot | G_t) \right)$. 
Each of these components consists of the clear marginal distribution prediction given a noisy graph $G_t$.
However, the computation of these terms is generally intractable.
Instead, we train a neural network, parameterized by $\theta$, to approximate them, $\bm{p}^\theta_{{1|t}} (\cdot | G_t)$. 
To cover different noise levels, we sample $t \sim \mathcal{T}$, where $\mathcal{T}$ is an arbitrary distribution over $[0,1]$. In DFM, $\mathcal{T}$ is typically set to the uniform distribution, though alternative choices can enhance performance, as later explored in \cref{sec:train_distortion}.
The training loss is naturally formulated as:
\looseness=-1
$$\mathcal{L}_\text{DeFoG}=\mathbb{E}_{t \sim \mathcal{T}, p_1 (G_1), p_{t|1}(G_t| G_1)} \operatorname{CE}_\lambda (G_1, \bm{p}^{\theta}_{1|t} (\cdot|G_t)),$$
where $\operatorname{CE}_\lambda  (G_1, \bm{p}^{\theta}_{1|t} (\cdot|G_t))$ is defined as:
\looseness=-1
\begin{equation*}
    \scalebox{0.93}{$
         - \sum_n \operatorname{log} \left(p^{\theta,(n)}_{1|t} (x^{(n)}_1|G_t) \right) - \lambda \sum_{i < j} \operatorname{log} \left(p^{\theta,(ij)}_{1|t} (e^{(ij)}_1|G_t) \right)
    $}.
\end{equation*}
Here, $\lambda \in \mathbb{R}^+$ is introduced to weight nodes and edge differently to more flexibly capture varying topologies.

\vspace{-5pt}
\paragraph{Decoupled Training and Sampling} 
DeFoG exhibits a clear disentanglement of training and sampling. The training phase focuses on predicting the marginal probabilities of the clean graph \( \bm{p}^\theta_{1|t} ( \cdot | G_t) \), while sampling relies on the rate matrix formulation. 
Importantly, the training process is agnostic to the choice of the $z_1$-conditional rate matrix. 
Thus, different $z_1$-conditional rate matrices can be employed at sampling time, such as $\bm{R}^\star (\cdot, \cdot |z_1)$ in \cref{eq:r_star}.
This decoupling of sampling from training provides additional flexibility in DeFoG's design, which can be leveraged to further enhance performance.
Notably, we further demonstrate that, upon this decoupling, optimizing DeFoG's training loss improves its sampling dynamics, ensuring the soundness of our framework.

\begin{corollary}[Bounded estimation error of rate matrix for graphs]\label{corollary:error_R}
    Given $t \in [0,1]$ and graphs $G_t, G_{t+\mathrm{d} t}$, and $G_1 \sim p_1 (G_1)$, there exist constants $\bar{C}_0, \bar{C}_1,\bar{C}_3 > 0$ such that the rate matrix estimation error can be upper bounded by:
\looseness=-1
\begin{align*}
    & | R_t (G_t, G_{t+\mathrm{d} t}) -  R^\theta_t (G_t, G_{t+\mathrm{d} t})|^2  \leq  \bar{C}_0 + \\
    & + \bar{C}_1 \, \mathbb{E}_{p_1(G_1)} \left[  p_{{t|1}}(G_t | G_1) \sum_n -\log p^{\theta,{(n)}}_{1|t} (x^{(n)}_1 | G_t) \right] \\
    & +  \bar{C}_2 \, \mathbb{E}_{p_1 (G_1 )} \left[ p_{{t|1}}(G_t | G_1) \sum_{i < j } -\log p^{\theta,{(ij)}}_{1|t} (e^{(ij)}_1 | G_t) \right].
\end{align*}
\looseness=-1
\end{corollary}

By taking the expectation over $t \sim \mathcal{T}$ and summing over $G_t$, minimizing the derived upper bound of rate matrix estimation error in \Cref{corollary:error_R} with respect to $\theta$ corresponds directly to minimizing the loss function of DeFoG with $\lambda=1$. Therefore, we guarantee that our training loss minimization is aligned with accurate rate matrix estimation.

Upon this result, we are now in conditions of justifying DeFoG's approximated denoising algorithm. 

\begin{corollary}[Bounded deviation of the generated graph distribution]\label{corollary:error_dist}
Let $p_1$ be the marginal distribution at $t=1$ of a groundtruth CTMC, $\{G_t\}_{0\leq t \leq1}$, and $\Tilde{p}_1$ be the marginal distribution at $t=1$ of its independent-dimensional Euler sampling approximation, with a maximum step size $\Delta t$. Then, under \Cref{assumption:denoising_quadratic}, the following total variation bound holds:
\looseness=-1
\begin{align*}
    \| p(G_1)& - p_\text{data} \|_\text{TV} \leq \bar{U} \left( X N + E \frac{N(N-1)}{2} \right)  \; \\
    & + \bar{B} \left( X N +   E \frac{N(N-1)}{2} \right)^2 {\Delta t} +\; O(\Delta t),
\end{align*}
\looseness=-1
where $\bar{U}$ and $\bar{B}$ are constant upper bounds for the bound from \Cref{corollary:error_R} and for the denoising process relative to its noising counterpart, respectively, for any $t \in [0,1]$.
\end{corollary}

The first term of the bound captures the estimation error introduced by using the neural network approximation
$R^\theta_t (G_t, G_{t+\mathrm{d} t})$. From \Cref{corollary:error_R}, this term is bounded.
The remaining terms arise from the discretization of the CTMC and can be controlled by reducing the step size $\Delta t$. Consequently, the deviation introduced by this approximation can be made arbitrarily 
small, ensuring that the generated distribution remains close to the groundtruth and validating our graph sampling scheme.

The resulting training and sampling processes are detailed in \Cref{alg:training,alg:sampling}. The proofs of \Cref{corollary:error_R,corollary:error_dist} are provided in \Cref{app:domain_agnostic_theory}.

\looseness=-1

\vspace{-3pt}
\subsection{Design Space of DeFoG}
\vspace{-3pt}

As described in the previous section, DeFoG benefits from greater flexibility due to its training-sampling decoupling.
For example, it allows the number and size of sampling steps to be adjusted dynamically, unlike the fixed steps in discrete-time diffusion~\citep{vignac2022digress}, and enables adjustment of the rate matrix without retraining, addressing limitations of continuous-time diffusion
graph models~\citep{siraudin2024cometh,xu2024discrete}.
This decoupling supports extensive, training-free performance optimization during the sampling stage, which is crucial for improving performance and reducing the number of sampling steps.
Below, we propose the key components of DeFoG that are enabled by this disentanglement.

\vspace{-3pt}

\looseness=-1

\looseness=-1

\looseness=-1

\label{sec:sample_optimization}

\vspace{-7pt}
\paragraph{Sample Distortion}
In DFM's sampling process, the discretization is performed using equally sized time steps (\Cref{alg:sampling}, line 5).
However, this uniformity may fail to preserve key properties during critical intervals where finer control is needed.
For instance, smaller steps are essential near the final stages of sampling to prevent sudden edge alterations that could compromise global properties such as planarity.
To overcome this limitation, we propose using variable step sizes, allocating smaller, more frequent steps during these critical intervals to better capture essential graph characteristics.
Specifically, we apply to each timestep $t$
a bijective, increasing \emph{distortion function} $f$ defined for $t \in [0,1]$, yielding $t' = f(t)$. 
For example, the choice $f(t) = 2t - t^2$ (referred to as \textit{polydec}) creates monotonically decreasing step sizes, emphasizing the final stages of sampling, where error correction can be most critical.
The specific distortion functions employed
are described in \Cref{app:distortions}.
Importantly, we can efficiently (i.e., without re-training) adjust the sample distortion adopted for each dataset to better accommodate its graphs characteristics, leading to significant performance improvements.
\looseness=-1

\vspace{-7pt}
\paragraph{Train Distortion}\label{sec:train_distortion}
Once the optimal sampling distortion is identified, it can guide training by highlighting the critical time ranges for graph generation in a specific dataset. This enables adjustments to the training distribution $\mathcal{T}$, skewing it toward these ranges to focus the model's capacity on the most relevant regions.
The skewed distributions are obtained by passing uniformly sampled times $t$ through the same distortion functions. While similar strategies in other modalities, such as image generation \citep{esser2024scaling}, often emphasize intermediate time ranges, we find that optimal time ranges in graph generation vary across datasets.
Aligning the distortion function in training with sampling typically enhances the algorithm by focusing on critical time ranges.
For instance, for larger datasets, such as drug-sized molecular graphs, the \textit{polydec} distortion function accelerates convergence significantly and provides noticeable performance improvements.

\looseness=-1

\vspace{-7pt}
\paragraph{Target Guidance}
\label{sec:sample_target_guidance}
The application of time distortions is not the sole avenue for optimizing the sampling process; the design of the conditional rate matrices also offers significant potential for improvement. One promising direction arises from the goal of better guiding the generation process toward the clean data distribution~\citep{song2020denoising}.
This also aligns with the fundamental design of diffusion and flow matching models, which are structured to predict clean data directly and subsequently use that prediction to generate the denoising trajectory~\citep{ho2020denoising,lipman2022flow,vignac2022digress}. Inspired by these principles, we propose an alternative sampling mechanism that seeks to further amplify the influence of the denoising neural network's predictions in the designed rate matrices, by setting $R_t{(z_t, z_{t+\mathrm{d}t}|z_1)} = R^*_t{(z_t, z_{t+\mathrm{d}t}|z_1)} + R^\omega_t(z_t, z_{t + \text{d}t} | z_1)$ for $ z_t \neq z_{t + \mathrm{d}t}$, such that:
\vspace{-3pt}
\begin{equation}\label{eq:target_guidance}
    R^\omega_t(z_t, z_{t + \text{d}t} | z_1) = 
    \omega \frac{\delta(z_{t+\mathrm{d}t}, z_1)}{\mathcal{Z}^{>0}_t \; p_{t|1} (z_t|z_1)}.
    \vspace{-3pt}
\end{equation}
This adjustment increases the weight of transitions in the rate matrix when $z_{t+\text{d}t} = z_1$, where $z_1$ is the predicted clean data.
While moderate increases in $\omega$ significantly enhances performance by steering the generation toward high confidence domains, excessively high $\omega$ leads to performance drop. This behavior is explained in \Cref{lemma:target_guidance}, in \Cref{app:target_guidance}, where we show that target guidance introduces an $O(\omega)$ violation of the Kolmogorov equation. Consequently, finding an optimal value for $\omega$ is essential.

\vspace{-6pt}
\paragraph{Stochasticity}
\label{para:stochasticity}
Orthogonal to target guidance, there also exists unexplored potential in the design space of conditional rate matrices that preserve the Kolmogorov equation as the standard formulation of $R^*_t{(z_t, z_{t+\mathrm{d}t}|z_1)}$ does not fully capture this space.
As demonstrated by \citet{campbell2024generative}, for any rate matrix $R^\text{DB}_t$ satisfying the detailed balance condition $p_{t|1}(z_t|z_1) R^\text{DB}_t (z_{t}, z_{t+\text{d}t}|z_1) = p_{t|1}(z_{t+\text{d}t}|z_1) R^\text{DB}_t (z_{t+\text{d}t}, z_t|z_1)$, the modified rate matrix $R^\eta_t = R^*_t + \eta R^\text{DB}_t$, with $\eta \in \mathbb{R}^+$, is also valid. Intuitively, increasing $\eta$ facilitates more transitions to other states while reducing the likelihood of remaining in the same state, thereby increasing stochasticity in the trajectory of the denoising process. 
This approach can be interpreted as a correction mechanism, as it draws transitions back to states that would otherwise be forbidden according to the rate matrix formulation, as described in \Cref{app:understanding_sampling_dynamics_rstar}. 
Additionally, different designs of $R^\text{DB}_t$ encode different priors for preferred transitions between states, which we investigate in detail in \Cref{app:detailed_balance}.

\looseness=-1

\vspace{-5pt}
\subsection{Permutation Invariance Guarantees}
\vspace{-3pt}
Graph generative models should respect the inherent permutation symmetries of graphs. 
Accordingly, DeFoG’s training and sampling algorithms should be independent of node ordering. 
This requires that both DeFoG's loss function during training and its probability of generating a specific graph during sampling be permutation invariant.
We formally demonstrate these results in \Cref{lemma:defog_permutation}, whose proof is in \Cref{app:model_equi_proof}.

\begin{lemma}[Node Permutation Equivariance and Invariance Properties of DeFoG]\label{lemma:defog_permutation}
For any permutation equivariant denoising neural network, the loss function of DeFoG is permutation invariant, and its sampling probability is permutation invariant.
\end{lemma}
\vspace{-5pt}
We further describe the permutation equivariant denoising neural network of DeFoG in \Cref{sec:experiments}.

\looseness=-1

\vspace{-8pt}
\section{Related Work}
\label{sec:related_work}
\vspace{-3pt}

\paragraph{Graph Generative Models} 
Graph generation has applications across various domains, including molecular generation~\citep{mercado2021graph}, combinatorial optimization~\citep{sun2024difusco}, and inverse protein folding~\citep{yi2024graph}.  
Existing methods for this task generally fall into two main categories. First, \emph{autoregressive} models progressively grow the graph by inserting nodes and edges~\citep{you2018graphrnn,liao2019efficient}. 
Although these methods offer high flexibility in sampling and facilitate the integration of domain-specific knowledge (e.g., for molecule generation, \citet{liu2018constrained} perform valency checks at each iteration), they suffer from a fundamental drawback: the need to learn a node ordering~\citep{kong2023autoregressive,han2023fitting}, or use a predefined node ordering~\citep{you2018graphrnn} to avoid the overly large learning space.
In contrast, \emph{one-shot} models circumvent such limitation by predicting the entire graph in a single step, enabling the straightforward incorporation of node permutation equivariance/invariance properties. Examples of these approaches include graph-adapted versions of VAEs~\citep{kipf2016variational}, GANs~\citep{de2018molgan}, or normalizing flows~\citep{liu2019graph}. 
Among one-shot methods, diffusion models have gained prominence for their state-of-the-art performance, attributed to their iterative mapping between noise and data distributions.
\looseness=-1 

\vspace{-8pt}
\paragraph{Graph Diffusion}
One of the initial research directions in graph diffusion sought to adapt continuous diffusion frameworks~\citep{sohl2015deep,ho2020denoising,song2020score} for graph-structured data~\citep{niu2020permutation,jo2022score,jo2024graphgenerationdiffusionmixture}. 
Those however faced challenges in preserving the inherent discreteness of graphs. 
In response, discrete diffusion models~\citep{austin2021structured} were effectively extended to the graph domain~\citep{vignac2022digress,haefeli2022diffusion}, utilizing Discrete-Time Markov Chains to model the stochastic diffusion process. 
However, this method restricts sampling to the discrete time points used during training. 
To address this limitation, continuous-time discrete diffusion models incorporating CTMCs have emerged~\citep{campbell2022continuous}, and have been recently applied to graph generation~\citep{siraudin2024cometh,xu2024discrete}.
Despite employing a continuous-time framework, their optimization space is constrained by training-dependent choices like fixed-rate matrices, limiting further performance gains.
\looseness=-1

\vspace{-8pt}
\paragraph{Discrete Flow Matching}
Flow matching (FM) models emerged as a compelling alternative to diffusion models among iterative refinement generative approaches for continuous state spaces~\citep{lipman2022flow, liu2022flow}.
FM frameworks have been empirically shown to enhance performance and efficiency in image generation~\citep{esser2024scaling,ma2024sit}.
To address discrete state spaces, a DFM formulation has been introduced~\citep{campbell2024generative,gat2024discrete}.
This approach streamlines its diffusion counterpart by employing a linear interpolation noising process and a more flexible CTMC-based denoising process, whose rate matrices, unlike diffusion models, need not be fixed during training.
While other flow-based formulations for graphs have been proposed \citep{eijkelboom2024variational}, DeFoG stands out as the first DFM-based model for graphs, leveraging a training-sampling decoupled formulation for improved performance.

\begin{table*}[ht]
    \caption{Graph generation performance on the synthetic datasets: Planar, Tree and SBM. We present the results from five sampling runs, each generating 40 graphs, reported as the mean ± standard deviation. Full version in \Cref{tab:synthetic_graphs_full}.}
    \label{tab:synthetic_graphs_vun_ratio}
    \vspace{-4pt}
    \centering
    \resizebox{1.0\linewidth}{!}{
    \begin{tabular}{lccccccc}
        \toprule
        & \quad\quad\quad\quad\quad\quad\quad\quad &  \multicolumn{2}{c}{\quad\quad\quad\quad\quad Planar \quad\quad\quad\quad\quad} & \multicolumn{2}{c}{\quad\quad\quad\quad\quad Tree \quad\quad\quad\quad\quad}& \multicolumn{2}{c}{\quad\quad\quad\quad\quad SBM \quad\quad\quad\quad\quad} \\
        \cmidrule(lr){3-4} \cmidrule(lr){5-6} \cmidrule(lr){7-8}
        {Model} & {Class} & {V.U.N.\,$\uparrow$} & {Ratio\,$\downarrow$} & {V.U.N.\,$\uparrow$} & {Ratio\,$\downarrow$} & {V.U.N.\,$\uparrow$} & {Ratio\,$\downarrow$} \\
        \midrule
        {Train set} & {---} & {100} & {1.0} & {100} & {1.0} & {85.9} & {1.0} \\
        \midrule
        {GraphRNN \citep{you2018graphrnn}} & {Autoregressive} & {0.0} & {490.2} & {0.0} & {607.0} & {5.0} & {14.7} \\
        {GRAN \citep{liao2019efficient}} & {Autoregressive} & {0.0} & {2.0} & {0.0} & {607.0} & {25.0} & {9.7} \\
        {SPECTRE \citep{martinkus2022spectre}} & {GAN} & {25.0} & {3.0} & {---} & {---} & {52.5} & {2.2} \\
        {DiGress \citep{vignac2022digress}} & {Diffusion} & {77.5} & {5.1} & {90.0} & {\textbf{1.6}} & {60.0} & {\underline{1.7}} \\
        {EDGE \citep{chen2023efficient}} & {Diffusion} & {0.0} & {431.4} & {0.0} & {850.7} & {0.0} & {51.4} \\
        {BwR (EDP-GNN) \citep{diamant2023improving}} & {Diffusion} & {0.0} & {251.9} & {0.0} & {11.4} & {7.5} & {38.6} \\
        {BiGG \citep{dai2020scalable}} & {Autoregressive} & {5.0} & {16.0} & {75.0} & {5.2} & {10.0} & {11.9} \\
        {GraphGen \citep{goyal2020graphgen}} & {Autoregressive} & {7.5} & {210.3} & {95.0} & {33.2} & {5.0} & {48.8} \\
        {HSpectre \citep{bergmeister2023efficient}} & {Diffusion} & {\underline{95.0}} & {2.1} & {\textbf{100.0}} & {4.0} & {75.0} & {10.5} \\
        {GruM \citep{jo2024graphgenerationdiffusionmixture}} & {Diffusion} & {90.0} & {\underline{1.8}} & {---} & {---} & {85.0} & {\textbf{1.1}} \\
        {CatFlow \citep{eijkelboom2024variational}} & {Flow} & {80.0} & {---} & {---} & {---} & {85.0} & {---} \\
        {DisCo \citep{xu2024discrete}} & {Diffusion} & {83.6\scriptsize{±2.1}} & {---} & {---} & {---} & {66.2\scriptsize{±1.4}} & {---} \\
        {Cometh \citep{siraudin2024cometh}} & {Diffusion} & {\textbf{99.5}\scriptsize{±0.9}} & {---} & {---} & {---} & {75.0\scriptsize{±3.7}} & {---} \\
        \midrule
        \rowcolor{Gray}
        {DeFoG (5\% steps)} & {Flow} & {\underline{95.0}\scriptsize{±3.2}} & {3.2\scriptsize{±1.1}} & {73.5\scriptsize{±9.0}} & {\underline{2.5}\scriptsize{±1.0}} & {\underline{86.5}\scriptsize{±5.3}} & {\underline{2.2}\scriptsize{±0.3}} \\
        \rowcolor{Gray}
        {DeFoG} & {Flow} & {\textbf{99.5}\scriptsize{±1.0}} & {\textbf{1.6}\scriptsize{±0.4}} & {\underline{96.5}\scriptsize{±2.6}} & {\textbf{1.6}\scriptsize{±0.4}} & {\textbf{90.0}\scriptsize{±5.1}} & {4.9\scriptsize{±1.3}} \\
        \bottomrule
    \end{tabular}}
    \vspace{-4pt}
\end{table*}

\begin{table*}[ht]
    \caption{Large molecule generation performance. Only iterative denoising-based methods are reported here. Respective full versions in \Cref{tab:guacamol} (Guacamol) and \Cref{tab:moses} (MOSES), \Cref{app:additional_molecular_results}.}
    \label{tab:moses_guacamol}
    \vspace{-4pt}
    \centering
    \resizebox{1.0\linewidth}{!}{
    \begin{tabular}{lccccccccccccc}
    \toprule
    & \multicolumn{5}{c}{Guacamol} & \multicolumn{7}{c}{MOSES} \\
    \cmidrule(lr){2-6} \cmidrule(lr){7-13}
    Model &  Val. $\uparrow$ & V.U. $\uparrow$ & V.U.N.$\uparrow$ & KL div $\uparrow$ & FCD $\uparrow$ & Val. $\uparrow$ & Unique. $\uparrow$ & Novelty $\uparrow$ & Filters $\uparrow$ & FCD $\downarrow$ & SNN $\uparrow$ & Scaf $\uparrow$ \\ 
    \midrule
    Training set & 100.0 & 100.0 & 0.0 & 99.9 & 92.8 & 100.0 & 100.0 & 0.0 & 100.0 & 0.01 & 0.64 & 99.1 \\
    \midrule
    {DiGress \citep{vignac2022digress}} & 85.2 & 85.2 & 85.1 & 92.9 & 68.0 & 85.7 & \textbf{100.0} & 95.0 & 97.1 & \textbf{1.19} & 0.52 & 14.8 \\
    {DisCo \citep{xu2024discrete}} & 86.6 & 86.6 & 86.5 & 92.6 & 59.7 & 88.3 & \textbf{100.0} & \textbf{97.7} & 95.6 & 1.44 & 0.50 & 15.1 \\
    {Cometh \citep{siraudin2024cometh}} & \underline{98.9} & \underline{98.9} & \underline{97.6} & \underline{96.7} & \underline{72.7} & \underline{90.5} & \underline{99.9} & {92.6} & \textbf{99.1} & \underline{1.27} & \underline{0.54} & \underline{16.0} \\
    \midrule
    \rowcolor{Gray}
    DeFoG (10\% steps) & 91.7 & 91.7 & 91.2 & 92.3 & 57.9 & 83.9 & \underline{99.9} & \underline{96.9} & 96.5 & 1.87 & 0.50 & \textbf{23.5} \\
    \rowcolor{Gray}
    DeFoG & \textbf{99.0} & \textbf{99.0} & \textbf{97.9} & \textbf{97.7} & \textbf{73.8} & \textbf{92.8} & \underline{99.9} & {92.1} & \underline{98.9} & 1.95 & \textbf{0.55} & 14.4 \\
    \bottomrule
    \end{tabular}}
    \vspace{-5pt}
\end{table*}

\vspace{-7pt}
\section{Experiments}
\label{sec:experiments}

\vspace{-3pt}
This section highlights DeFoG's SOTA performance, enabled by its highly decoupled framework and effective sampling methods.
We present DeFoG's performance in generating graphs with diverse topological structures and in molecular datasets with rich prior information.
We also provide ablations to showcase the effectiveness and necessity of each proposed sampling method.
\looseness=-1 We highlight the \textbf{best result} and the \underline{second-best} method. Results for generation with 5-10\% of steps used by previous SOTA diffusion models are also provided to demonstrate DeFoG's sampling efficiency.
We show DeFoG's versatility on conditional generation for digital pathology in \cref{app:additional_res_cond}.
\footnote{Code at \url{github.com/manuelmlmadeira/DeFoG}.}

\vspace{-7pt}
\paragraph{Setup}

To isolate the effect of the network architecture, we build DeFoG on the best-performing graph transformer on generative tasks~\citep{vignac2022digress}. To enhance expressivity, we incorporate Relative Random Walk Probabilities (RRWP)~\citep{ma2023graph,siraudin2024cometh} as node and edge features. More details on the architecture in \cref{app:arch_feats}. 
Our ablations in \Cref{app:efficiency_additional_features} show that, while RRWP features encode structural properties more efficiently and effectively than prior alternatives, DeFoG’s disentangled framework is the primary driver of performance gains.
Importantly, the overall architecture, together with RRWP features, is guaranteed to be permutation equivariant (see \Cref{app:equi_proof}).

\vspace{-3pt}
\subsection{Graph Generation Performance}
\paragraph{Synthetic Graph Generation}
\label{sec:synthetic}

\looseness=-1
\vspace{-3pt}
We evaluate DeFoG using the widely adopted \emph{Planar}, \emph{SBM} \citep{martinkus2022spectre}, and \emph{Tree} datasets \citep{bergmeister2023efficient}, along with the associated evaluation methodology. In \Cref{tab:synthetic_graphs_vun_ratio}, we report the proportion of generated graphs that are valid, unique, and novel (V.U.N.), as well as the average ratio of the usual distances between graph statistics of the generated and test sets relative to the train and test sets (Ratio) to assess sample quality.
As shown in \Cref{tab:synthetic_graphs_vun_ratio}, for the Planar dataset, DeFoG achieves the best performance across both metrics, with a nearly saturated V.U.N. value of 99.5\%. On the Tree dataset, it is only surpassed by HSpectre, which leverages a local expansion procedure particularly well-suited to hierarchical structures like trees. On the SBM dataset, DeFoG achieves the highest V.U.N. score among all methods, even with just 50 steps.
\looseness=-1

\looseness=-1
\vspace{-7pt}
\paragraph{Molecular Graph Generation}
Molecular design is a prominent real-world application of graph generation. We evaluate DeFoG's performance on this task using the QM9 \citep{wu2018moleculenet}, ZINC250k \citep{sterling2015zinc}, MOSES \citep{polykovskiy2020molecular}, and Guacamol \citep{brown2019guacamol} datasets. For QM9, we follow the dataset split and evaluation metrics from \citet{vignac2022digress}, presenting the results in \Cref{app:molecular_datasets_details}, \Cref{tab:qm9_results}. For ZINC250k, we provide the experimental setup in \Cref{app:experimental_details}. For the larger MOSES and Guacamol datasets, we adhere to the training setup and evaluation metrics established by \citet{polykovskiy2020molecular} and \citet{brown2019guacamol}, respectively, with results in \Cref{tab:moses,tab:guacamol}.
As illustrated in \Cref{tab:moses_guacamol}, on Guacamol, it ranks best across all metrics, achieving a nearly saturated validity of 99.0\%.
Notably, DeFoG achieves over 90\% validity with just 10\% of the sampling steps, surpassing the well-established baseline DiGress with 500 steps. On MOSES, DeFoG also outperforms diffusion models, achieving SOTA validity of 92.8\% while maintaining a high uniqueness of 99.9\%.

\vspace{-3pt}
\subsection{Efficiency Improvement}
\vspace{-3pt}
We now show that DeFoG enhances both training and sampling efficiency significantly across diverse datasets.

\vspace{-6pt}
\paragraph{Sampling Efficiency}
\Cref{fig:sampling_efficiency} highlights the cumulative benefits of sampling approaches from \Cref{sec:sample_optimization}. Starting with a vanilla DeFoG model (initially slightly below DiGress), each optimization step, denoted by $+$ symbols, progressively improves performance, culminating in significant gains using only 50 steps on the Planar dataset. 
This demonstrates that the three sampling approaches are each essential components for optimizing generation.

\vspace{-6pt}
\paragraph{Training Efficiency}
\Cref{fig:training_efficiency} illustrates convergence curves for the tree and MOSES datasets.
We observe that incorporating sampling distortion enhances performance significantly beyond the vanilla implementation, making it particularly useful for generation with undertrained models in resource-constrained settings (see \Cref{app:undertrained}).
Additionally, applying the same optimal distortion found in sampling during training typically yields further gains in convergence (see \Cref{app:ce_tracker}).
The convergence on some graph datasets may also benefit from an appropriate initial distribution, as shown for SBM in \cref{app:initial_distributions}.

\begin{figure}[t]
    \centering
    \begin{subfigure}[b]{0.99\linewidth}
        \centering
        \begin{subfigure}[b]{0.49\linewidth}
            \includegraphics[width=\linewidth]{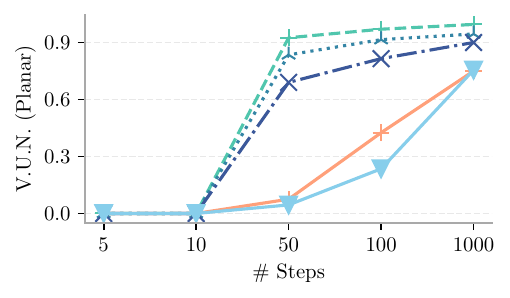}
            \label{fig:sample_planar}
        \end{subfigure}
        \hfill
        \begin{subfigure}[b]{0.49\linewidth}
            \includegraphics[width=\linewidth]{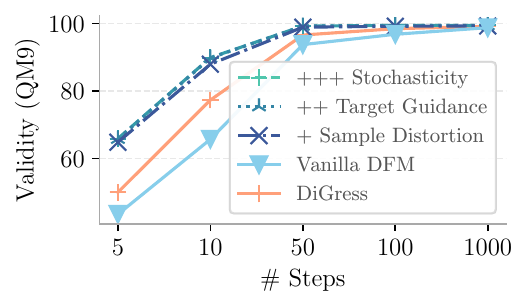}
            \label{fig:sample_qm9}
        \end{subfigure}
        \vspace{-17pt}
        \caption{Sampling efficiency improvement.}  %
        \label{fig:sampling_efficiency}
    \end{subfigure}
    \hfill
    \begin{subfigure}[b]{0.99\linewidth}
        \centering
        \begin{subfigure}[b]{0.49\linewidth}
            \includegraphics[width=\linewidth]{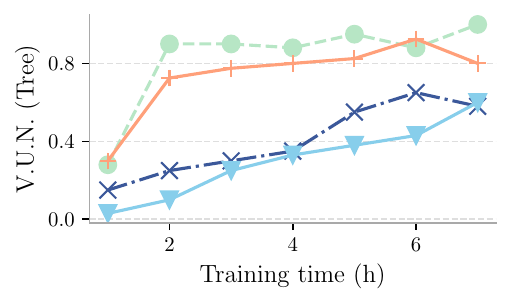}
            \label{fig:train_tree}
        \end{subfigure}
        \hfill
        \begin{subfigure}[b]{0.49\linewidth}
            \includegraphics[width=\linewidth]{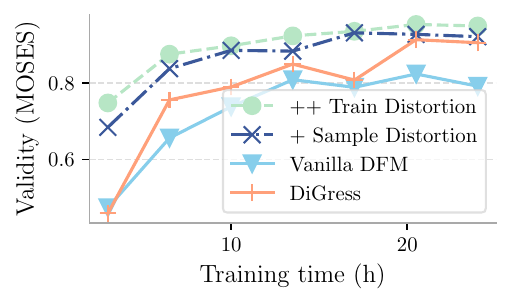}
            \label{fig:train_moses}
        \end{subfigure}
        \vspace{-17pt}
        \caption{Training efficiency improvement.}  %
        \label{fig:training_efficiency}
    \end{subfigure}
    \vspace{-4pt}
    \caption{DeFoG's improvements on efficiency.}  %
    \vspace{-10pt}
    \label{fig:efficiency}
\end{figure}

\vspace{-5pt}
\subsection{Ablations}
\vspace{-5pt}

\begin{figure}[t]
    \begin{subfigure}[b]{0.94\linewidth}
        \includegraphics[width=\linewidth]{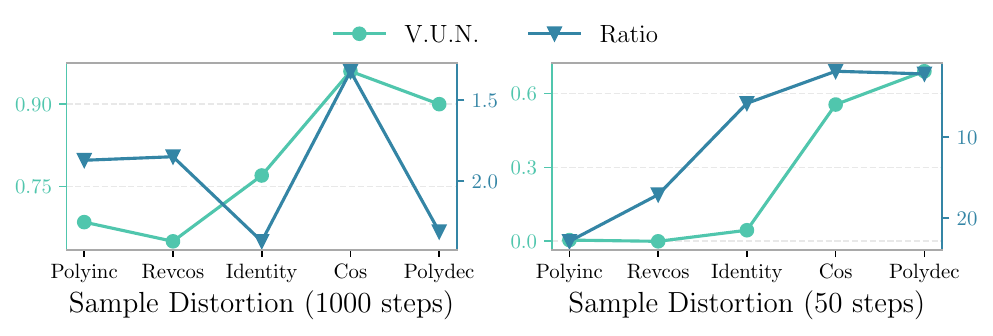}
        \vspace{-16pt}
        \caption{Sample Distortion.}  %
        \label{fig:planar_distortor_influence_main}
    \end{subfigure}
    \centering
    \begin{subfigure}[b]{0.94\linewidth}
        \includegraphics[width=\linewidth]{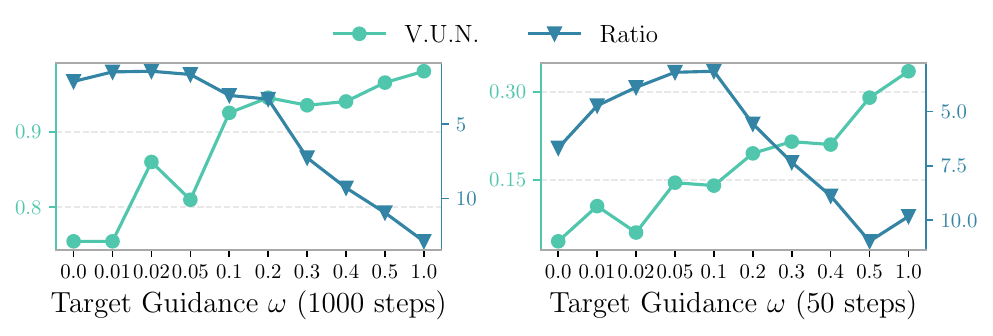}
        \vspace{-16pt}
    \caption{Target guidance.}  %
        \label{fig:planar_omega_influence_main}
    \end{subfigure}
    \centering
    \begin{subfigure}[b]{0.94\linewidth}
        \includegraphics[width=\linewidth]{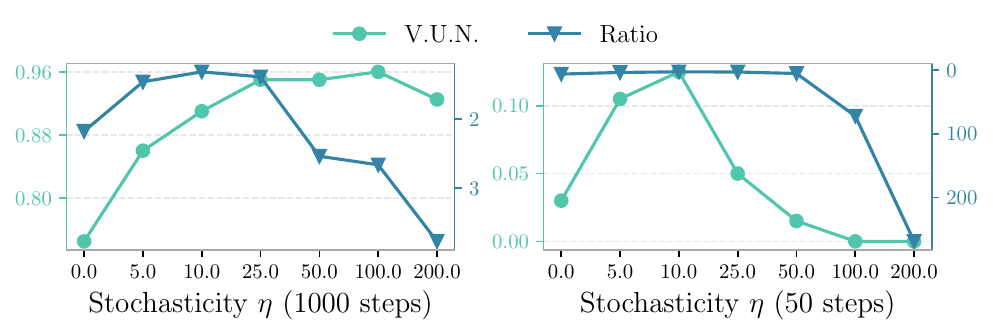}
        \vspace{-16pt}
    \caption{Stochasticity.}  %
        \label{fig:planar_eta_influence_main}
    \end{subfigure}
    \vspace{-4pt}
    \caption{Effect of proposed sampling approaches on the Planar dataset.
    Higher values on the vertical axis correspond to more favorable values for both V.U.N. and Ratio.
    \label{fig:influence_sample}
    }
    \vspace{-10pt}
    \centering
\end{figure}
Here, we focus on evaluating the impact of different sampling approaches.
We start from the vanilla sampling setup and sweep over sample distortion, target guidance, or stochasticity independently.
More details in \Cref{app:sample_optimization_pipeline}.
For illustration, here we focus on the Planar dataset:

\vspace{-6pt}
\paragraph{Sample Distortion} 
In \cref{fig:planar_distortor_influence_main}, we observe that \textit{cos} and \textit{polydec} distortions which emphasize refinement at later steps, perform better on the Planar dataset. This aligns with the intuition that, unlike continuous data undergoing gradual refinement, graphs often experience abrupt transitions due to the random sampling of categorical data. These transitions can violate hard constraints, such as planarity, as categorical values shift abruptly (e.g., from $0$ to $1$ in one-hot encoding) when $t$ approaches 1.
These later steps are thus critical for error detection and correction.
On the contrary, for datasets like SBM, where properties are not deterministic and such strict constraints are absent, this refinement does not provide any advantage (see \cref{app:distortions}).

\vspace{-6pt}
\paragraph{Target Guidance}
As shown in \cref{fig:planar_omega_influence_main}, $\omega$ improves both V.U.N. and Ratio by biasing generation toward predicted clean data. However, excessive $\omega$ skews the generated distribution to the high density regions of training set distribution, leading to higher planarity (reflected by V.U.N.) but increased divergence from test graphs (reflected by Ratio). We also observe that the Ratio only begins to worsen at $0.1$ with 50 steps, compared to $0.02$ with 1000 steps. 
This demonstrates that target guidance is particularly effective with fewer steps, where the generation process becomes more challenging due to larger transitions, as it steers the model toward higher-confidence regions, safeguarding generative performance.

\vspace{-6pt}
\paragraph{Stochasticity}
As shown in \cref{fig:planar_eta_influence_main}, a 
moderate level of stochasticity benefits both metrics, while extreme values introduce excessive noise, disrupting the generation process. This indicates a sweet spot exists between effective error correction and over-stochasticity.
Furthermore, the V.U.N. of generated graphs decreases with increasing $\eta$ values when more steps are utilized (drop after $\eta = 100$ for 1000 steps \emph{vs.} $\eta = 10$ for 50 steps). 
As this approach preserves the Kolmogorov equation, it benefits from more sampling steps to mitigate simulation errors.

\section{Conclusion}
\label{sec:conclusion}

We introduce DeFoG, a novel discrete flow matching framework for graphs. This formulation enables training-sampling decoupling, which we ground theoretically to ensure faithful graph distribution modeling. Extensive experiments demonstrate the importance of our proposed strategies in achieving state-of-the-art performance on synthetic and molecular graph generation tasks.
DeFoG currently employs a simple but efficient hyperparameter search for sampling, yielding impressive results and underscoring its potential for further improvement with more advanced search algorithms.
Future work will further explore purely sampling-stage methods to enhance performance. Generating high-quality graphs in even fewer steps and scaling to larger graphs also remain key challenges.

\looseness=-1

\section*{Acknowledgements}

We would like to thank Clément Vignac and Andrew Campbell for the useful discussions and suggestions. 

\section*{Impact Statement}

The primary objective of this paper is to advance graph generation under a more flexible framework, with applications spanning general graph generation, molecular design, and digital pathology.
The ability to generate graphs with discrete labels can have broad-reaching implications for fields such as drug discovery and diagnostic technologies. While this development has the potential to bring about both positive and negative societal or ethical impacts, particularly in areas like biomedical and chemical research, we currently do not foresee any immediate societal concerns associated with the proposed methodology.

\bibliography{tools/clean_ref_defog}
\bibliographystyle{icml2025}

\appendix
\clearpage
\newpage
\onecolumn
\section*{Appendix Overview}
In the Appendix, we provide additional details organized as follows:
\begin{enumerate}
    \item Appendix \ref{app:context}: Contextualizing Related Research.
    \item Appendix \ref{app:sample}: Sample Optimization.
    \item Appendix \ref{app:train_opt}: Train Optimization.
    \item Appendix \ref{app:theoretical_results}: Theoretical Results.
    \item Appendix \ref{app:conditional_generation}: Conditional Generation.
    \item Appendix \ref{app:experimental_details}: Experimental Details.
    \item Appendix \ref{app:additional_res}: Additional Results.
\end{enumerate}

\clearpage
\section{Contextualizing Related Research}\label{app:context}

In this section, we further contextualize DeFoG within the scope of related work.
We begin by introducing the methods used for comparison with DeFoG in \Cref{app:context_methods}.
Subsequently, we outline the key distinctions between DeFoG and existing diffusion-based graph generative models in \Cref{app:context_diffusion}.

\subsection{Overview of Compared Methods}\label{app:context_methods}

In \Cref{sec:experiments}, we evaluate DeFoG against a diverse set of graph generative models, which we introduce below:
\begin{itemize}[topsep=0pt, partopsep=0pt, itemsep=0pt, parsep=0pt]
    \item GraphRNN~\citep{you2018graphrnn} and GRAN~\citep{liao2019efficient}, two pioneering autoregressive models for graph generation;
    \item SPECTRE~\citep{martinkus2022spectre}, a spectrally conditioned GAN-based model for graph generation;
    \item DiGress~\citep{vignac2022digress}, the first discrete diffusion model for graph generation; 
    \item EDGE~\citep{chen2023efficient}, a discrete diffusion model leveraging graph sparsity and degree guidance for scalability.
    \item BwR~\citep{diamant2023improving}, which focuses on efficient graph representations via bandwidth restriction schemes that are compatible with various graph generation models. We report its results in combination with EDP-GNN~\citep{niu2020permutation}, which was the first graph diffusion model;
    \item BiGG~\citep{dai2020scalable}, an autoregressive model that exploits graph sparsity and training parallelization to scale to larger graphs;
    \item GraphGen~\citep{goyal2020graphgen}, a scalable autoregressive approach utilizing graph canonization with minimum DFS codes, notable for being domain-agnostic and inherently supporting attributed graphs;
    \item HSpectre~\citep{bergmeister2023efficient}, a hierarchical graph generation method that utilizes a score-based formulation for iterative local expansion steps;
    \item DisCo~\citep{xu2024discrete} and Cometh~\citep{siraudin2024cometh}, two continuous-time discrete diffusion models for graph generation;
    \item GruM~\citep{jo2024graphgenerationdiffusionmixture}, which employs a diffusion mixture to explicitly learn the final graph topology and structure;
    \item CatFlow~\citep{eijkelboom2024variational}, which results from the instantiation of variational flow matching to graph generation.
\end{itemize}

\subsection{DeFoG and Graph Diffusion Models}\label{app:context_diffusion}

In this section, we contextualize DeFoG in relation to existing graph diffusion models.

\subsubsection{From Continuous to Discrete State-spaces}

Early diffusion-based graph generative models extended continuous diffusion and score-based methods from image generation to graphs by relaxing adjacency matrices into continuous state-spaces~\citep{niu2020permutation,jo2022score}. 
However, this approach overlooks the inherent discreteness of graph-structured data, resulting in topologically uninformed noising processes.
For instance, these methods often destroy graph sparsity and generate noisy complete graphs~\citep{vignac2022digress,xu2024discrete,siraudin2024cometh}, making it more challenging for denoising neural networks to recover meaningful structural properties from the noisy inputs. Some recent formulations operating on continuous state-spaces have tried to overcome these limitations: GruM~\citep{jo2024graphgenerationdiffusionmixture} introduces an endpoint-conditioned diffusion mixture strategy to enhance accuracy by explicitly learning final graph structures, while CatFlow~\citep{eijkelboom2024variational} proposes variational flow matching to handle categorical data more effectively.

Alternatively, discrete diffusion models have emerged as a more natural solution, directly preserving the discrete nature of graph data~\citep{vignac2022digress,haefeli2022diffusion}. These models have demonstrated state-of-the-art performance across a variety of applications, including neural architecture search~\citep{asthana2024multi}, combinatorial optimization~\citep{sun2024difusco}, molecular generation~\citep{irwin2024efficient}, and reaction pathway design~\citep{igashov2023retrobridge}.

DeFoG aligns with this second family of methods, modeling nodes and edges in discrete state-spaces to leverage the structural properties of graph data effectively.

\subsubsection{From Discrete to Continuous time}

The initial discrete-time diffusion frameworks for graph generation~\citep{vignac2022digress,haefeli2022diffusion} were built upon Discrete Denoising Diffusion Probabilistic Models (D3PMs)~\citep{austin2021structured}, which operate with a fixed partitioning of time. This discretization constrains the model to denoise at specific time points and ties the sampling process to the same fixed time steps used during training, leading to a rigid coupling between the training and sampling stages. Such inflexibility in time discretization can limit the quality of generated graphs.

In contrast, continuous-time discrete diffusion frameworks~\citep{campbell2022continuous,sun2022score} overcome these limitations by enabling the model to denoise at arbitrary time points within a continuous interval (typically between 0 and 1). This flexibility allows the time discretization strategy for sampling to be selected post-training, enabling the use of advanced sampling techniques~\citep{jolicoeur2021gotta,zhang2022fast,salimans2022progressive,chung2022come,song2020score,dockhorn2021score} to improve generation performance. These continuous-time frameworks have been successfully extended to graph generative models~\citep{xu2024discrete,siraudin2024cometh}, achieving notable improvements.

DeFoG follows a continuous-time formulation, leveraging its flexibility in sampling to achieve enhanced performance while maintaining the strengths of discrete state-space modeling.

\subsubsection{From Continuous-Time Discrete Diffusion to Discrete Flow Matching}

While both continuous-time discrete diffusion and discrete flow matching (DFM) share the CTMC formulation for the denoising process, they differ fundamentally in the formulation of the noising process. Continuous-time discrete diffusion-based graph generative models~\citep{xu2024discrete, siraudin2024cometh} define the noising process as a CTMC, akin to the denoising process. However, this approach imposes two significant limitations:
\begin{enumerate}
    \item \textbf{Incomplete Coupling of Training and Sampling}: The rate matrices of the noising and denoising processes are explicitly interrelated, and the noising rate matrix must be fixed during training. This restricts the sampling stage, preventing full decoupling of training and sampling.
    \item \textbf{Limited Design Space}: The noising process must be derived analytically, which is not straightforward and is only feasible for rate matrices suitable for matrix exponentiation. Additionally, the denoising rate matrix is implicitly defined during training, constraining the flexibility of the denoising trajectory at sampling time (e.g., fixing the level of stochasticity).
\end{enumerate}

In contrast, DeFoG allows for direct prescription of the noising process, $p_{t|1}$, without these constraints. The rate matrix for the denoising process is selected exclusively at sampling time, fully decoupling the training and sampling stages. This flexibility enables performance optimization during sampling, such as tuning the stochasticity of the denoising trajectory via $R^\mathrm{DB}_t$ or adjusting target guidance magnitude with $R^\omega_t$.

The benefits of this decoupled framework are evident in \Cref{fig:efficiency,fig:search_all_synthetic,fig:search_all_molecular}, which demonstrate that the vanilla DeFoG configuration alone does not outperform existing diffusion-based graph generative models. However, our extensive sampling optimization pipeline capitalizes on DeFoG's flexible design space to achieve state-of-the-art results. 
These observations align with findings in iterative refinement methods across other data modalities. For instance, \citet{karras2022elucidating} elaborate on the benefits of stochasticity adjustment in denoising trajectories within diffusion models for image generation.

For a comprehensive discussion of the differences between continuous-time discrete diffusion and DFM frameworks, see Appendix H of~\citet{campbell2024generative}.

\subsubsection{Mixed Integration of Continuous and Discrete State-spaces}

Integrating continuous and categorical data within graph generative models is an important challenge, as many real-world applications involve heterogeneous data types (e.g., molecular graphs containing atomic coordinates alongside categorical atom and bond types). A recent example addressing this challenge is GBD \cite{liu2024advancing}, which incorporates beta diffusion to jointly model both continuous and discrete variables. Similarly, DeFoG is amenable to formulations involving mixed data types by leveraging an approach akin to MiDi \cite{vignac2023midi}, independently factorizing continuous and discrete variables. However, explicitly exploring this integration is beyond the scope of this work.

\clearpage
\section{Sample Optimization}\label{app:sample}

In this section, we explore the proposed sampling optimization in more detail. We start by analysing the different time distortion functions in \Cref{app:distortions}. Next, in \Cref{app:target_guidance}, we prove that the proposed target guidance mechanism actually satisfies the Kolmogorov equation, thus yielding valid rate matrices and, in \Cref{app:detailed_balance}, we provide more details about the detailed balance equation and how it widens the design space of rate matrices. In \Cref{app:sample_optimization_pipeline}, we also describe the adopted sampling optimization pipeline.
Finally, in \Cref{app:understanding_sampling_dynamics_rstar}, we provide more details to better clarify the dynamics of the sampling process.

\subsection{Time Distortion Functions}\label{app:distortions}

In \Cref{sec:method_graphs}, we explore the utilization of different \emph{distortion functions}, i.e., functions that are used to transform time. The key motivation for employing such functions arises from prior work on flow matching in image generation, where skewing the time distribution during training has been shown to significantly enhance empirical performance \citep{esser2024scaling}. In practical terms, this implies that the model is more frequently exposed to specific time intervals. Mathematically, this transformation corresponds to introducing a time-dependent re-weighting factor in the loss function, biasing the model to achieve better performance in particular time ranges.

In our case, we apply time distortions to the probability density function (PDF) by introducing a function \( f \) that transforms the original uniformly sampled time \( t \), such that \( t' = f(t) \) for \( t \in [0, 1] \). These time distortion functions must satisfy certain conditions: they must be monotonic, with \( f(0) = 0 \) and \( f(1) = 1 \). Although the space of functions that satisfy these criteria is infinite, we focus on five distinct functions that yield fundamentally different profiles for the PDF of \( t' \). Our goal is to gain intuition about which time ranges are most critical for graph generation and not to explore that function space exhaustively. Specifically:
\begin{itemize}
    \item \emph{Polyinc}: $f(t)=t^2$, yielding a PDF that decreases monotonically with $t'$;
    \item \emph{Cos}: $f(t)=\frac{1- \cos{\pi t}}{2}$, creating a PDF with high density near the boundaries $t'=0$ and $t'=1$, and low for intermediate $t'$;
    \item \emph{Identity}: $f(t)=t$, resulting in a uniform PDF for $t' \in [0,1]$;
    \item \emph{Revcos}: $f(t)=2t - \frac{1-\cos{\pi t}}{2}$,  leading to high PDF density for intermediate $t'$ and low density at the extremes $t'=0$ and $t'=1$;
    \item \emph{Polydec}: $f(t)= 2t - t^2$, where the PDF increases monotonically with $t'$.
\end{itemize}
The PDF resulting from applying a monotonic function $f$ to a random variable $t$ is given by:
\begin{equation*}
    \phi_{t'}(t') = \phi_{t}(t) \left|\frac{\textrm{d}}{\textrm{d}t'} f^{-1} (t')\right|,
\end{equation*}
where $\phi_{t}(t)$ and $\phi_{t'}(t')$ denote the PDFs of $t$ and $t'$, respectively. In our case, $\phi_{t}(t)=1$ for $t\in [0,1]$. The distortion functions and their corresponding PDFs are illustrated in \Cref{fig:time_distortions}.

\begin{figure*}[ht]
    \centering
    \begin{subfigure}[b]{0.49\linewidth}
        \includegraphics[width=\linewidth]{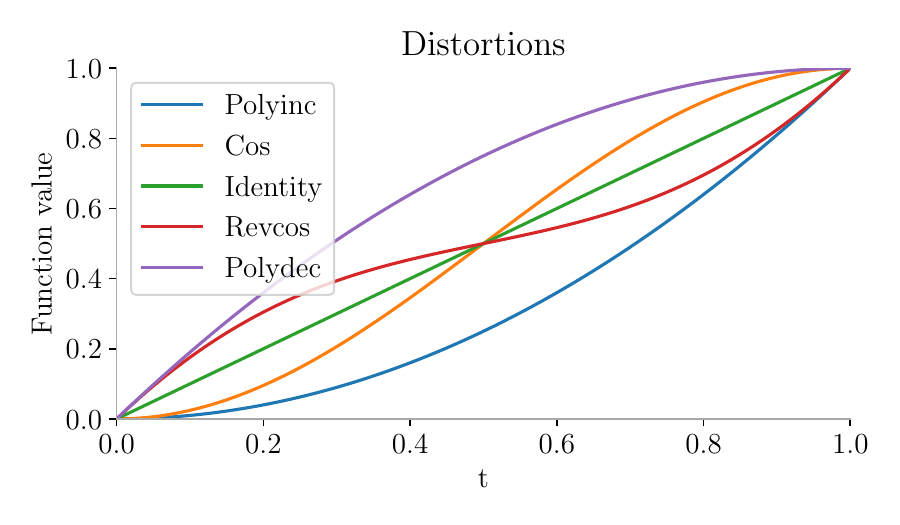}
        \caption{}
    \end{subfigure}
    \hfill
    \begin{subfigure}[b]{0.49\linewidth}
        \includegraphics[width=\linewidth]{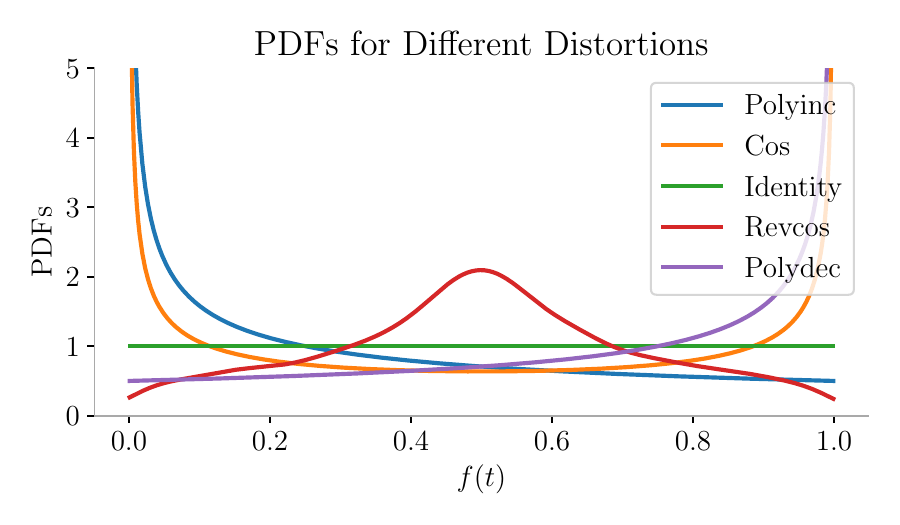}
        \caption{}
    \end{subfigure}
    \caption{
    (a) The five distortion functions explored. 
    (b) The resulting PDFs for the five distortion functions. For polydec, identity, and polyinc, they were computed in closed-form. For revcos and cos, they were simulated with $10^4$ repetitions.
    }
    \label{fig:time_distortions}
\end{figure*}

One of the strategies the proposed in \textbf{sampling} optimization procedure is the use of variable step sizes throughout the denoising process. This is achieved by mapping evenly spaced time points (DeFoG's vanilla version) through a transformation that follows the same constraints as the training time distortions discussed earlier. We employ the same set of time distortion functions, again not to exhaustively explore the space of applicable functions, but to gain insight into how varying step sizes affect graph generation. The expected step sizes for each distortion can be directly inferred from \Cref{fig:time_distortions}. For instance, the \emph{polydec} function leads to progressively smaller time steps, suggesting more refined graph edits in the denoising process as $t'$ approaches 1.

Note that even though we apply the same time distortions for both training and sample stages, in each setting they have different roles: in training, the time distortions skew the PDFs from where $t'$ is sampled, while in sampling they vary the denoising step sizes.

\subsection{Target Guidance}
\label{app:target_guidance}

In this section, we demonstrate that the proposed \emph{target guidance} design for the rate matrices violates the Kolmogorov equation with an error that is linear in $\omega$. This result indicates that a small guidance factor effectively helps fit the distribution, whereas a larger guidance factor, as shown in \Cref{fig:influence_omega}, while enhancing topological properties such as planarity, increases the distance between generated and training data on synthetic datasets according to the metrics of average ratio. Similarly, for molecular datasets, this also leads to an increase in validity and a decrease in novelty by forcing the generated data to closely resemble the training data.

\setcounter{theorem}{9} %

\begin{lemma}[Rate matrices for target guidance]\label{lemma:target_guidance}
Let $R^\omega_t(z_t, z_{t + \mathrm{d}t} | z_1)$ be defined as:
\begin{equation}
    R^\omega_t(z_t, z_{t + \mathrm{d}t} | z_1) = 
    \omega \frac{\delta(z_1, z_{t+\mathrm{d}t})}{Z^{>0}_t \; p_{t|1} (z_t|z_1)}.
\end{equation}
Then, the univariate rate matrix $R^\mathrm{TG}_t{(z_t, z_{t+\mathrm{d}t}|z_1)} = R^*_t{(z_t, z_{t+\mathrm{d}t}|z_1)} + R^\omega_t(z_t, z_{t + \mathrm{d}t} | z_1)$ violates the Kolmogorov equation with an error of $- \frac{\omega}{Z^{>0}_t}$ when $z_t\neq z_1$, and an error of $ \omega\frac{ Z^{>0}_t-1}{Z^{>0}_t}$ when $z_t= z_1$.
\end{lemma}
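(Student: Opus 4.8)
The statement is an explicit computation of the left-hand side minus the right-hand side of the Kolmogorov equation $\partial_t p_{t|1}(\cdot|z_1) = (R^{\mathrm{TG}}_t)^T p_{t|1}(\cdot|z_1)$, evaluated state by state. The strategy is: (i) recall that $R^*_t$ satisfies the Kolmogorov equation exactly, so the total violation equals the contribution of the added term $R^\omega_t$ alone; (ii) compute, for a fixed target state $z$, the quantity $\sum_{z_t} R^\omega_t(z_t, z|z_1)\, p_{t|1}(z_t|z_1)$, remembering to include the diagonal correction $R^\omega_t(z,z|z_1) = -\sum_{z'\neq z} R^\omega_t(z,z'|z_1)$; (iii) split into the two cases $z = z_1$ and $z \neq z_1$ and simplify.

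First I would write the noising marginal explicitly from \Cref{eq:noising_process}, $p_{t|1}(z_t|z_1) = t\,\delta(z_t,z_1) + (1-t)\,p_0(z_t)$, which is strictly positive on its support; under the standing assumptions $Z^{>0}_t$ is constant in $t$ (or at least locally constant), so I can treat it as a constant when differentiating. Next I would compute the off-diagonal entries: for $z_t \neq z_{t+\mathrm{d}t}$, $R^\omega_t(z_t, z_{t+\mathrm{d}t}|z_1) = \omega\,\delta(z_1, z_{t+\mathrm{d}t}) / (Z^{>0}_t\, p_{t|1}(z_t|z_1))$, which is nonzero only when the arrival state is $z_1$. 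Then the diagonal entry is $R^\omega_t(z_t,z_t|z_1) = -\sum_{z'\neq z_t} R^\omega_t(z_t,z'|z_1)$, which is $-\omega/(Z^{>0}_t\, p_{t|1}(z_t|z_1))$ when $z_t \neq z_1$ (the sum has exactly one nonzero term, $z'=z_1$) and $0$ when $z_t = z_1$ (the only candidate arrival state $z_1$ is excluded).

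With these entries in hand, I would evaluate the "extra inflow" $E(z) := \sum_{z_t} R^\omega_t(z_t, z|z_1)\, p_{t|1}(z_t|z_1)$ for the two cases. For $z \neq z_1$: every off-diagonal term vanishes (arrival state is not $z_1$), and the only surviving contribution is the diagonal one $R^\omega_t(z,z|z_1)\,p_{t|1}(z|z_1) = -\omega/Z^{>0}_t$, so the violation is $-\omega/Z^{>0}_t$, matching the claim. For $z = z_1$: the diagonal term vanishes, and the off-diagonal terms contribute $\sum_{z_t \neq z_1} \big(\omega/(Z^{>0}_t\,p_{t|1}(z_t|z_1))\big) p_{t|1}(z_t|z_1) = \omega\,(Z^{>0}_t - 1)/Z^{>0}_t$ — here I use that there are exactly $Z^{>0}_t - 1$ states $z_t \neq z_1$ lying in the support of $p_{t|1}(\cdot|z_1)$ (and outside the support the coefficient times the measure is taken as zero), which gives the claimed value. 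Finally, since $R^{\mathrm{TG}}_t = R^*_t + R^\omega_t$ and $R^*_t$ generates $p_{t|1}$, the Kolmogorov residual of $R^{\mathrm{TG}}_t$ at state $z$ equals exactly $E(z)$, completing the argument.

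\textbf{Main obstacle.} The only delicate point is the bookkeeping around the support of $p_{t|1}(\cdot|z_1)$ and the quantity $Z^{>0}_t$: one must be careful that the cancellation $p_{t|1}(z_t|z_1) / p_{t|1}(z_t|z_1) = 1$ is only valid on the support, that the count of such states is $Z^{>0}_t - 1$ in the $z=z_1$ case (since $z_1$ itself is always in the support for $t>0$), and that $Z^{>0}_t$ may be treated as $t$-independent so that it does not contribute an extra derivative term when comparing against $\partial_t p_{t|1}$. Everything else is a short linear-algebra computation.
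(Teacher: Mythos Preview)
Your proposal is correct and is essentially the same computation as the paper's proof: both use that $R^*_t$ already satisfies the conditional Kolmogorov equation, so the residual equals the contribution of $R^\omega_t$ alone, and then evaluate that contribution by a case split on whether the state equals $z_1$. The only cosmetic difference is that you package the outflow into the diagonal entry $R^\omega_t(z,z|z_1)$ and compute $\sum_{z_t} R^\omega_t(z_t,z|z_1)\,p_{t|1}(z_t|z_1)$, whereas the paper keeps the off-diagonal inflow/outflow form and simplifies $\frac{\omega}{Z^{>0}_t}\sum_{z_{t+\mathrm{d}t}\neq z_t}\big(\delta(z_1,z_t)-\delta(z_1,z_{t+\mathrm{d}t})\big)$ directly; these are the same calculation. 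One small remark: your worry that $Z^{>0}_t$ might ``contribute an extra derivative term'' is unnecessary, since $Z^{>0}_t$ sits inside the rate matrix, not inside $p_{t|1}$, and the Kolmogorov equation never differentiates the rate matrix.
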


\begin{proof}

    In the remaining of the proof, we consider the case $z_t \neq z_1$. We consider the same assumptions as \citet{campbell2024generative}:
    \begin{itemize}
        \item $p_{t|1} (z_{t} | z_1) = 0 \Rightarrow R^*_t{(z_t, z_{t+\mathrm{d}t}|z_1)} = 0$;
        \item $p_{t|1} (z_{t} | z_1) = 0 \Rightarrow \partial_t p_{t|1}(z_t|z_1)= 0$ (``dead states cannot ressurect").
    \end{itemize}
    
    The $z_1$-conditioned Kolmogorov equation is given by:

    \begin{equation}\label{eq:conditioned_kolmogorov_equation}
        \partial_t p_{t|1} (z_t|z_1) = \sum_{z_{t+\mathrm{d}t} \neq z_t} R_t(z_{t+\mathrm{d}t}, z_t|z_1) p_{t+\mathrm{d}t|1}(z_{t+\mathrm{d}t}|z_1) \; - \sum_{z_{t+\mathrm{d}t} \neq z_t} R_t(z_t, z_{t+\mathrm{d}t}|z_1) p_{t|1}(z_t|z_1)
    \end{equation}

    We denote by RHS and LHS the right-hand side and left-hand side, respectively, of \Cref{eq:conditioned_kolmogorov_equation}. For the case in which $p_{t|1}(z_t|z_1)>0$, we have:

    \begin{align*}
        \text{RHS}
        &=\sum_{z_{t+\mathrm{d}t} \neq z_t, p_{t+\mathrm{d}t|1}(z_{t+\mathrm{d}t} | z_1) > 0} (R^*_t{( z_{t+\mathrm{d}t}, z_t|z_1)} + R^\omega_t(z_{t+\mathrm{d}t}, z_t | z_1) ) p_{t+\mathrm{d}t|1}(z_{t+\mathrm{d}t}|z_1) \;\\
        &\quad - \sum_{z_{t+\mathrm{d}t} \neq z_t, p_{t+\mathrm{d}t|1}(z_{t+\mathrm{d}t} | z_1) > 0} (R^*_t{(z_t, z_{t+\mathrm{d}t}|z_1)} + R^\omega_t(z_t, z_{t + \text{d}t} | z_1)) p_{t|1}(z_t|z_1) \\
        &= \sum_{z_{t+\mathrm{d}t} \neq z_t, p_{t+\mathrm{d}t|1}(z_{t+\mathrm{d}t} | z_1) > 0} R^*_t{( z_{t+\mathrm{d}t}, z_t|z_1)} p_{t+\mathrm{d}t|1}(z_{t+\mathrm{d}t}|z_1) - R^*_t{(z_t, z_{t+\mathrm{d}t}|z_1)} p_{t|1}(z_t|z_1)\;\\
        &\quad + \sum_{z_{t+\mathrm{d}t} \neq z_t, p_{t+\mathrm{d}t|1}(z_{t+\mathrm{d}t} | z_1) > 0} R^\omega_t(z_{t+\mathrm{d}t}, z_t | z_1)p_{t+\mathrm{d}t|1}(z_{t+\mathrm{d}t}|z_1) - R^\omega_t(z_t, z_{t + \text{d}t} | z_1) p_{t|1}(z_t|z_1), \\ 
    \end{align*}

    For the first sum, we have:
    \begin{align*}
        \sum_{z_{t+\mathrm{d}t} \neq z_t, p_{t+\mathrm{d}t|1}(z_{t+\mathrm{d}t} | z_1) > 0} R^*_t{( z_{t+\mathrm{d}t}, z_t|z_1)} p_{t+\mathrm{d}t|1}(z_{t+\mathrm{d}t}|z_1) - &R^*_t{(z_t, z_{t+\mathrm{d}t}|z_1)} p_{t|1}(z_t|z_1) \\&=\partial_t p_{t|1} (z_t|z_1).
    \end{align*}
    since the $z_1$-conditioned $R^*_t$ generates $p_{t|1}$.

    For the second sum, we have:
    \begin{align*}
        &\sum_{z_{t+\mathrm{d}t} \neq z_t, p_{t+\mathrm{d}t|1}(z_{t+\mathrm{d}t} | z_1) > 0} R^\omega_t(z_{t+\mathrm{d}t}, z_t | z_1)p_{t+\mathrm{d}t|1}(z_{t+\mathrm{d}t}|z_1) - R^\omega_t(z_t, z_{t + \text{d}t} | z_1) p_{t|1}(z_t|z_1)= \\
        = & \sum_{z_{t+\mathrm{d}t} \neq z_t, p_{t+\mathrm{d}t|1}(z_{t+\mathrm{d}t} | z_1) > 0} \omega \frac{\delta(z_1, z_{t})}{Z^{>0}_t \; p_{t+\mathrm{d}t|1} (z_{t+\mathrm{d}t}|z_1)}  p_{t+\mathrm{d}t|1} (z_{t+\mathrm{d}t}|z_1) \\ 
        & - \sum_{z_{t+\mathrm{d}t} \neq z_t, p_{t+\mathrm{d}t|1}(z_{t+\mathrm{d}t} | z_1) > 0} \omega \frac{\delta(z_1, z_{t+\mathrm{d}t})}{Z^{>0}_t \; p_{t|1} (z_t|z_1)} p_{t|1}(z_t|z_1) \\
        = & \frac{\omega}{Z^{>0}_t} \sum_{z_{t+\mathrm{d}t} \neq z_t, p_{t+\mathrm{d}t|1}(z_{t+\mathrm{d}t} | z_1) > 0} \left(\delta(z_1, z_{t})-\delta(z_1, z_{t+\mathrm{d}t})\right) \\
    \end{align*}

    If $z_1\neq z_t$, we have:
    
    \begin{align*}
        &\sum_{z_{t+\mathrm{d}t} \neq z_t, p_{t+\mathrm{d}t|1}(z_{t+\mathrm{d}t} | z_1) > 0} R^\omega_t(z_{t+\mathrm{d}t}, z_t | z_1)p_{t+\mathrm{d}t|1}(z_{t+\mathrm{d}t}|z_1) - R^\omega_t(z_t, z_{t + \text{d}t} | z_1) p_{t|1}(z_t|z_1)=\\
        &=\frac{\omega}{Z^{>0}_t} \sum_{z_{t+\mathrm{d}t} \neq z_t, p_{t+\mathrm{d}t|1}(z_{t+\mathrm{d}t} | z_1) > 0} \left(\delta(z_1, z_{t})-\delta(z_1, z_{t+\mathrm{d}t})\right) \\
        &=\frac{\omega}{Z^{>0}_t} \sum_{z_{t+\mathrm{d}t} \neq z_t, p_{t+\mathrm{d}t|1}(z_{t+\mathrm{d}t} | z_1) > 0} \left(0-\delta(z_1, z_{t+\mathrm{d}t})\right)\\
        &=- \frac{\omega}{Z^{>0}_t},
    \end{align*}

    Here, we apply the property that $z_t \neq z_1$, which indicates that $\delta(z_1, z_{t})=0$ and that there exists one and only one $z_{t+\mathrm{d}t} \in \{z_{t+\mathrm{d}t}, z_{t+\mathrm{d}t} \neq z_t\}$ such that $z_{t+\mathrm{d}t}=z_1$, which verifies that $p_{t+\mathrm{d}t|1}(z_{t+\mathrm{d}t} | z_1) > 0$ — a condition satisfied by any initial distribution proposed in this work when $t$ strictly positive—the sum simplifies to $- \frac{\omega}{Z^{>0}_t}$.
    
    If $z_1= z_t$, we have:
    
    \begin{align*}
        &\sum_{z_{t+\mathrm{d}t} \neq z_t, p_{t+\mathrm{d}t|1}(z_{t+\mathrm{d}t} | z_1) > 0} R^\omega_t(z_{t+\mathrm{d}t}, z_t | z_1)p_{t+\mathrm{d}t|1}(z_{t+\mathrm{d}t}|z_1) - R^\omega_t(z_t, z_{t + \text{d}t} | z_1) p_t(z_t|z_1)=\\
        &=\frac{\omega}{Z^{>0}_t} \sum_{z_{t+\mathrm{d}t} \neq z_t, p_{t+\mathrm{d}t|1}(z_{t+\mathrm{d}t} | z_1) > 0} \left(\delta(z_1, z_{t})-\delta(z_1, z_{t+\mathrm{d}t})\right) \\
        &=\frac{\omega}{Z^{>0}_t} \sum_{z_{t+\mathrm{d}t} \neq z_t, p_{t+\mathrm{d}t|1}(z_{t+\mathrm{d}t} | z_1) > 0} \left(1-0\right) \\
        &=\omega\frac{ Z^{>0}_t-1}{Z^{>0}_t},
    \end{align*}

    \end{proof}

\paragraph{Intuition}
The aim of \emph{target guidance} is to reinforce the transition rate to the state predicted by the probabilistic model, $z_1$.
The $\omega$ term is an hyperparameter used to control the target guidance magnitude.

\subsection{Detailed Balance, Prior Incorporation, and Stochasticity}\label{app:detailed_balance}

\citet{campbell2024generative} show that although their $z_1$-conditional formulation of $R^*_t$ generates $p_{t|1}$, it does not span the full space of valid rate matrices — those that satisfy the conditional Kolmogorov equation (\Cref{eq:conditioned_kolmogorov_equation}).  They derive sufficient conditions for identifying other valid rate matrices. Notably, they demonstrate that matrices of the form
\begin{equation*}
    R^\eta_t := R^*_t + \eta R^\mathrm{DB}_t,
\end{equation*}
with $\eta \in \mathbb{R}^{\geq 0}$ and $R^\mathrm{DB}_t$ any matrix that verifies the \emph{detailed balance condition}:
\begin{equation}\label{eq:detailed_balance}
    p_{t|1}(z_t|z_1) R^\text{DB}_t (z_{t}, z_{t+\text{d}t}|z_1) = p_{t|1}(z_{t+\text{d}t}|z_1) R^\text{DB}_t (z_{t+\text{d}t}, z_t|z_1),
\end{equation}
still satisfy the Kolmogorov equation. The detailed balance condition ensures that the outflow, $p_{t|1}(z_t|z_1) R^\text{DB}_t (z_{t}, z_{t+\text{d}t}|z_1)$, and inflow, $p_{t|1}(z_{t+\text{d}t}|z_1) R^\text{DB}_t (z_{t+\text{d}t}, z_t|z_1)$, of probability mass to any given state are perfectly balanced. Under these conditions, this additive component's contribution to the Kolmogorov equation becomes null (similar to the target guidance, as shown in the proof of of \Cref{lemma:target_guidance}, in \Cref{app:target_guidance}).

A natural question is how to choose a suitable design for $R^\text{DB}_t$ from the infinite space of detailed balance rate matrices. As depicted in \Cref{fig:rdb_design}, this flexibility can be leveraged to incorporate priors into the denoising model by encouraging specific transitions between states. By adjusting the sparsity of the matrix entries, additional transitions beyond those prescribed by $R^*_t$ can be introduced. In the general case, transitions between all states are possible; in the column case, a specific state centralizes all potential transitions; and in the single-entry case, only transitions between two states are permitted. These examples merely illustrate some possibilities and do not exhaust the range of potential $R^\text{DB}_t$ designs. The matrix entries can be structured by considering the following reorganization of terms of \Cref{eq:detailed_balance}:
\begin{equation*}
    R^\text{DB}_t (z_{t+\text{d}t}, z_t|z_1) = \frac{p_{t|1}(z_t|z_1)}{p_{t|1}(z_{t+\text{d}t}|z_1)} R^\text{DB}_t (z_{t}, z_{t+\text{d}t}|z_1).
\end{equation*}
Therefore, a straightforward approach is to assign the lower triangular entries of the rate matrix as $R^\text{DB}_t (z_{t}, z_{t+\text{d}t}|z_1) = p_{t|1}(z_{t+\text{d}t}|z_1)$, and the upper triangular entries as $R^\text{DB}_t (z_{t+\text{d}t}, z_t|z_1)= p_{t|1}(z_t|z_1)$. The diagonal entries are computed last to ensure that $R_t(z_t, z_{t}) = - \sum_{z_{t+\mathrm{d}t} \neq z_t} R_t(z_t, z_{t+\mathrm{d}t})$.  

\begin{figure}[t]
    \centering
    \includegraphics[width=0.95\textwidth]{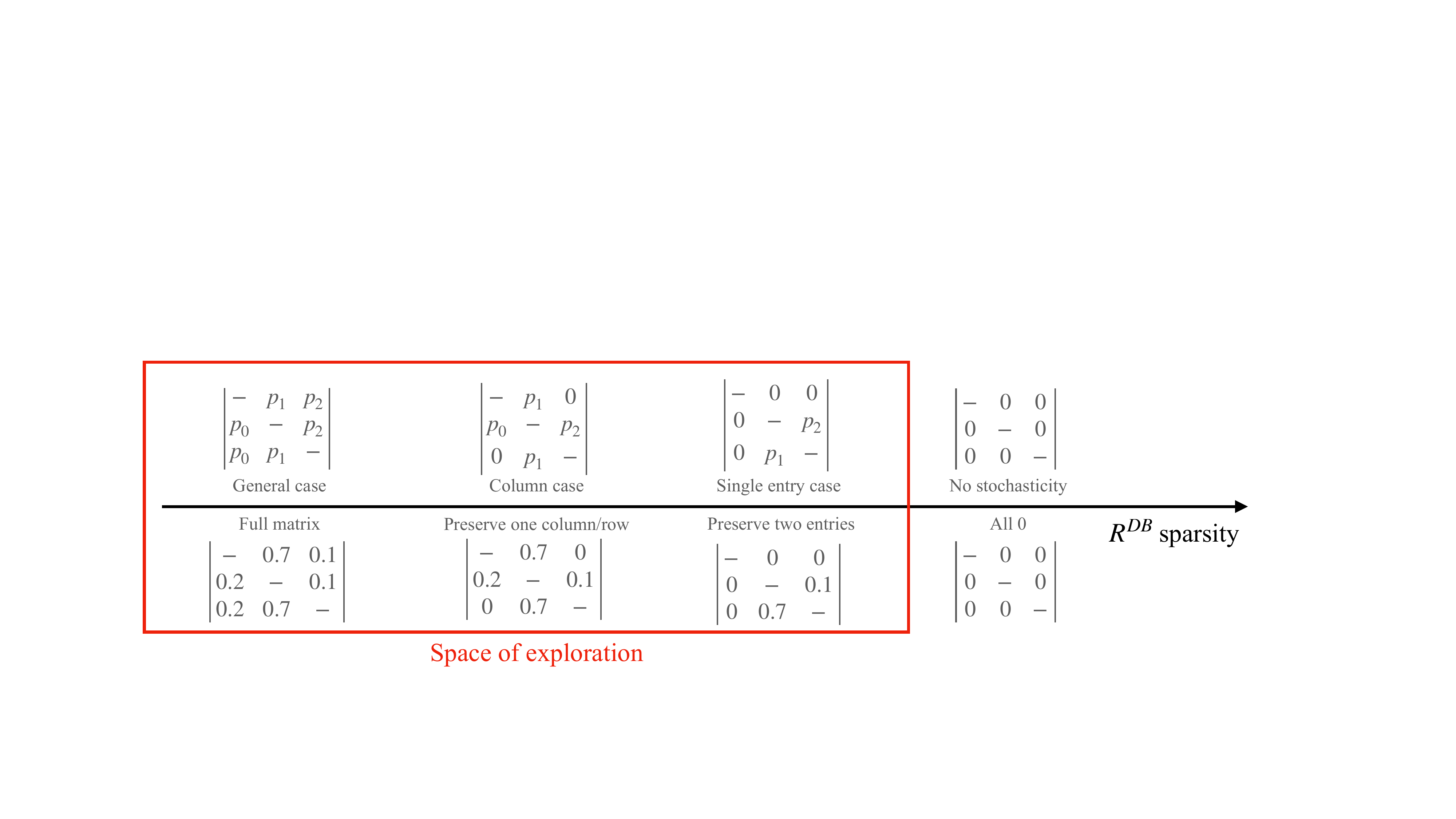}
    \caption{Examples of different rate matrices from the space of 3$\times$3 matrices that satisfy the detailed balance condition. Here $p_i$ denotes $p_{t|1}(i|z_1)$.}
    \label{fig:rdb_design}
\end{figure}

We incorporated various types of priors into $R^{\text{DB}}$ by preserving specific rows or entries in the matrix. Specifically, we experimented with retaining the column corresponding to the state with the highest marginal distribution (Column - Max Marginal), the column corresponding to the predicted $x_1$ states (Column - $x_1$), and the columns corresponding to the state with the highest probability in $p_{t|1}$. Additionally, we tested the approach of retaining only $R^{\text{DB}}(x_t, i)$ where $i$ is the state with the highest marginal distribution (Entry - Max Marginal). For instance, under the absorbing initial distribution, this state is the one to which all data is absorbed at $t = 0$.
We note that there remains significant space for exploration by adjusting the weights assigned to different positions within $R^{\text{DB}}$, as the only condition that must be satisfied is that symmetrical positions adhere to a specific proportionality. 
However, in practice, none of the specific designs illustrated in \Cref{fig:rdb_design} showed a clear advantage over others in the settings we evaluated. As a result, we chose the general case for our experiments, as it offers the most flexibility by incorporating the least prior knowledge.

\begin{figure*}[t]
    \centering
        \begin{subfigure}[b]{0.49\linewidth}
            \includegraphics[width=0.99\linewidth]{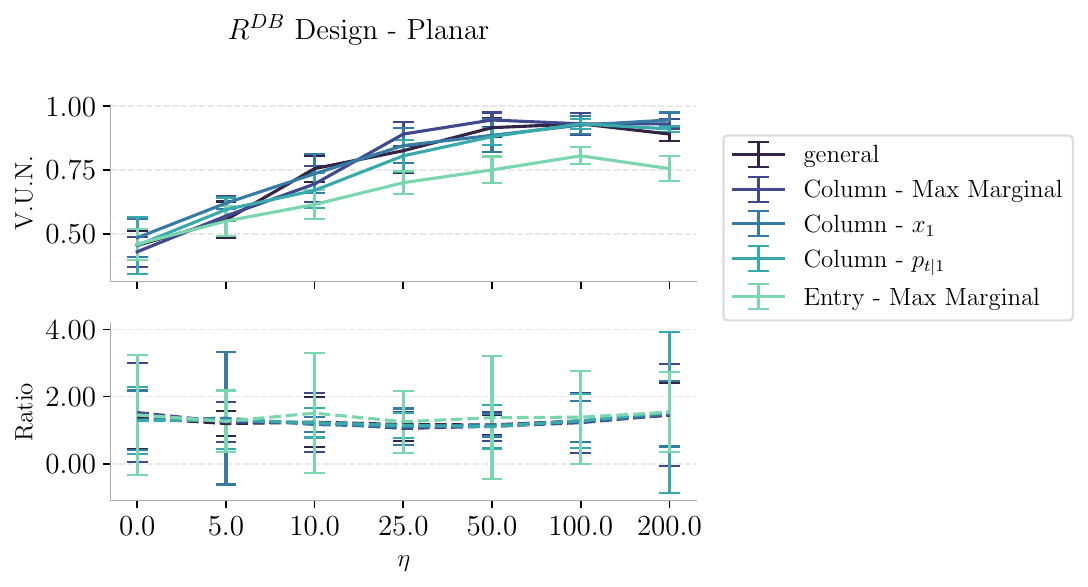}
            \label{fig:rdb_planar}
            \caption{Planar dataset.}  %
        \end{subfigure}
        \hfill
        \begin{subfigure}[b]{0.49\linewidth}
            \includegraphics[width=0.99\linewidth]{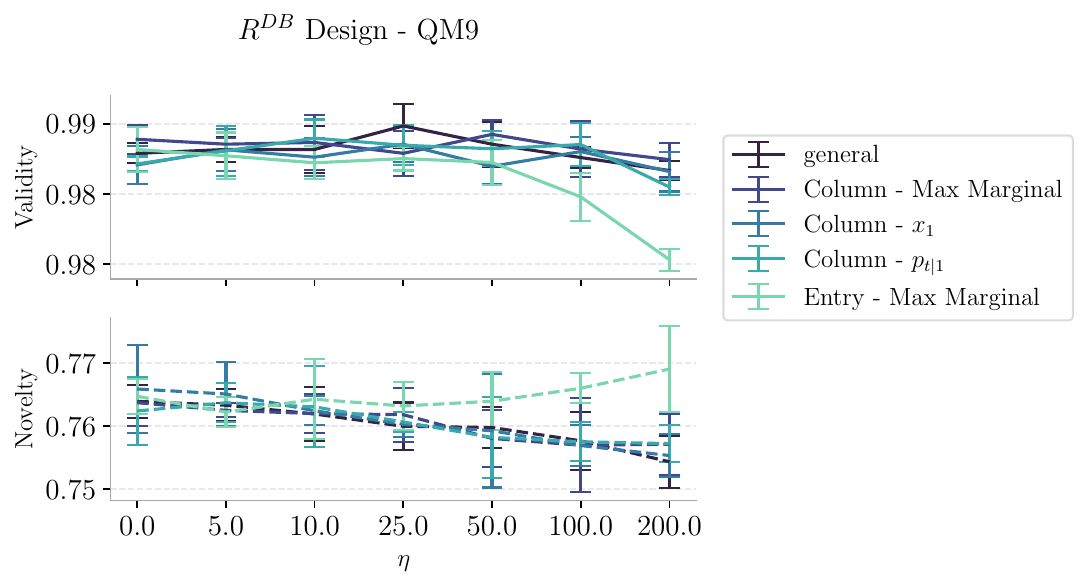}
            \label{fig:rdb_qm0_ratio}
            \caption{QM9 dataset.}  %
        \end{subfigure}
    \caption{Impact of $R^\mathrm{DB}$ with different level of sparsity.}
    \label{fig:search_rdb}
\end{figure*}

Orthogonal to the design of \(R^\text{DB}_t\), we must also consider the hyperparameter \(\eta\), which regulates the magnitude of stochasticity in the denoising process. Specifically, setting \(\eta = 0\) (thereby relying solely on \(R^*_t\)) minimizes the expected number of jumps throughout the denoising trajectory under certain conditions, as shown by \citet{campbell2024generative} in Proposition 3.4. However, in continuous diffusion models, some level of stochasticity has been demonstrated to enhance performance \citep{karras2022elucidating,cao2023exploring,xu2023restart}. Conversely, excessive stochasticity can negatively impact performance. \citet{campbell2024generative} propose that there exists an optimal level of stochasticity that strikes a balance between exploration and accuracy. In our experiments, we observed varied behaviors as \(\eta\) increases, resulting in different performance outcomes across datasets, as illustrated in \Cref{fig:influence_eta}.

\subsection{Hyperparameter Optimization Pipeline}
\label{app:sample_optimization_pipeline}

A significant advantage of flow matching methods is their inherently greater flexibility in the sampling process compared to diffusion models, as they are more disentangled from the training stage. Each of the proposed optimization strategies exposed in \Cref{sec:sample_optimization} expands the search space for optimal performance. However, conducting a full grid search across all those methodologies is impractical for the computational resources available. To address this challenge, our sampling optimization pipeline consists of, for each of the proposed optimization strategies, all hyperparameters are held constant at their default values except for the parameter controlling the chosen strategy, over which we perform a sweep. The optimal values obtained for each strategy are combined to form the final configuration. In \Cref{tab:hyperparam}, we present the final hyperparameter values obtained for each dataset. This pipeline is sufficient to achieve state-of-the-art performance, which reinforces the expressivity of DeFoG. We expect to achieve even better results if a more comprehensive search of the hyperparameter space was carried out.

To better mode detailedly illustrate the influence of each sampling optimization, we show in \Cref{fig:search_all_synthetic,fig:search_all_molecular} in present the impact of varying parameter values across the synthetic datasets \Cref{fig:search_all_synthetic} and molecular datasets in Figure \ref{fig:search_all_molecular} used in this work.

\begin{figure*}[t]
    \centering
    \begin{subfigure}[b]{0.99\linewidth}
        \centering
        \begin{subfigure}[b]{0.49\linewidth}
            \includegraphics[width=\linewidth]{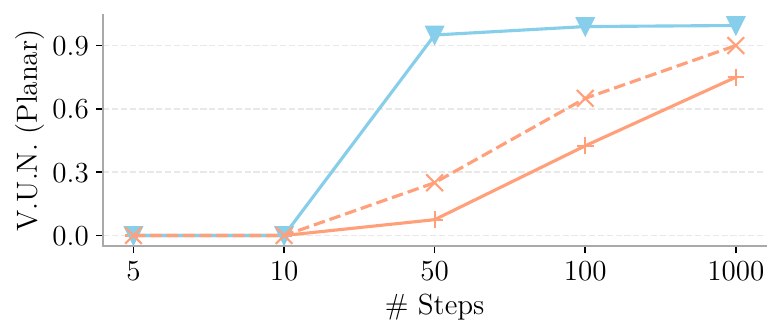}
            \label{fig:planar_val}
        \end{subfigure}
        \hfill
        \begin{subfigure}[b]{0.49\linewidth}
            \includegraphics[width=\linewidth]{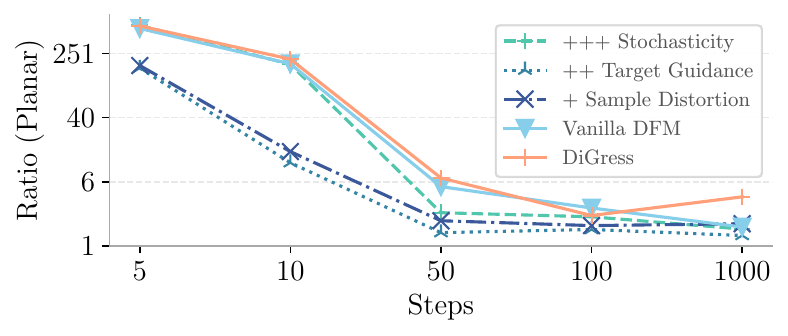}
            \label{fig:planar_ratio}
        \end{subfigure}
        \vspace{-20pt}
        \caption{Planar dataset.}  %
        \label{fig:step_opt_planar}
    \end{subfigure}
    \hfill
    \begin{subfigure}[b]{0.99\linewidth}
        \centering
        \begin{subfigure}[b]{0.49\linewidth}
            \includegraphics[width=\linewidth]{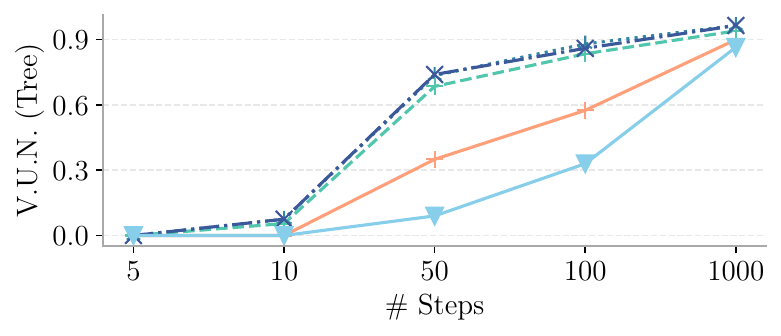}
            \label{fig:tree_val}
        \end{subfigure}
        \hfill
        \begin{subfigure}[b]{0.49\linewidth}
            \includegraphics[width=\linewidth]{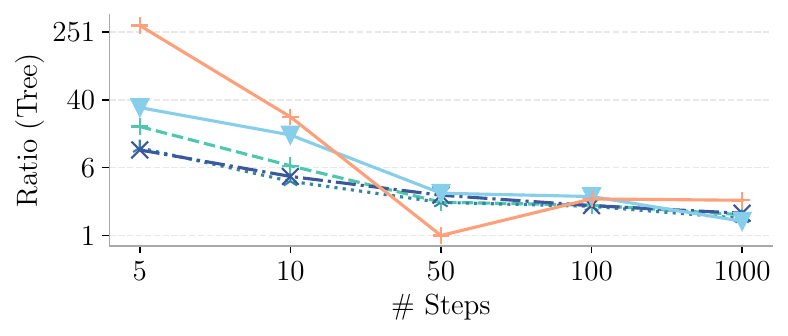}
            \label{fig:tree_ratio}
        \end{subfigure}
        \vspace{-20pt}
        \caption{Tree dataset.}  %
        \label{fig:step_opt_tree}
    \end{subfigure}
    \hfill
    \begin{subfigure}[b]{0.99\linewidth}
        \centering
        \begin{subfigure}[b]{0.49\linewidth}
            \includegraphics[width=\linewidth]{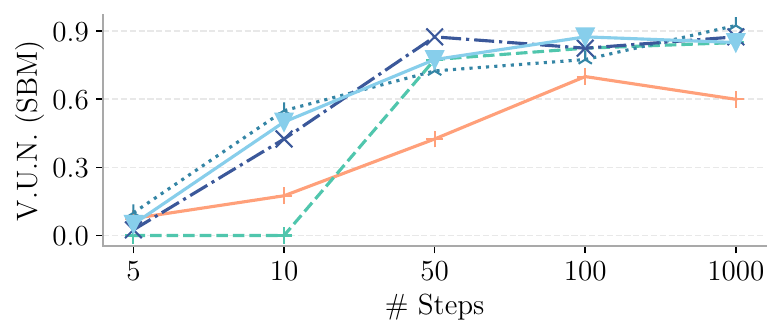}
            \label{fig:sbm_val}
        \end{subfigure}
        \hfill
        \begin{subfigure}[b]{0.49\linewidth}
            \includegraphics[width=\linewidth]{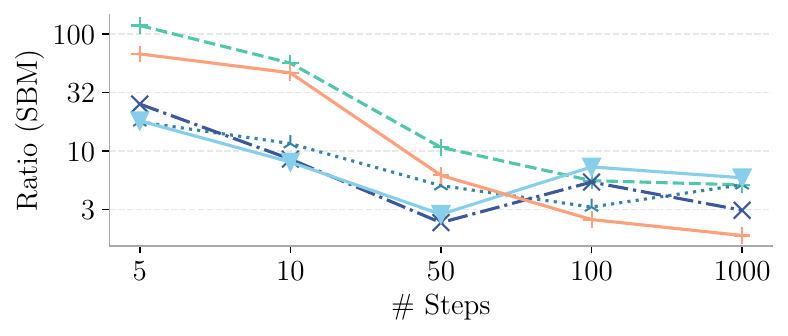}
            \label{fig:sbm_ratio}
        \end{subfigure}
        \vspace{-20pt}
        \caption{SBM dataset.}  %
        \label{fig:step_opt_sbm}
    \end{subfigure}
    \hfill
    \caption{Sampling efficiency improvement over all synthetic datasets.}
    \label{fig:search_all_synthetic}
\end{figure*}

\begin{figure*}[t]
    \centering
    \begin{subfigure}[b]{0.999\linewidth}
        \centering
        \begin{subfigure}[b]{0.49\linewidth}
            \includegraphics[width=\linewidth]{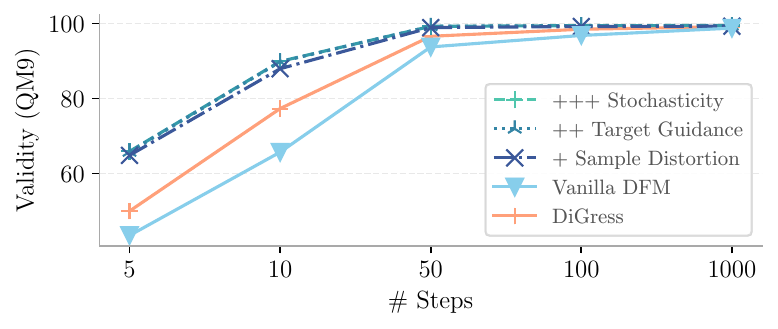}
            \label{fig:qm9_val}
        \end{subfigure}
        \hfill
        \begin{subfigure}[b]{0.49\linewidth}
            \includegraphics[width=\linewidth]{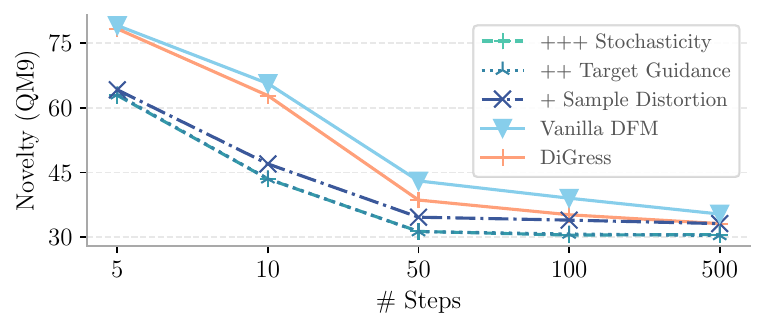}
            \label{fig:qm9_ratio}
        \end{subfigure}
        \vspace{-20pt}
        \caption{QM9 dataset.}  %
        \label{fig:step_opt_qm9}
    \end{subfigure}
    \hfill
    \begin{subfigure}[b]{0.99\linewidth}
        \centering
        \begin{subfigure}[b]{0.49\linewidth}
            \includegraphics[width=\linewidth]{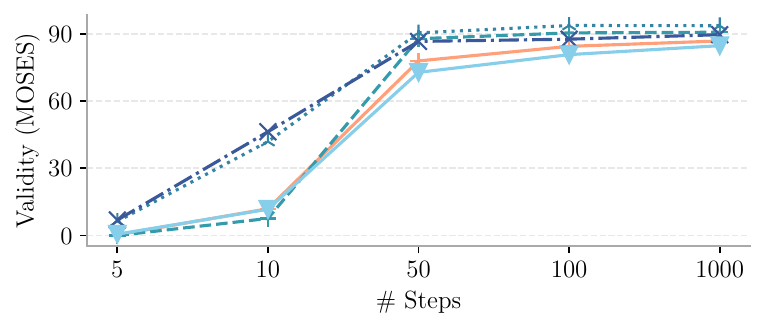}
            \label{fig:moses_val}
        \end{subfigure}
        \hfill
        \begin{subfigure}[b]{0.49\linewidth}
            \includegraphics[width=\linewidth]{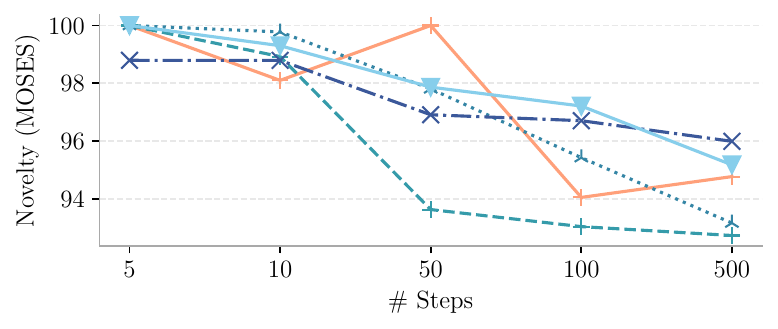}
            \label{fig:moses_ratio}
        \end{subfigure}
        \vspace{-20pt}
        \caption{MOSES dataset.}  %
        \label{fig:step_opt_moses}
    \end{subfigure}
    \hfill
    \begin{subfigure}[b]{0.99\linewidth}
        \centering
        \begin{subfigure}[b]{0.49\linewidth}
            \includegraphics[width=\linewidth]{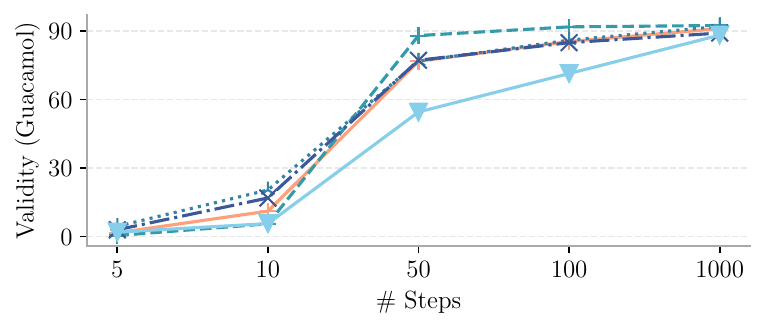}
            \label{fig:guacamol_val}
        \end{subfigure}
        \hfill
        \begin{subfigure}[b]{0.49\linewidth}
            \includegraphics[width=\linewidth]{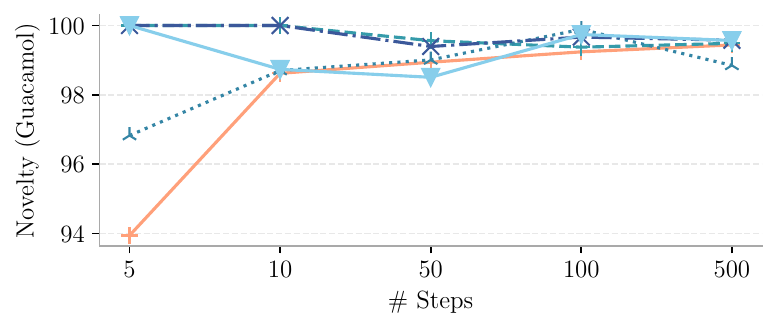}
            \label{fig:guacamol_ratio}
        \end{subfigure}
        \vspace{-20pt}
        \caption{Guacamol dataset.}  %
        \label{fig:step_opt_guacamol}
    \end{subfigure}
    \hfill
    \caption{Sampling efficiency improvement over all molecular datasets. 
    The Guacamol and MOSES datasets are evaluated with 2,000 samples, and validity and novelty are computed using local implementations for efficiency instead of the original benchmarks.
    For the Guacamol dataset, we apply validity metrics that account for charged molecules, reflecting the dataset's intrinsic characteristic of containing a significant number of such molecules.
    }
    \label{fig:search_all_molecular}
\end{figure*}

While Figures \ref{fig:search_all_synthetic} and \ref{fig:search_all_molecular} are highly condensed, we provide a more fine-grained version that specifically illustrates the influence of the hyperparameters $\eta$ and $\omega$. This version highlights their impact when generating with the full number of steps (500 and 1000 for molecular and synthetic data, respectively) and with 50 steps. As emphasized in Figures \ref{fig:influence_omega} and \ref{fig:influence_eta}, the influence of these hyperparameters varies across datasets and exhibits distinct behaviors depending on the number of steps used. 

Several key observations can be made here. First, since the stochasticity is designed around the detailed balance condition, which holds more rigorously with increased precision, it generally provides greater benefits with the full generation steps but leads to a more pronounced performance decrease when generating with only 50 steps. Additionally, for datasets such as Planar, MOSES, and Guacamol, the stochasticity shows an increasing-then-decreasing behavior, indicating the presence of an optimal value. Furthermore, while target guidance significantly improves validity across different datasets, it can negatively affect novelty and the average ratio when set too high. This suggests that excessive target guidance may promote overfitting to high-probability regions of the training set, distorting the overall distribution.
In conclusion, each hyperparameter should be carefully chosen based on the specific objective.

\begin{figure*}[t]
    \centering
    \begin{subfigure}[b]{0.99\linewidth}
        \includegraphics[width=\linewidth]{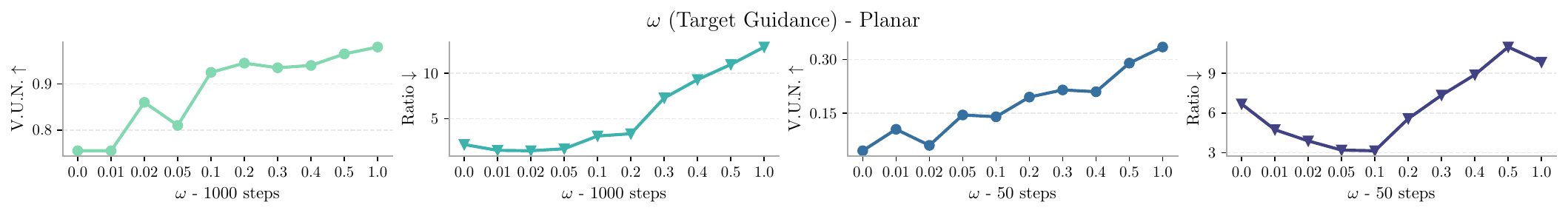}
        \label{fig:planar_omega_influence}
    \end{subfigure}

    \centering
    \begin{subfigure}[b]{0.99\linewidth}
        \includegraphics[width=\linewidth]{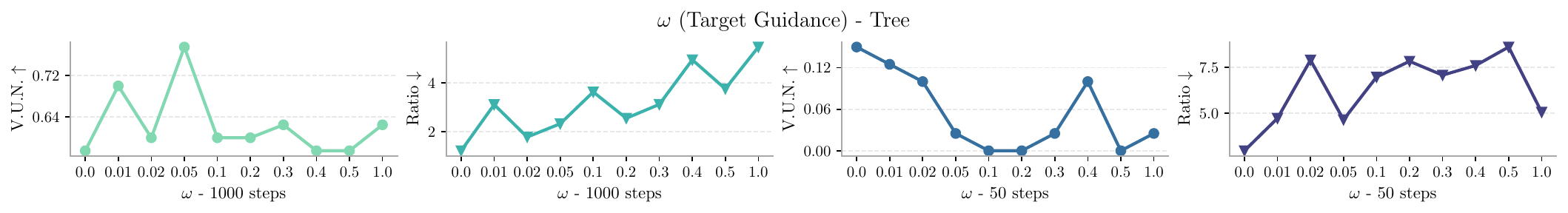}
        \label{fig:tree_omega_influence}
    \end{subfigure}

    \centering
    \begin{subfigure}[b]{0.99\linewidth}
        \includegraphics[width=\linewidth]{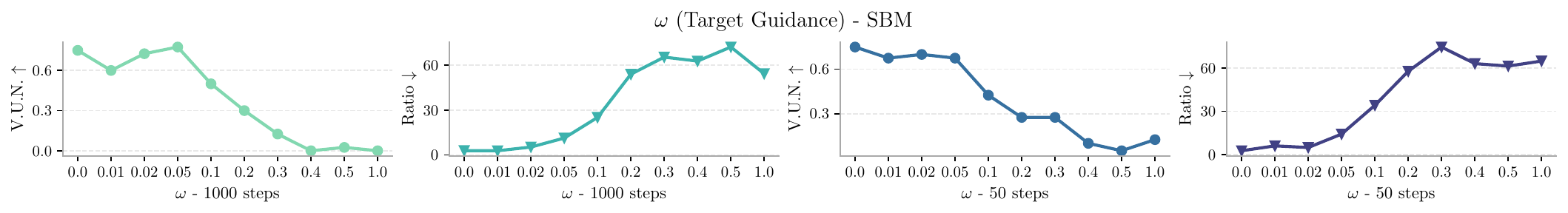}
        \label{fig:sbm_omega_influence}
    \end{subfigure}

    \centering
    \begin{subfigure}[b]{0.99\linewidth}
        \includegraphics[width=\linewidth]{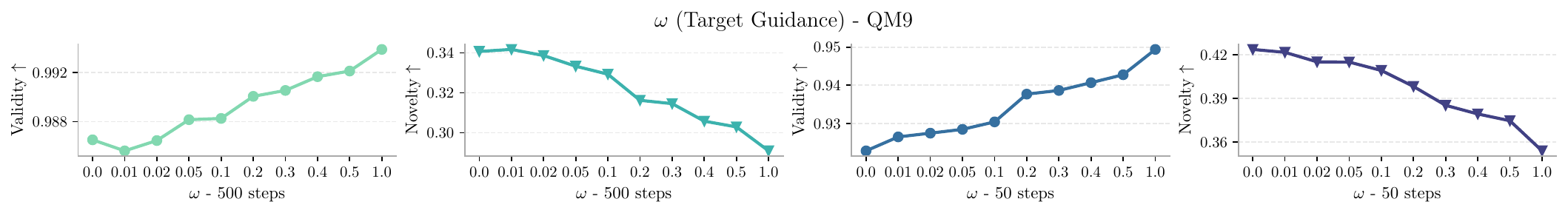}
        \label{fig:qm9_omega_influence}
    \end{subfigure}

    \centering
    \begin{subfigure}[b]{0.99\linewidth}
        \includegraphics[width=\linewidth]{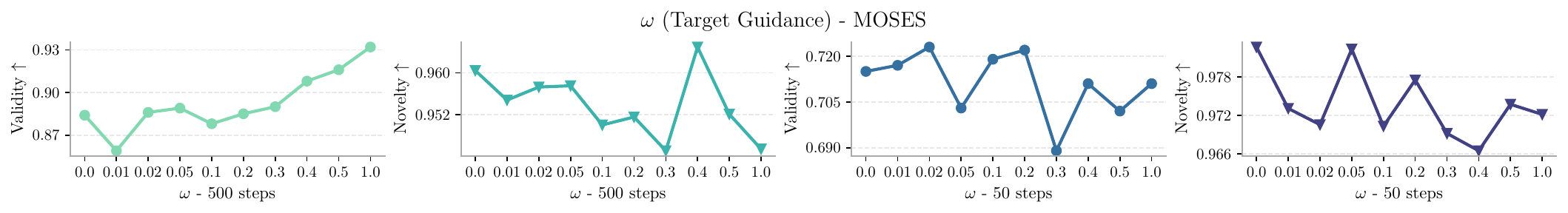}
        \label{fig:moses_omega_influence}
    \end{subfigure}

    \centering
    \begin{subfigure}[b]{0.99\linewidth}
        \includegraphics[width=\linewidth]{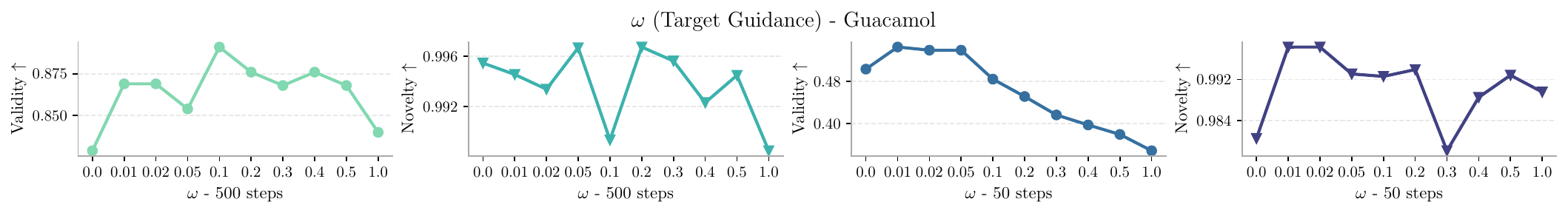}
        \label{fig:guaca_omega_influence}
    \end{subfigure}

    \hfill
    \caption{Influence of target guidance over all datasets.}
    \label{fig:influence_omega}
\end{figure*}

\begin{figure*}[t]
    \centering
    \begin{subfigure}[b]{0.99\linewidth}
        \includegraphics[width=\linewidth]{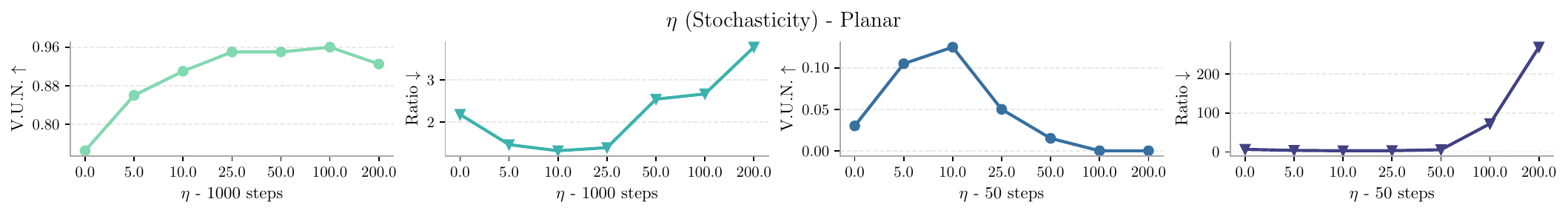}
        \label{fig:planar_eta_influence}
    \end{subfigure}

    \centering
    \begin{subfigure}[b]{0.99\linewidth}
        \includegraphics[width=\linewidth]{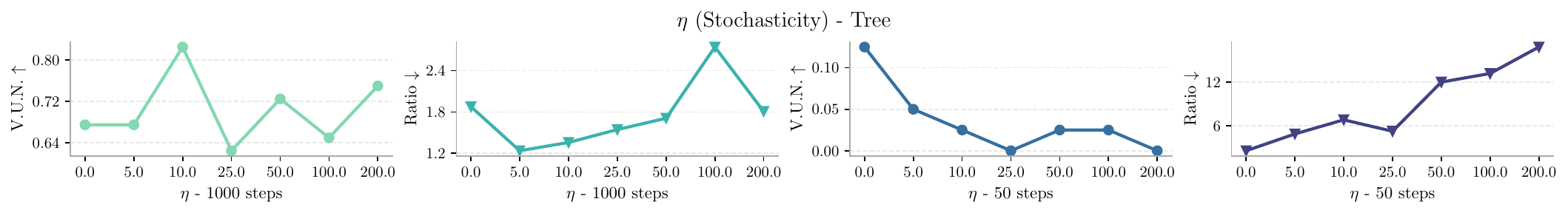}
        \label{fig:tree_eta_influence}
    \end{subfigure}

    \centering
    \begin{subfigure}[b]{0.99\linewidth}
        \includegraphics[width=\linewidth]{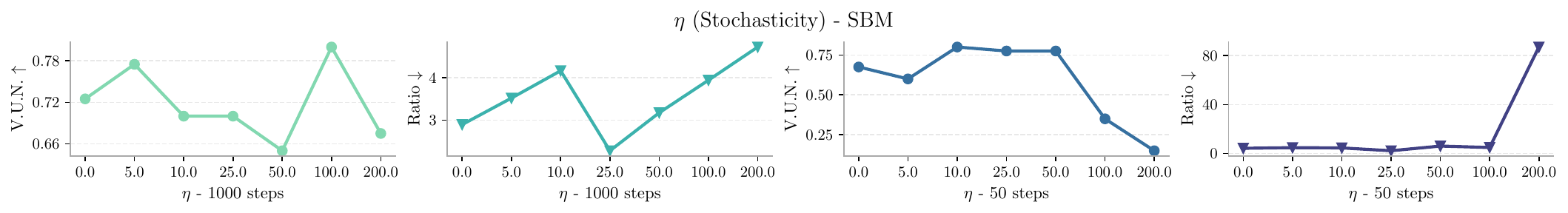}
        \label{fig:sbm_eta_influence}
    \end{subfigure}

    \centering
    \begin{subfigure}[b]{0.99\linewidth}
        \includegraphics[width=\linewidth]{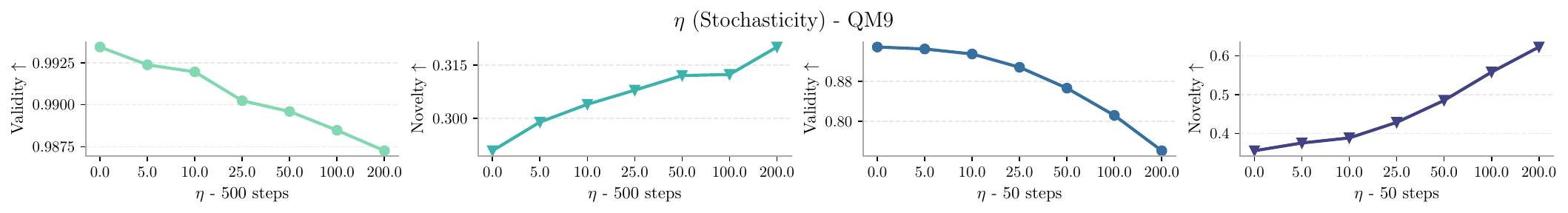}
        \label{fig:qm9_eta_influence}
    \end{subfigure}

    \centering
    \begin{subfigure}[b]{0.99\linewidth}
        \includegraphics[width=\linewidth]{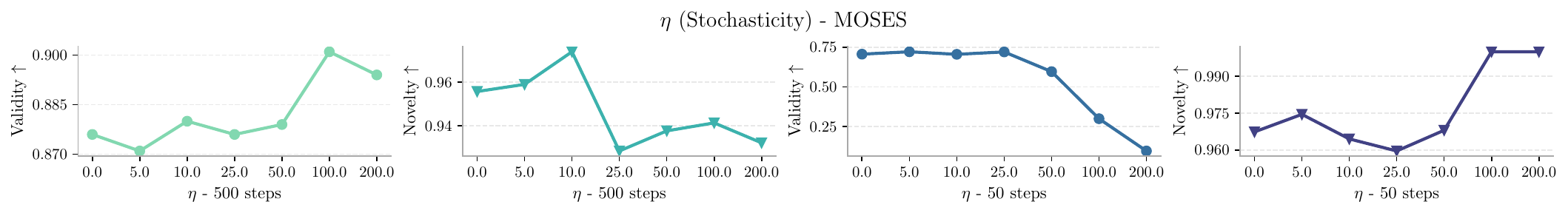}
        \label{fig:moses_eta_influence}
    \end{subfigure}

    \centering
    \begin{subfigure}[b]{0.99\linewidth}
        \includegraphics[width=\linewidth]{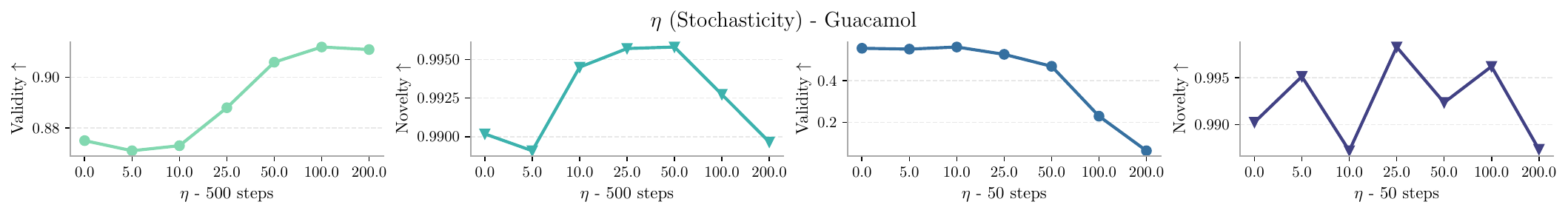}
        \label{fig:guaca_eta_influence}
    \end{subfigure}

    \hfill
    \caption{Influence of stochasticity level over all datasets.}
    \label{fig:influence_eta}
\end{figure*}

To demonstrate the benefit of each designed optimization step, we report the step-wise improvements by sequentially adding each tuned step across the primary datasets — synthetic datasets in Figure \ref{fig:step_opt_all_synthetic} and molecular datasets in Figure \ref{fig:step_opt_all_molecular} — used in this work.

\begin{figure*}[t]
    \centering
    \begin{subfigure}[b]{0.99\linewidth}
        \centering
        \begin{subfigure}[b]{0.32\linewidth}
            \includegraphics[width=\linewidth]{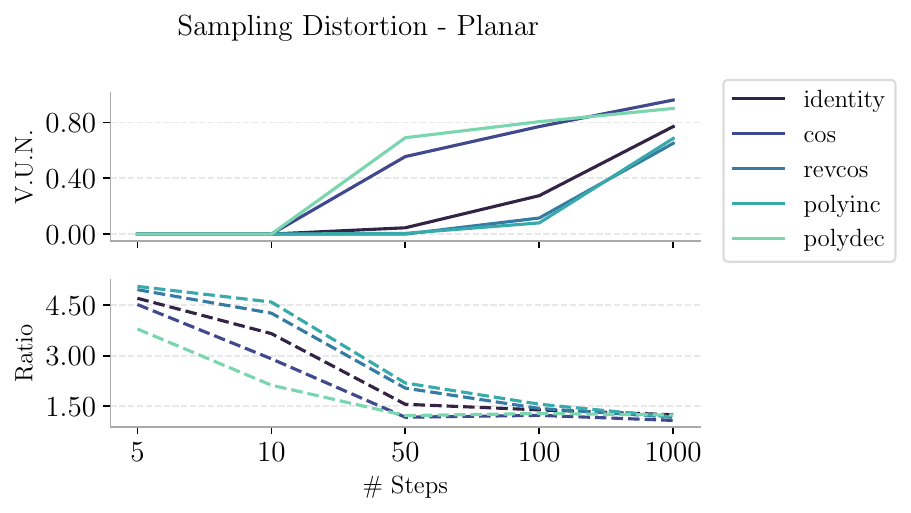}
        \end{subfigure}
        \hfill
        \begin{subfigure}[b]{0.32\linewidth}
            \includegraphics[width=\linewidth]{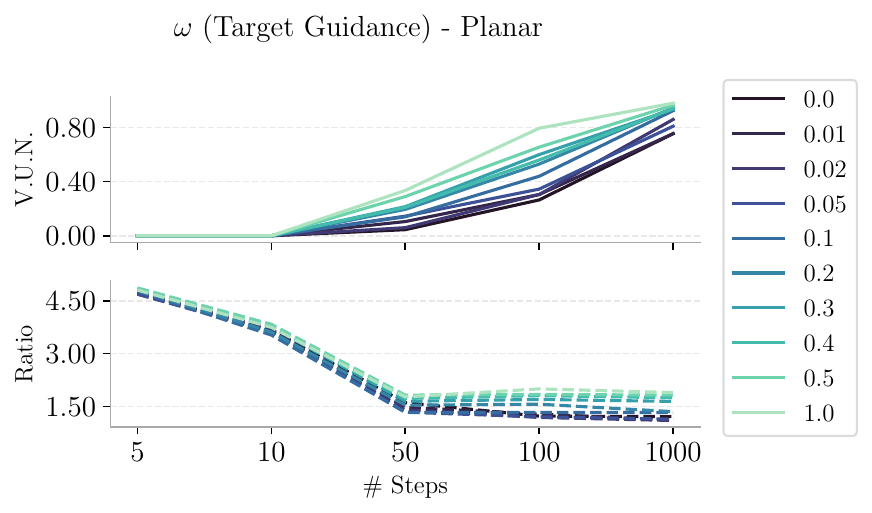}
        \end{subfigure}
        \hfill
        \begin{subfigure}[b]{0.32\linewidth}
            \includegraphics[width=\linewidth]{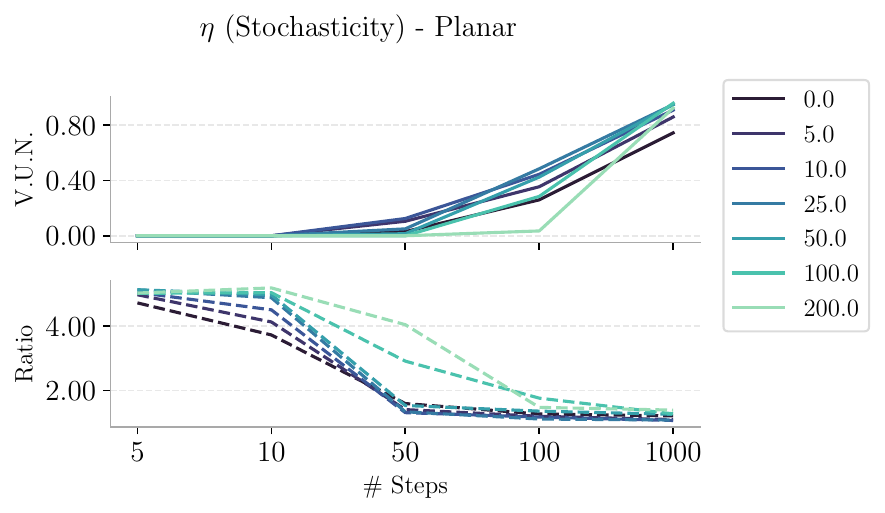}
        \end{subfigure}
        \caption{Planar dataset.}  %
        \label{fig:search_planar}
    \end{subfigure}
    \hfill
    \begin{subfigure}[b]{0.99\linewidth}
        \centering
        \begin{subfigure}[b]{0.32\linewidth}
            \includegraphics[width=\linewidth]{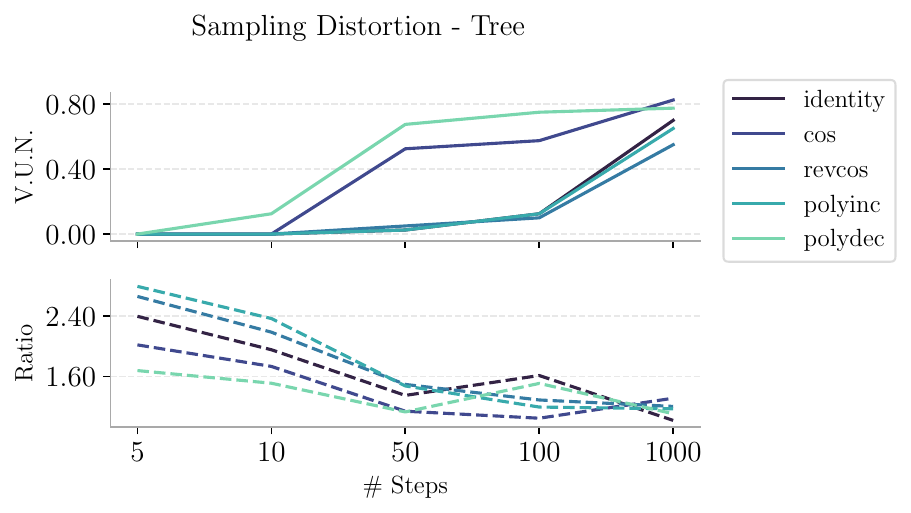}
        \end{subfigure}
        \hfill
        \begin{subfigure}[b]{0.32\linewidth}
            \includegraphics[width=\linewidth]{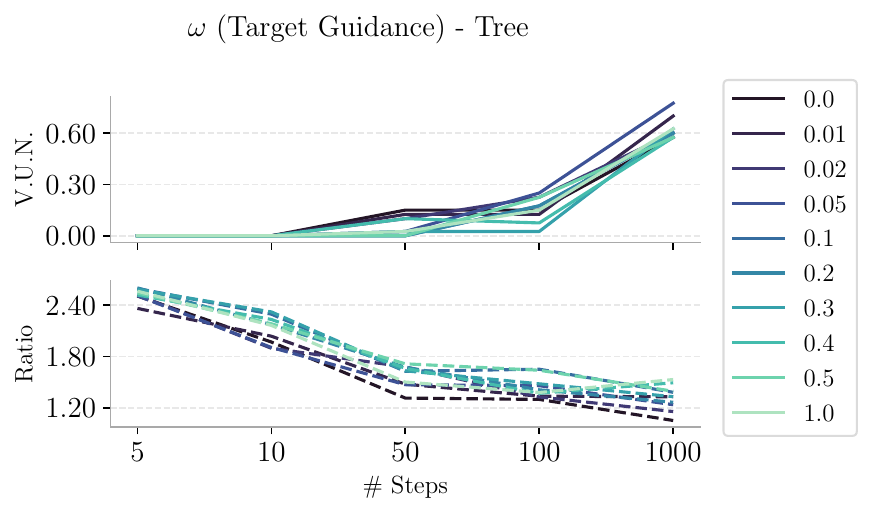}
        \end{subfigure}
        \hfill
        \begin{subfigure}[b]{0.32\linewidth}
            \includegraphics[width=\linewidth]{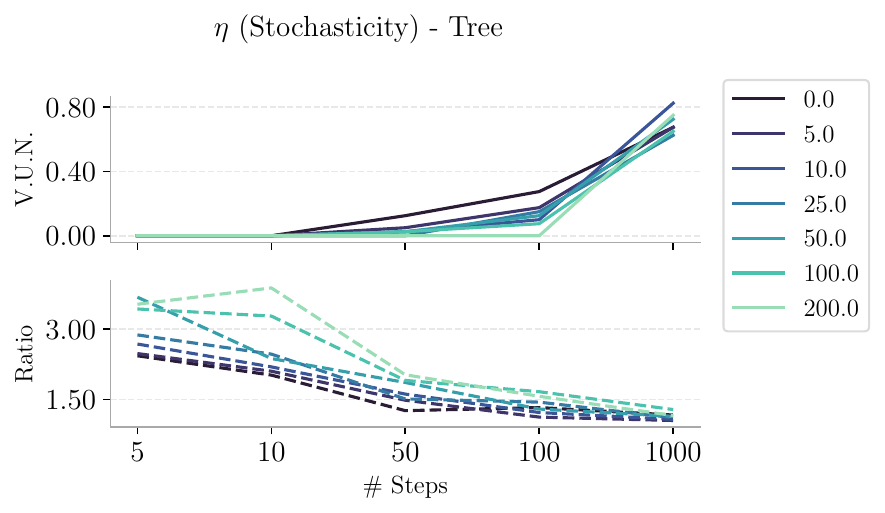}
        \end{subfigure}
        \caption{Tree dataset.}  %
        \label{fig:search_tree}
    \end{subfigure}
    \hfill
    \begin{subfigure}[b]{0.99\linewidth}
        \centering
        \begin{subfigure}[b]{0.32\linewidth}
            \includegraphics[width=\linewidth]{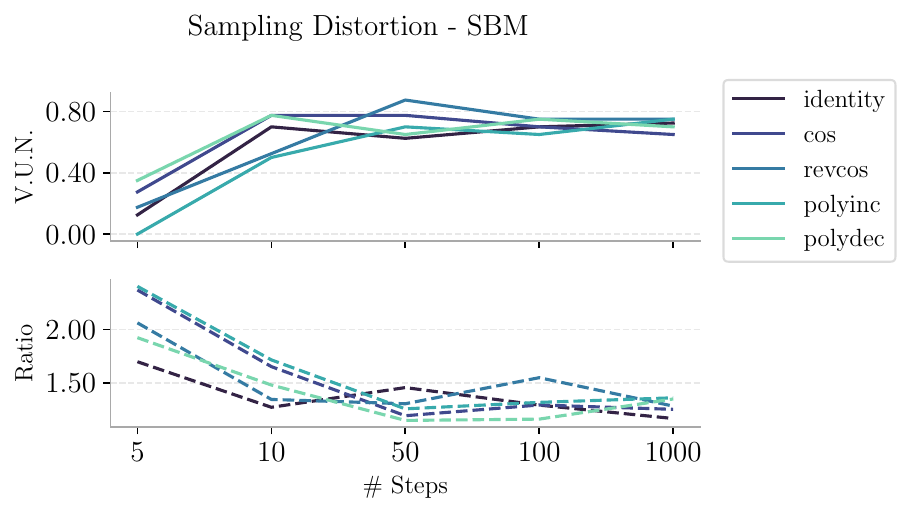}
        \end{subfigure}
        \hfill
        \begin{subfigure}[b]{0.32\linewidth}
            \includegraphics[width=\linewidth]{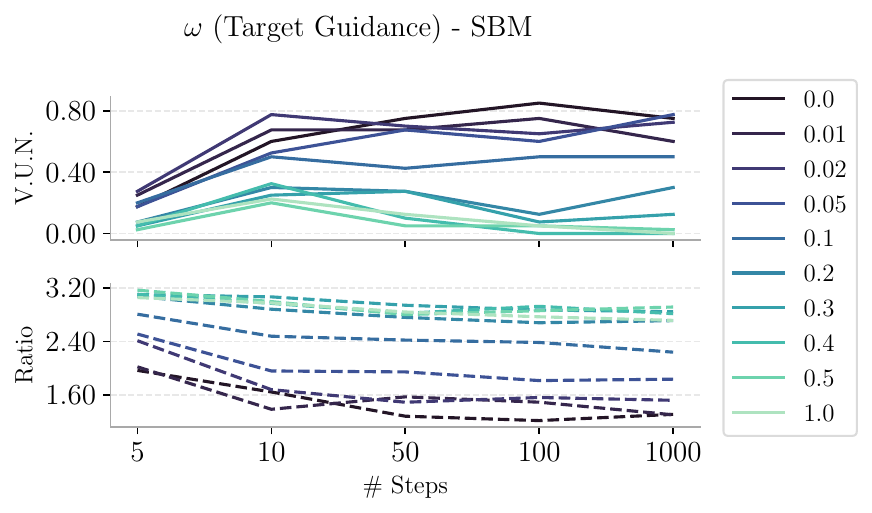}
        \end{subfigure}
        \hfill
        \begin{subfigure}[b]{0.32\linewidth}
            \includegraphics[width=\linewidth]{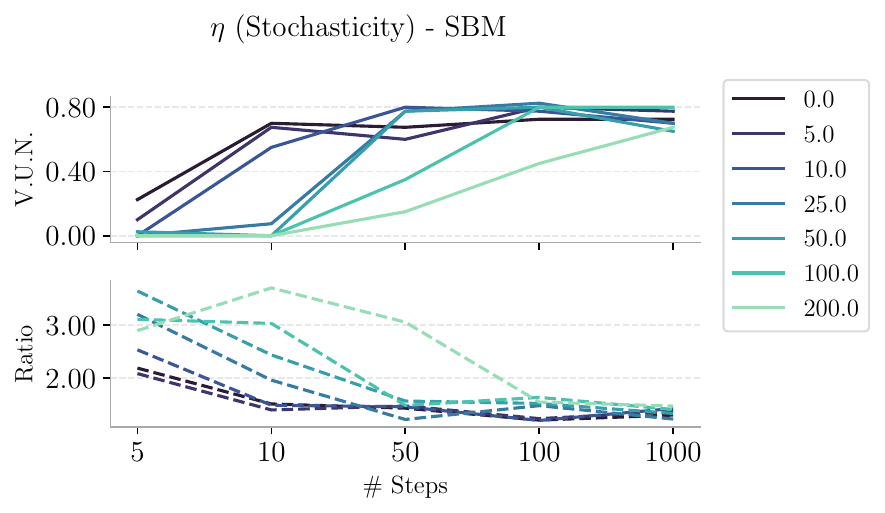}
        \end{subfigure}
        \caption{SBM dataset.}  %
        \label{fig:search_sbm}
    \end{subfigure}
    \caption{Stepwise parameter search for sampling optimization across synthetic datasets.}
    \label{fig:step_opt_all_synthetic}
\end{figure*}

\begin{figure*}[t]
    \centering
    \begin{subfigure}[b]{0.99\linewidth}
        \centering
        \begin{subfigure}[b]{0.32\linewidth}
            \includegraphics[width=\linewidth]{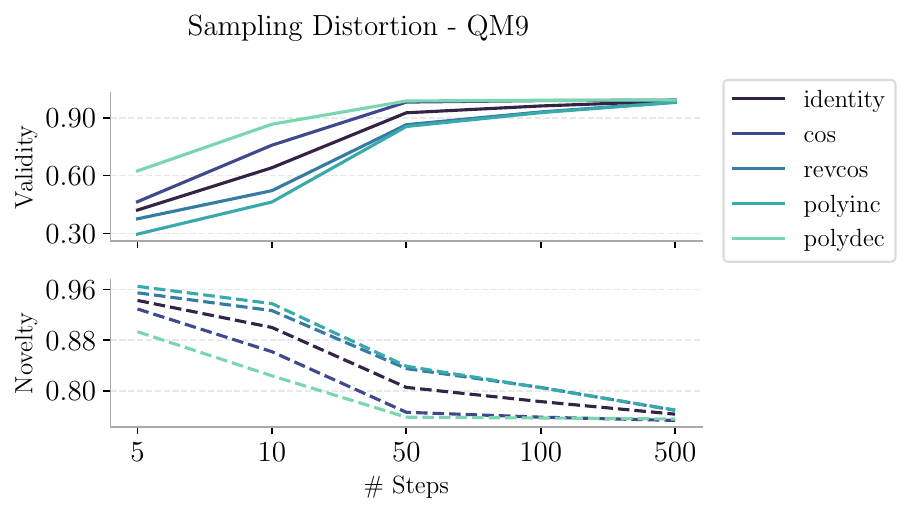}
        \end{subfigure}
        \hfill
        \begin{subfigure}[b]{0.32\linewidth}
            \includegraphics[width=\linewidth]{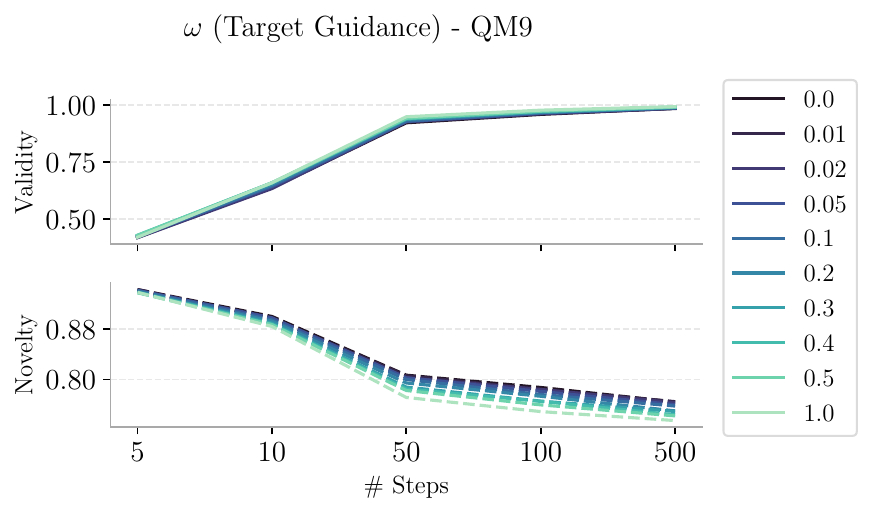}
        \end{subfigure}
        \hfill
        \begin{subfigure}[b]{0.32\linewidth}
            \includegraphics[width=\linewidth]{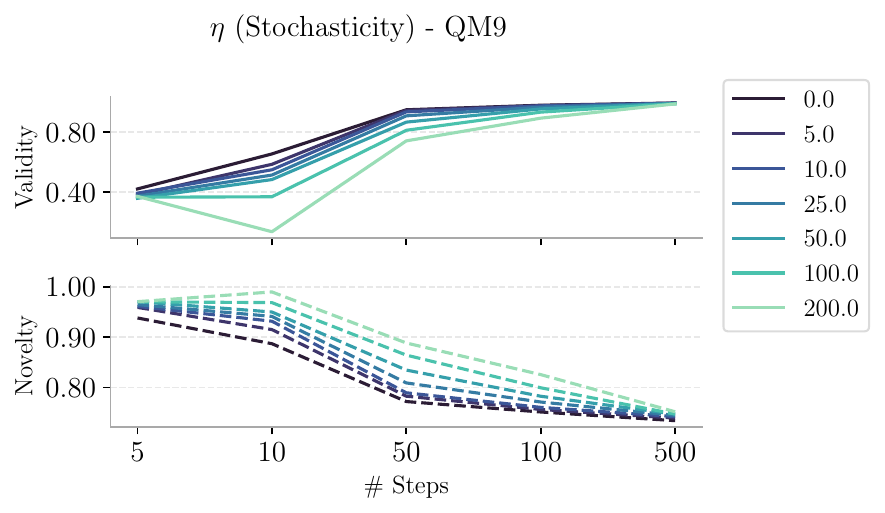}
        \end{subfigure}
        \caption{QM9 dataset.}  %
        \label{fig:search_qm9}
    \end{subfigure}
    \hfill
    \begin{subfigure}[b]{0.99\linewidth}
        \centering
        \begin{subfigure}[b]{0.32\linewidth}
            \includegraphics[width=\linewidth]{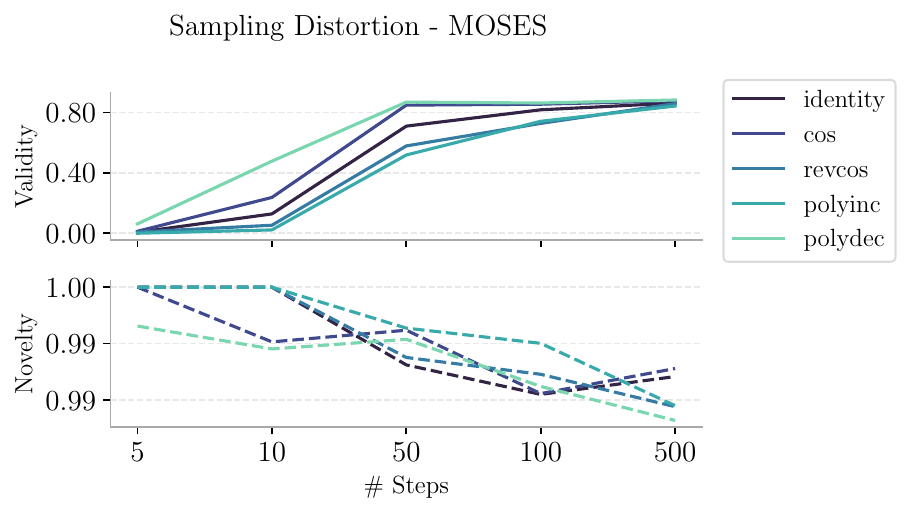}
        \end{subfigure}
        \hfill
        \begin{subfigure}[b]{0.32\linewidth}
            \includegraphics[width=\linewidth]{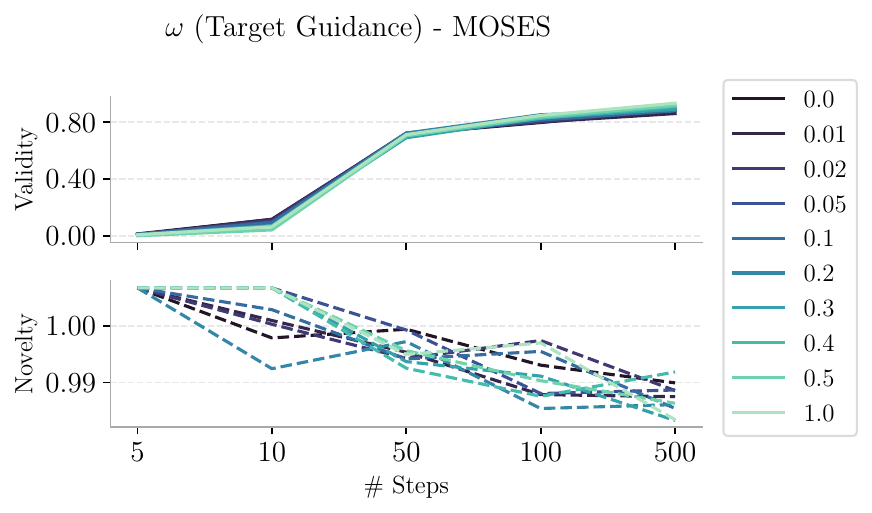}
        \end{subfigure}
        \hfill
        \begin{subfigure}[b]{0.32\linewidth}
            \includegraphics[width=\linewidth]{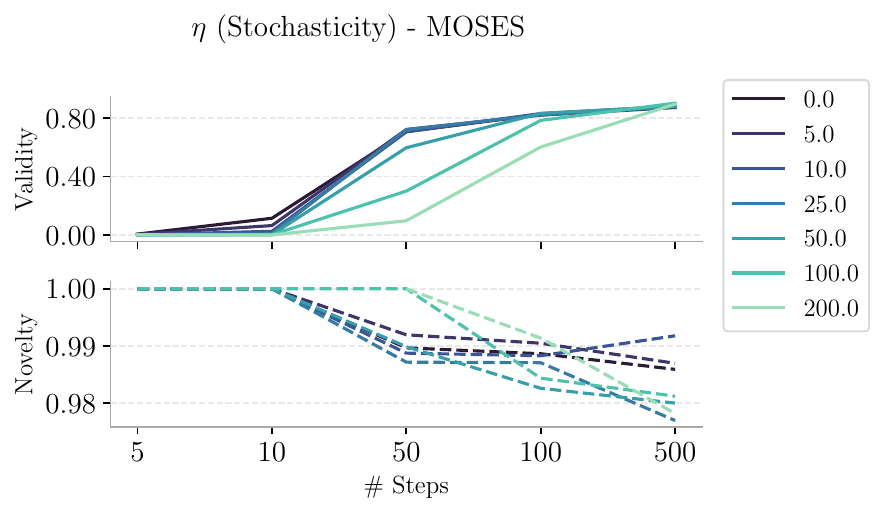}
        \end{subfigure}
        \caption{MOSES dataset.}  %
        \label{fig:search_moses}
    \end{subfigure}
    \hfill
    \begin{subfigure}[b]{0.99\linewidth}
        \centering
        \begin{subfigure}[b]{0.32\linewidth}
            \includegraphics[width=\linewidth]{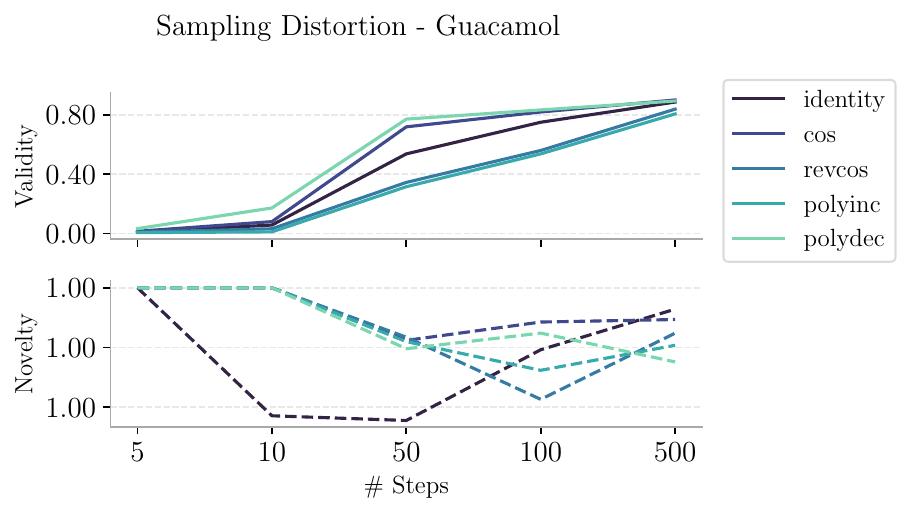}
        \end{subfigure}
        \hfill
        \begin{subfigure}[b]{0.32\linewidth}
            \includegraphics[width=\linewidth]{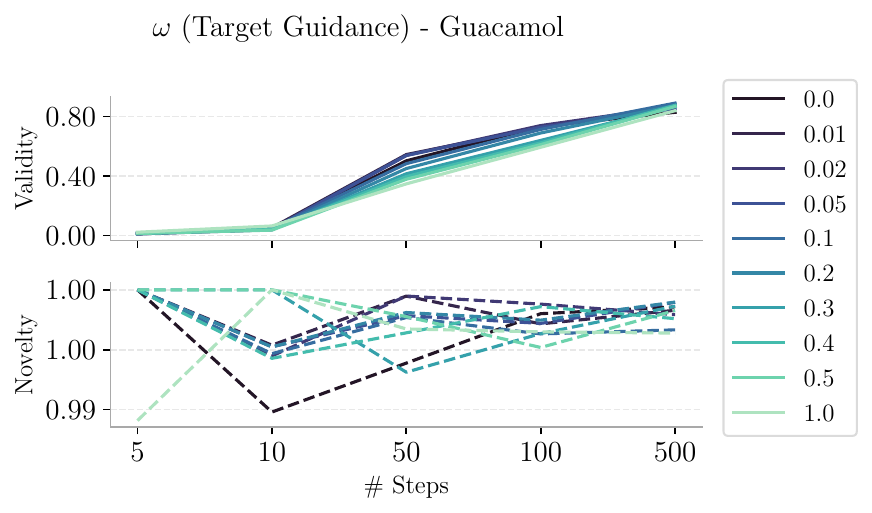}
        \end{subfigure}
        \hfill
        \begin{subfigure}[b]{0.32\linewidth}
            \includegraphics[width=\linewidth]{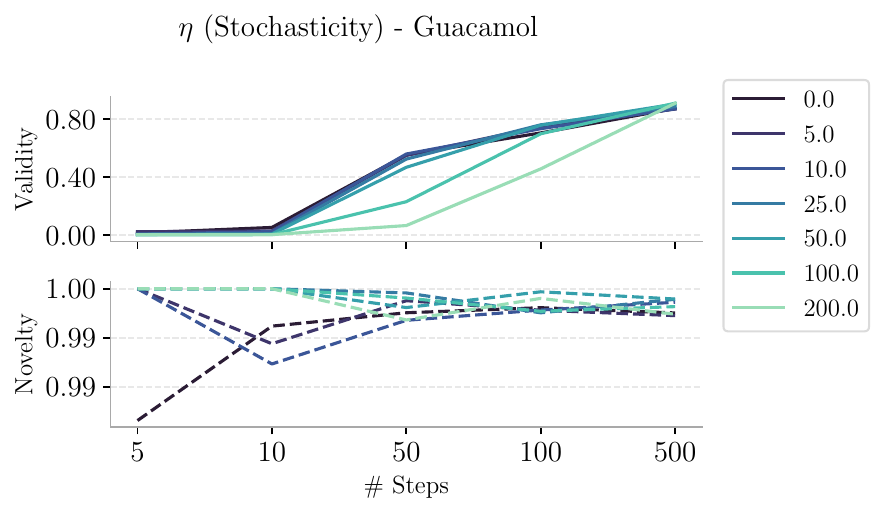}
        \end{subfigure}
        \caption{Guacamol dataset.}  %
        \label{fig:search_guacamol}
    \end{subfigure}
    \hfill
    \caption{Stepwise parameter search for sampling optimization across molecular datasets.}
    \label{fig:step_opt_all_molecular}
\end{figure*}

\subsection{Understanding the Sampling Dynamics of \texorpdfstring{$R^*_t$}{R*t}}\label{app:understanding_sampling_dynamics_rstar}

In this section, we aim to provide deeper intuition into the sampling dynamics imposed by the design of $R^{*}_t$, as proposed by \cite{campbell2024generative}. The explicit formulation of $R^{*}_t$ can be found in \Cref{eq:r_star}. Notably, the denominator in the expression serves as a normalizing factor, meaning the dynamics of each sampling step are primarily influenced by the values in the numerator. Specifically, we observe the following relationship: 
\begin{equation*} 
    \partial_t p_{t|1} (z_t|z_1) = \delta(z_t, z_1) + p_0(z_t), 
\end{equation*} 
derived by directly differentiating \Cref{eq:noising_process}. Based on this, the possible values of $R^*_t$ for different combinations of $z_t$ and $z_{t + \mathrm{d}t}$ are outlined in \Cref{tab:rstar_marginal}.

\vspace{4pt}
\begin{table*}[ht]
\centering
\caption{
Values of the numerator of $R^*_t$ for different $z_t$ and $z_{t + \mathrm{d}t}$.
}
\vspace{-4pt}
\label{tab:rstar_marginal}
\resizebox{\textwidth}{!}{
\begin{small}
\begin{sc}
\begin{tabular}{lll}
\toprule
Condition & Numerator of $R^*(x_t, j|z_1)$ & Intuition \\
\midrule
$z_t = z_1$, $z_{t + \mathrm{d}t}= z_1$ & $\operatorname{ReLU}(p_0(z_1)-p_0(z_1))=0$ & No transition \\
$z_t = z_1$, $z_{t + \mathrm{d}t}\neq z_1$ & $\operatorname{ReLU}(p_0(z_t) -1-p_0(z_{t + \mathrm{d}t})) = 0$ & No transition \\
$z_t \neq z_1$, $z_{t + \mathrm{d}t}= z_1$ & $\operatorname{ReLU}(p_0(z_t)-p_0(z_{t + \mathrm{d}t})+1)>0$ & Transition to $z_1$ \\
$z_t\neq z_1$, $z_{t + \mathrm{d}t}\neq z_1$ & $\operatorname{ReLU}(p_0(z_t)-p_0(z_{t + \mathrm{d}t}))$ & Transition to  $z_{t + \mathrm{d}t}$ If $p_0(z_t) > p_0(z_{t + \mathrm{d}t})$ \\
\bottomrule
\end{tabular}
\end{sc}
\end{small}
}
\end{table*}
\vspace{4pt}

From the first two lines of \Cref{tab:rstar_marginal}, we observe that once the system reaches the predicted state $z_1$, it remains there. If not, $R^*_t$ only encourages transitions to other states under two conditions: either the target state is $z_1$ (third line), or the corresponding entries in the initial distribution for potential next states have smaller values than the current state (fourth line). As a result, the sampling dynamics are heavily influenced by the initial distribution, as discussed further in \Cref{app:initial_distributions}.

For instance, with the masking distribution, the fourth line facilitates transitions to states other than the virtual ``mask" state, whereas for the uniform distribution, no transitions are allowed. For the marginal distribution, transitions are directed toward less likely states. 
Note that while these behaviors hold when the rate matrix consists solely of $R^*_t$, additional transitions can be introduced through $R^\mathrm{DB}_t$ (as detailed in \Cref{app:detailed_balance}) or by applying target guidance (see \Cref{app:target_guidance}).

\subsection{Performance Improvement for Undertrained Models}\label{app:undertrained}

In this section, we present the performance of a model trained on the QM9 dataset and the Planar dataset using only $30\%$ of the epochs compared to the final model being reported. We employ the same hyperparameters as in \Cref{tab:qm9_results} and \Cref{tab:synthetic_graphs_vun_ratio} for the sampling setup, as reported in \Cref{tab:hyperparam}.

Compared to fully trained models, our model achieves 99.0 Validity (\emph{vs} 99.3) and 96.4 Uniqueness (\emph{vs} 96.3) on the QM9 dataset. For the Planar dataset, it attains 95.5 Validity (\emph{vs} 99.5) and an average ratio of 1.4 (\emph{vs} 1.6). These results demonstrate that, even with significantly fewer training epochs, the model maintains competitive performance under a well-designed sampling procedure, although extended training can still further improve performance. Notably, all metrics surpass the discrete-time diffusion benchmark DiGress \citep{vignac2022digress}. As a result, the optimization in the sampling stage proves particularly beneficial when computational resources are limited, by enhancing the performance of an undertrained model.

\clearpage
\section{Train Optimization}\label{app:train_opt}

In this section, we provide a more detailed analysis of the influence of the various training optimization strategies introduced in \Cref{sec:method_graphs}. In \Cref{app:initial_distributions}, we empirically demonstrate the impact of selecting different initial distributions on performance, while in \Cref{app:ce_tracker}, we examine the interaction between training and sampling optimization.

\subsection{Initial Distributions}\label{app:initial_distributions}

Under DeFoG's framework, the noising process for each dimension is modeled as a linear interpolation between the clean data distribution (the one-hot representation of the current state) and an initial distribution, $p_0$. As such, it is intuitive that different initial distributions result in varying performances, depending on the denoising dynamics they induce. 
In particular, they have a direct impact on the sampling dynamics through $R^*_t$ (see \Cref{app:understanding_sampling_dynamics_rstar}) and may also pose tasks of varying difficulty for the graph transformer.
In this paper, we explore four distinct initial distributions\footnote{Recall that $Z$ represents the cardinality of the state space, and $\Delta^{Z}$ the associated probability simplex.}:

\textbf{Uniform}: $p_0 = \left[\frac{1}{Z}, \frac{1}{Z}, \ldots, \frac{1}{Z}\right] \in \Delta^{Z}$. Here, the probability mass is uniformly distributed across all states, as proposed by \cite{campbell2024generative}.

\textbf{Masking}: $p_0 = [0 , 0, \ldots, 0, 1] \in \Delta^{Z+1}$. In this setting, all the probability mass collapses into a new ``mask" state at $t=0$, as introduced by \cite{campbell2024generative}.

\textbf{Marginal}: $p_0 = [m_1, m_2, \ldots, m_Z] \in \Delta^{Z}$, where $m_i$ denotes the marginal probability of the $i$-th state in the dataset. This approach is widely used in state-of-the-art graph generation models \citep{vignac2022digress, xu2024discrete, siraudin2024cometh}.

\textbf{Absorbing}: $p_0 = [0, \ldots, 1, \ldots, 0] \in \Delta^{Z}$, representing a one-hot encoding of the most common state (akin to applying an $\operatorname{argmax}$ operator to the marginal initial distribution).

We apply each of these initial distributions to both node and edges, with the corresponding dimensionalities. In \Cref{fig:initial_distributions_training}, we present the training curves for each initial distribution for three different datasets.

\begin{figure}[h]
    \centering
    \includegraphics[width=0.99\textwidth]{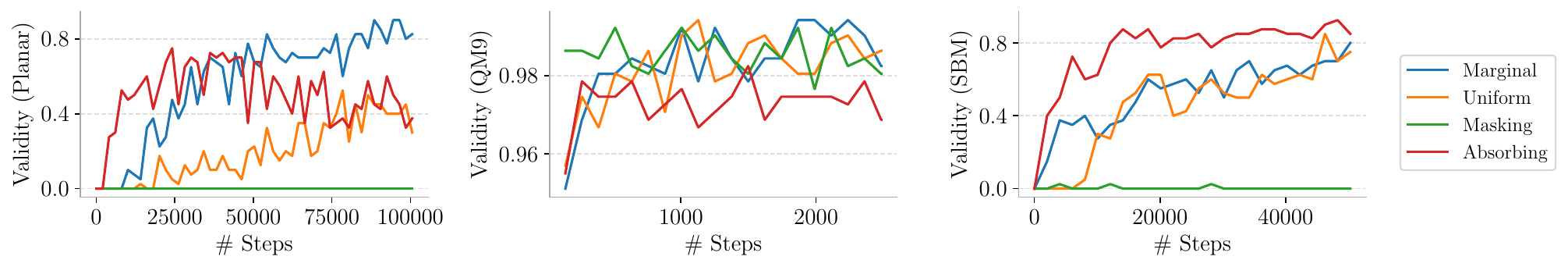}
    \caption{Influence of initial distribution over different datasets at different training steps.}
    \label{fig:initial_distributions_training}
\end{figure}

We observe that the marginal distribution consistently achieves at least as good performance as the other initial distributions. 
This is supported by two key observations in previous work: first, \citet{vignac2022digress} demonstrate that, for any given distribution, the closest distribution within the aforementioned class of factorizable initial distributions is the marginal one, thus illustrating its optimality as a prior. Second, the marginal initial distribution preserves the dataset’s marginal properties throughout the noising process, maintaining graph-theoretical characteristics like sparsity~\citep{qin2023sparse}. 
We conjecture that this fact facilitates the denoising task for the graph transformer.
This reinforces its use as the default initial distribution for DeFoG. 
The only dataset where marginal was surpassed was the SBM dataset, which we attribute to its inherently different nature (stochastic, instead of deterministic). In this case, the absorbing distribution emerged as the best-performing choice. Interestingly, the absorbing distribution tends to converge faster across datasets.

Lastly, it is worth noting that in discrete diffusion models for graphs, predicting the best limit noise distribution based solely on dataset characteristics remains, to our knowledge, an open question \citep{tseng2023complex}. We expect this complexity to extend to discrete flow models as well. Although this is outside the scope of our work, we view this as an exciting direction for future research.

\subsection{Interaction between Sample and Train Distortions}\label{app:ce_tracker}

From \Cref{app:sample_optimization_pipeline}, we observe that time distortions applied during the sampling stage can significantly affect performance. This suggests that graph discrete flow models do not behave evenly across time and are more sensitive to specific time intervals, where generative performance benefits from finer updates achieved by using smaller time steps. Building on this observation, we extended our analysis to the training stage, exploring two main questions: 
\begin{itemize} 
    \item Is there a universally optimal training time distortion for graph generation across different datasets? 
    \item How do training and sampling time distortions interact? Is there alignment between the two? Specifically, if we understand the effect of a time distortion at one stage (training or sampling), can we infer its impact at the other? 
\end{itemize}

To investigate these questions, we conducted a grid search. For two datasets, we trained five models, each with a different time distortion applied during training. Subsequently, we tested each model by applying the five different distortions at the sampling stage. The results are presented in \Cref{fig:interaction_train_sample}.

\begin{figure*}[t]
    \centering
        \begin{subfigure}[b]{0.24\linewidth}
            \includegraphics[width=\textwidth]{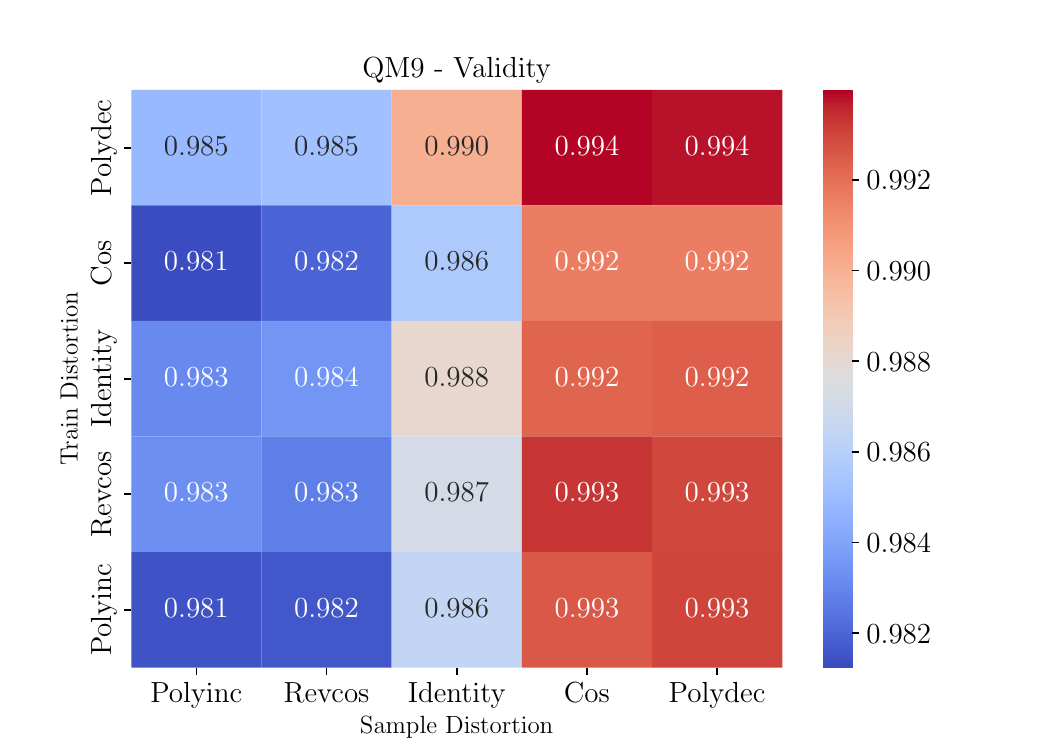}
            \caption{QM9 - Validity}
            \label{fig:qm9_validity}
        \end{subfigure}
        \hfill
        \begin{subfigure}[b]{0.24\linewidth}
            \includegraphics[width=\textwidth]{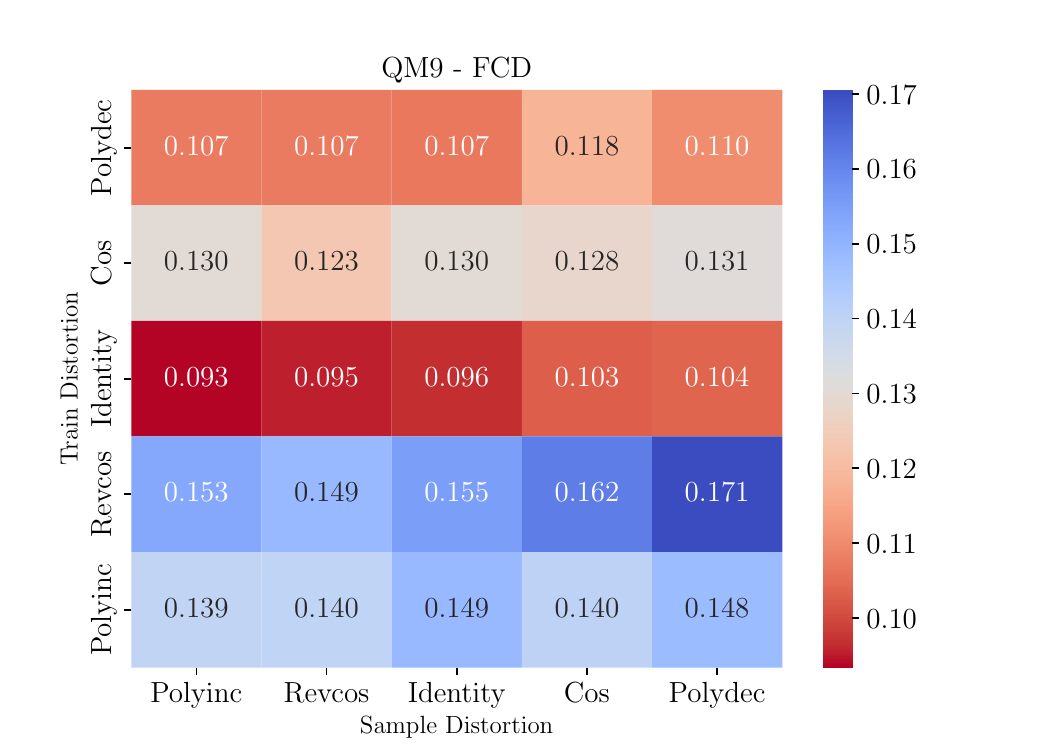}
            \caption{QM9 - FCD}
            \label{fig:qm9_fcd}
        \end{subfigure}
        \hfill
        \begin{subfigure}[b]{0.24\linewidth}
            \includegraphics[width=\textwidth]{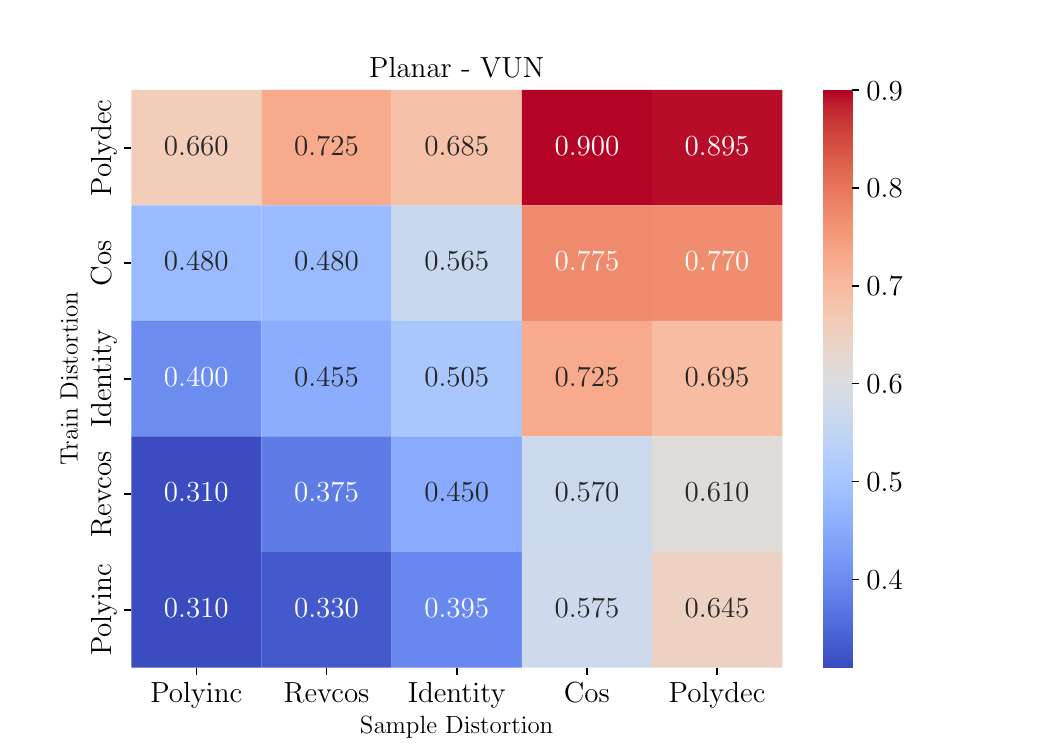}
            \caption{Planar - Validity}
            \label{fig:planar_validity}
        \end{subfigure}
        \hfill
        \begin{subfigure}[b]{0.24\linewidth}
        \centering
            \includegraphics[width=\textwidth]{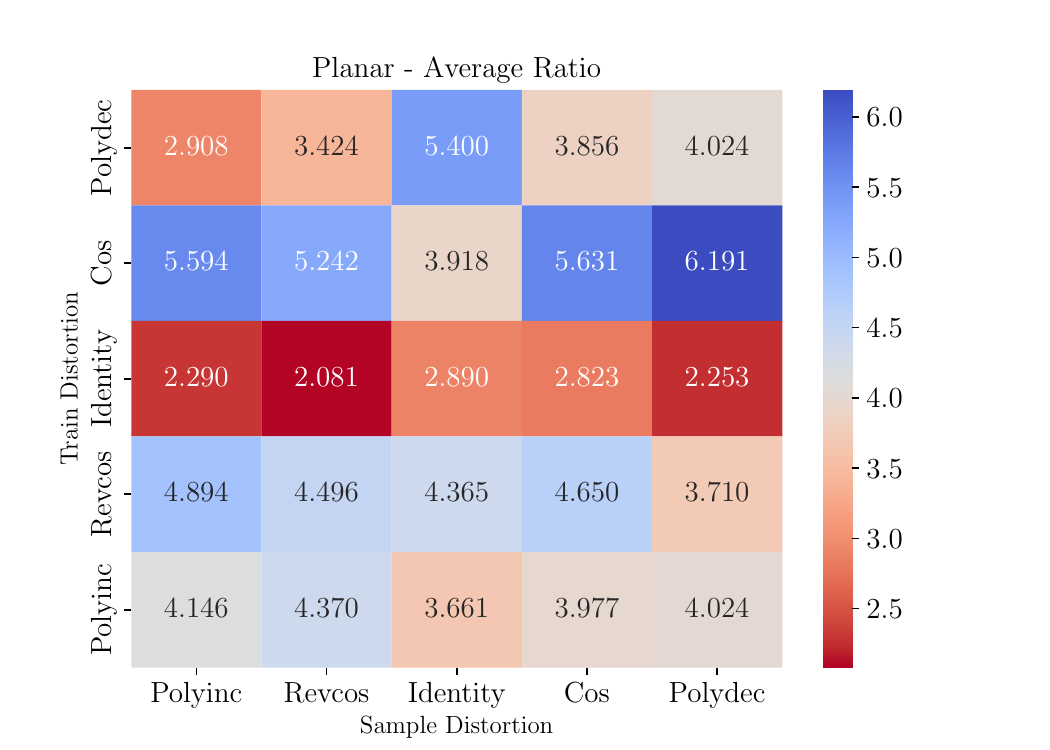}
            \caption{Planar - Average Ratio}
            \label{fig:planar_avratio}
        \end{subfigure}
    \caption{Interaction between training and sampling time distortions. }
    \label{fig:interaction_train_sample}
\end{figure*}

The results vary by dataset. For QM9, validity appears primarily influenced by the sampling distortion method, with a preference for distortions that encourage smaller steps at the end of the denoising process (such as \emph{polydec} and \emph{cos}). However, for FCD\footnote{FCD is calculated only for valid molecules, so this metric may inherently reflect survival bias.}, the training distortion plays a more significant role.

For the planar dataset, we observe a near-perfect alignment between training and sampling distortions in terms of validity, with a clear preference for more accurate training models and finer sampling predictions closer to $t=1$. The results for the average ratio metric, however, are less consistent and show volatility.

These findings help address our core questions: The interaction between training and sampling distortions, as well as the best training time distortion, is dataset-dependent. Nonetheless, for the particular case of the planar dataset, we observe a notable alignment between training and sampling distortions. This alignment suggests that times close to $t=1$ which are critical for correctly generating planar graphs. We conjecture that this alignment can be attributed to planarity being a global property that arises from local constraints, as captured by Kuratowski’s Theorem, which states that a graph is non-planar if and only if it contains a subgraph reducible to $K_5$ or $K_{3,3}$ through edge contraction \citep{kuratowski1930probleme}.

\paragraph{Loss Tracker} To determine if the structural properties observed in datasets like the planar dataset can be detected and exploited without requiring an exhaustive sweep over all possibilities, we propose developing a metric that quantifies the difficulty of predicting the clean graph for any given time point $t \in [0,1)$. For this, we perform a sweep over $t$ for a given model, where for each $t$, we noise a sufficiently large batch of clean graphs and evaluate the model's training loss on them. This yields a curve that shows how the training loss varies as a function of $t$. We then track how this curve evolves across epochs. To make the changes more explicit, we compute the ratio of the loss curve relative to the fully trained model’s values. These curves are shown in \Cref{fig:ce_tracker}.

\begin{figure*}[t]
    \centering
        \begin{subfigure}[b]{0.49\linewidth}
            \includegraphics[width=0.99\linewidth]{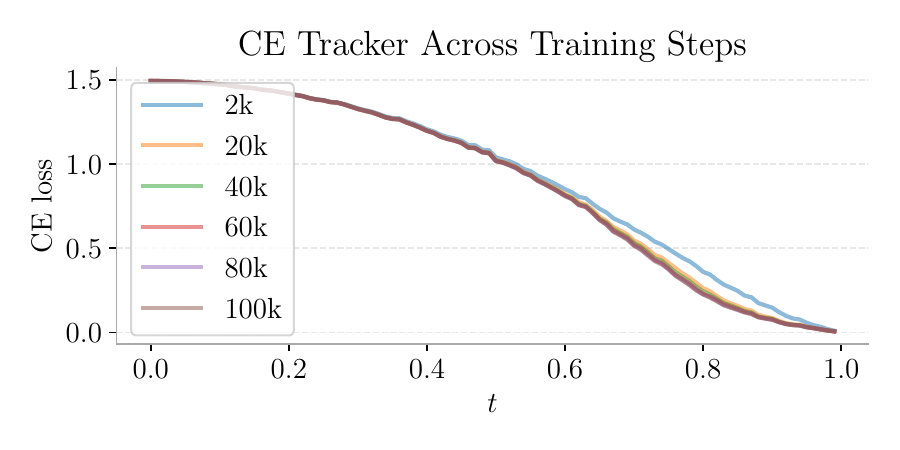}
            \label{fig:ce_vanilla}
        \end{subfigure}
        \hfill
        \begin{subfigure}[b]{0.49\linewidth}
            \includegraphics[width=0.99\linewidth]{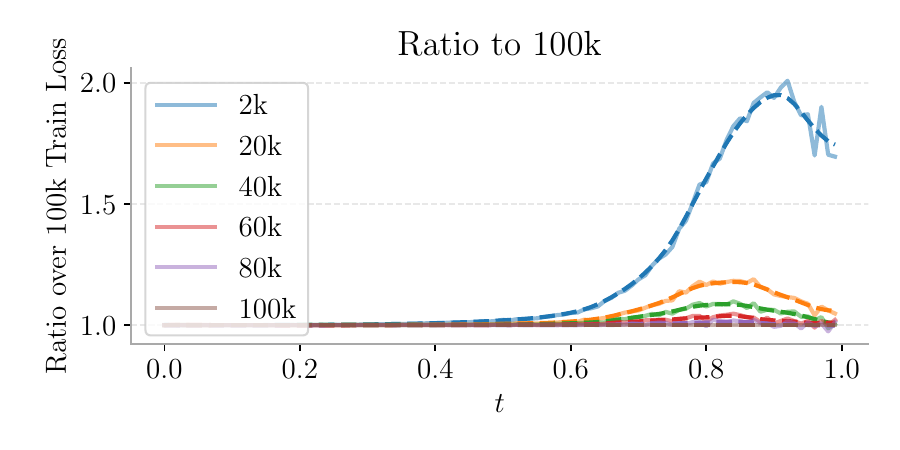}
            \label{fig:ce_ratio}
        \end{subfigure}
    \caption{In the left figure, we present the cross-entropy (CE) loss used for training at different time steps across various stages of the training process. The right figure shows the ratio of each CE loss trajectory relative to the last one, illustrating the overall training trend and emphasizing which parts of the model are predominantly learned over time.}
    \label{fig:ce_tracker}
\end{figure*}

As expected, the curve of training loss as a function of $t$ (left in \Cref{fig:ce_tracker}) is monotonically decreasing, indicating that as graphs are decreasingly noised, the task becomes simpler. However, the most interesting insights arise from the evolution of this curve across epochs (right in \Cref{fig:ce_tracker}). We observe that for smaller values of $t$, the model reaches its maximum capacity early in the training process, showing no significant improvements after the initial few epochs. In contrast, for larger values of $t$ (closer to $t=1$), the model exhibits substantial improvements throughout the training period. This suggests that the model can continue to refine its predictions in the time range where the task is easier. These findings align with those in \Cref{fig:interaction_train_sample}, reinforcing our expectation that training the model to be more precise in this range or providing more refined sampling steps will naturally enhance performance in the planar dataset.

These insights offer a valuable understanding of the specific dynamics at play within the planar dataset. Nevertheless, the unique structural characteristics of each dataset may influence the interaction between training and sampling time distortions in ways that are not captured here. Future work could explore these dynamics across a wider range of datasets to assess the generalizability of our findings.

\clearpage

\section{Theoretical Results}\label{app:theoretical_results}

In this section, we provide the proofs of the different theoretical results of the paper. First, we provide results that are domain agnostic, i.e., that hold for general discrete data, and then instantiate them for the specific case of graphs, yielding \Cref{corollary:error_R,corollary:error_dist}. Then, we proceed to the  permutation invariance/equivariance guarantees of DeFoG, and some remarks on the expressivity of the RRWP additional features. 

\subsection{Theoretical Results on the Generative Framework}\label{app:domain_agnostic_theory}

In \Cref{app:proof_of_error_R}, we provide a proof of the bounded estimation error of the rate matrix for general discrete data, and instantiate it for graphs, yielding \Cref{corollary:error_R}. Then, in \Cref{app:proof_of_error_dist}, we prove the bounded deviation of the generated distribution for general discrete data as well, and instantiate it for graphs, yielding \Cref{corollary:error_dist}. 

Importantly, for general discrete data, we jointly model $D$ discrete variables, $(z^{(1)}, \ldots, z^{(D)}) = z^{1:D} \in \mathcal{Z}^D$.

\subsubsection{Bounded Estimation Error of Unconditional Multivariate Rate Matrix}\label{app:proof_of_error_R}

We start by introducing two important concepts, which will reveal useful for the proof of the intended result.

\paragraph{Unconditional Multivariate Rate Matrix} As exposed in \Cref{sec:dfm}, the marginal distribution and the rate matrix of a CTMC are related by the Kolmogorov equation:
\begin{align*}
    \partial_t p_t 
    &= R^T_t p_t \\
    &= \underbrace{\sum_{z_{t+\mathrm{d}t} \neq z_t} R_t(z_{t+\mathrm{d}t}, z_t) p_t(z_{t+\mathrm{d}t})}_{\text{Probability Inflow}} \; - \underbrace{ \sum_{z_{t+\mathrm{d}t} \neq z_t} 
R_t(z_t, z_{t+\mathrm{d}t}) p_t(z_t)}_{\text{Probability Outflow}}.
\end{align*}
The expansion in the second equality reveals the conservation law inherent in the Kolmogorov equation, illustrating that the time derivative of the marginal distribution represents the net balance between the inflow and outflow of probability mass at a given state.

Importantly, in the multivariate case, the (joint) rate matrix can be expressed through the following decomposition:
\begin{align*}
    R_t(z^{1:D}_t, z^{1:D}_{t+\mathrm{d}t}) 
    &= \sum^D_{d=1} \delta (z^{1:D\setminus (d)}_t, z^{1:D\setminus (d)}_{t+\mathrm{d} t}) \; R^{(d)}_t (z^{1:D}_t, z^{(d)}_{t+\mathrm{d} t}) \stepcounter{equation} \tag{\theequation} \label{eq:def_multivariate_rate_matrix_no_expectation} \\
    &= \sum^D_{d=1} \delta(z^{1:D\setminus (d)}_t, z^{1:D\setminus (d)}_{t+\mathrm{d}t}) \; \mathbb{E}_{p_{1|t}(z^{(d)}_1 | z^{1:D}_t)} \left[R^{*(d)}_t(z^{(d)}_t, z^{(d)}_{t+\mathrm{d}t}|z_1^{(d)})\right] \stepcounter{equation} \tag{\theequation} \label{eq:def_multivariate_rate_matrix}.
\end{align*}
In the first equality, ${1:D\setminus (d)}$ refers to all dimensions except $d$ and the $\delta$ term restricts contributions to rate matrices that account for at most one dimension transitioning at a time, since the probability of two or more independently noised dimensions transitioning simultaneously is zero under a continuous time framework~\citep{campbell2022continuous,campbell2024generative}. In the second equality, the unconditional rate matrix is retrieved by taking the expectation over the $z_1$-conditioned rate matrices. Specifically, $R^{*(d)}_t(z^{(d)}_t, z^{(d)}_{t+\mathrm{d}t}|z_1^{(d)})$ denotes the univariate rate matrix corresponding to dimension $d$ (see \Cref{eq:r_star})

\paragraph{Total Variation} The total variation (TV) distance is a distance measure between probability distributions. While it can be defined more generally, this paper focuses on its application to discrete probability distributions over a finite sample space $\mathcal{Z}$. In particular, for two discrete probability distributions $P$ and $Q$, their total variation distance is defined as:
\begin{equation}\label{eq:tv_def}
    \| P - Q \|_\text{TV} = \frac{1}{2} \sum_{z \in \mathcal{Z}} | P(z) - Q(z) |
\end{equation}

We are now prepared to proceed with the proof of \Cref{theorem:error_R}.

\begin{theorem}[Bounded estimation error of unconditional multivariate rate matrix]\label{theorem:error_R}
    Given $t \in [0,1]$, $z^{1:D}_t, z^{1:D}_{t+\mathrm{d} t} \in \mathcal{Z}^D$, and $z^{1:D}_1 \sim p_1 (z^{1:D}_1)$, let $R_t (z^{1:D}_t, z^{1:D}_{t+\mathrm{d} t})$ be the groundtruth rate matrix of the CTMC, which we approximate with $R^\theta_t (z^{1:D}_t, z^{1:D}_{t+\mathrm{d} t})$. The corresponding estimation error is upper-bounded as follows:
    \begin{equation}
    \label{eq:theorem_error_R_proof}
        \resizebox{0.93\textwidth}{!}{$
        | R_t (z^{1:D}_t, z^{1:D}_{t+\mathrm{d} t}) -  R^\theta_t (z^{1:D}_t, z^{1:D}_{t+\mathrm{d} t})|^2 \leq  C_0  + C_1 
\mathbb{E}_{p_1(z^{1:D}_1)} \left[  p_{{t|1}}(z^{1:D}_t | z^{1:D}_1) \sum^D_{d=1} -\log p^{\theta,{(d)}}_{1|t} (z^{(d)}_1 | z^{1:D}_t) \right] , 
    $}
    \end{equation}
where 
\begin{itemize}
    \item $C_0 =  2 D \, {\sup}_{{d \in \{1, \ldots, D\}}} \{  C^2_{z^{(d)}} \} \sum_{z^{(d)}_1 \in \mathcal{Z}} p^{(d)}_{1|t}(z^{(d)}_1 | z^{1:D}_t)  \log p^{(d)}_{1|t}(z^{(d)}_1 | z^{1:D}_t)$;
    \item $C_1 = 2 D \, {\sup}_{{d \in \{1, \ldots, D\}}} \{  C^2_{z^{(d)}} \}/ p_1(z^{1:D}_1)$;
\end{itemize}
with $ C_{z^{(d)}} = \delta(z^{1:D\setminus (d)}_t, z^{1:D\setminus (d)}_{t+\mathrm{d} t}) \; \underset{z^{(d)}_1 \in \mathcal{Z}}{\sup} \{ R^{(d)}_t (z^{(d)}_t, z^{(d)}_{t + \mathrm{d} t} | z^{(d)}_1) \}$.
\end{theorem}

\begin{proof}

This proof is an adaptation of the proof of Theorem 3.3 from \citet{xu2024discrete} to the discrete flow matching setting.

By definition (\Cref{eq:def_multivariate_rate_matrix}), we have:

\begin{align*}
    R_t (z^{1:D}_t, z^{1:D}_{t+\mathrm{d} t}) &= \sum^D_{d=1} \delta (z^{1:D\setminus (d)}_t, z^{1:D\setminus (d)}_{t+\mathrm{d} t}) \; R^{(d)}_t (z^{1:D}_t, z^{(d)}_{t+\mathrm{d} t}) \\
    &= \sum^D_{d=1} \delta (z^{1:D\setminus (d)}_t, z^{1:D\setminus (d)}_{t+\mathrm{d} t}) \; \mathbb{E}_{p^{(d)}_{1|t}(z^{(d)}_1 | z^{1:D}_t)} \left [  R^{(d)}_t (z^{(d)}_t, z^{(d)}_{t + \mathrm{d} t} | z^{(d)}_1) \right ] \\
    & =  \sum^D_{d=1} \delta (z^{1:D\setminus (d)}_t, z^{1:D\setminus (d)}_{t+\mathrm{d} t}) \; \sum_{z^{(d)}_1} p^{(d)}_{1|t}(z^{(d)}_1 | z^{1:D}_t) R^{(d)}_t (z^{(d)}_t, z^{(d)}_{t + \mathrm{d} t} | z^{(d)}_1)
\end{align*}

Thus:

\begin{align*}
    | R_t (z^{1:D}_t, &z^{1:D}_{t+d t}) -  R^\theta_t (z^{1:D}_t, z^{1:D}_{t+d t})| = 
    \\
    &= \left|\sum^D_{d=1} \delta (z^{1:D\setminus (d)}_t, z^{1:D\setminus (d)}_{t+d t}) \; \sum_{z^{(d)}_1} [ R^{(d)}_t (z^{(d)}_t, z^{(d)}_{t + d t} | z^{(d)}_1) \left( p^{(d)}_{1|t}(z^{(d)}_1 | z^{1:D}_t) - p^{\theta,{(d)}}_{1|t} (z^{(d)}_1 | z^{1:D}_t) \right) ] \right|\\
    &\leq \sum^D_{d=1} \delta (z^{1:D\setminus (d)}_t, z^{1:D\setminus (d)}_{t+d t}) \; \left| \sum_{z^{(d)}_1} \left[ R^{(d)}_t (z^{(d)}_t, z^{(d)}_{t + d t} | z^{(d)}_1) \left( p^{(d)}_{1|t}(z^{(d)}_1 | z^{1:D}_t) - p^{\theta, {(d)}}_{1|t} (z^{(d)}_1 | z^{1:D}_t) \right) \right] \right| \\
    &\leq \sum^D_{d=1} \delta (z^{1:D\setminus (d)}_t, z^{1:D\setminus (d)}_{t+d t}) \; \underset{z^{(d)}_1}{\sup} \{ R^{(d)}_t (z^{(d)}_t, z^{(d)}_{t + d t} | z^{(d)}_1) \} \sum_{z^{(d)}_1} \left| p^{(d)}_{1|t}(z^{(d)}_1 | z^{1:D}_t) - p^{\theta,{(d)}}_{1|t} (z^{(d)}_1 | z^{1:D}_t) \right|  \\
    & = \sum^D_{d=1} 2\, C_{z^{(d)}} \, \| p^{(d)}_{1|t}(z^{(d)}_1 | z^{1:D}_t) - p^{\theta, {(d)}}_{1|t} (z^{(d)}_1 | z^{1:D}_t) \|_{\text{TV}}  \stepcounter{equation} \tag{\theequation} \label{eq:tv_use} \\ 
    & \leq \sum^D_{d=1} C_{z^{(d)}} \sqrt{2 D_{\text{KL}} \left(p^{(d)}_{1|t}(z^{(d)}_1 | z^{1:D}_t) \, \| \, p^{\theta,{(d)}}_{1|t} (z^{(d)}_1 | z^{1:D}_t) \right) } \stepcounter{equation} \tag{\theequation} \label{eq:pinsker_ineq} \\
    & = \sum^D_{d=1}  \sqrt{ 2 \; C^2_{z^{(d)}}  \sum_{z^{(d)}_1 } p^{(d)}_{1|t}(z^{(d)}_1 | z^{1:D}_t) \log \frac{p^{(d)}(_{1|t}z^{(d)}_1 | z^{1:D}_t)}{p^{\theta, {(d)}}_{1|t} (z^{(d)}_1 | z^{1:D}_t)}}.
\end{align*}

In \Cref{eq:tv_use}, we use the definition of TV distance as defined in \Cref{eq:tv_def} and \Cref{eq:pinsker_ineq} results from direct application of Pinsker's inequality.
Now, we change the ordering of the sum and of the square root through the Cauchy-Schwarz inequality: $\sum^D_{d=1} \sqrt{x_d}  \leq \sum^D_{d=1} \sqrt{x_d}\; . \; 1
    \leq \sqrt{\sum^D_{d=1} \sqrt{x_d}^2} \sqrt{\sum^D_{d=1} \sqrt{1}^2}  
     \leq \sqrt{D \sum^D_{d=1} x_d}$.

So, we obtain:
\begin{align*}
    | R_t &(z^{1:D}_t, z^{1:D}_{t+d t}) -  R^\theta_t (z^{1:D}_t, z^{1:D}_{t+d t})| \leq \sqrt{  2 \;D \; \sum^D_{d=1}  C^2_{z^{(d)}}  \sum_{z^{(d)}_1 } p^{(d)}_{1|t}(z^{(d)}_1 | z^{1:D}_t) \log \frac{p^{(d)}_{1|t}(z^{(d)}_1 | z^{1:D}_t)}{p^{\theta,{(d)}}_{1|t} (z^{(d)}_1 | z^{1:D}_t)}} \\
    & \leq  \sqrt{  2 \;D \;\underset{d \in \{1, \ldots, D\}}{\sup} \{  C^2_{z^{(d)}} \} \sum^D_{d=1}   \sum_{z^{(d)}_1} p^{(d)}_{1|t}(z^{(d)}_1 | z^{1:D}_t) \left[ \log p^{(d)}_{1|t}(z^{(d)}_1 | z^{1:D}_t) - \log p^{\theta,{(d)}}_{1|t} (z^{(d)}_1 | z^{1:D}_t) \right]} \\
    & =  \sqrt{C_2 \left( C_3 - \underbrace{\sum^D_{d=1} \sum_{z^{(d)}_1} p^{(d)}_{1|t}(z^{(d)}_1 | z^{1:D}_t) \log p^{\theta,{(d)}}_{1|t} (z^{(d)}_1 | z^{1:D}_t)} \right)},
\end{align*}
where in the last step we rearrange the terms independent of the approximation parametrized by $\theta$ as constants: $C_2  = 2 D \, {\sup}_{{d \in \{1, \ldots, D\}}} \{  C^2_{z^{(d)}} \}$, $C_3 = \sum_{z^{(d)}_1 \in \mathcal{Z}} p^{(d)}_{1|t}(z^{(d)}_1 | z^{1:D}_t)  \log p^{(d)}_{1|t}(z^{(d)}_1 | z^{1:D}_t)$. 

We now develop the underbraced term in the last equation\footnote{In this step, we omit some subscripts from joint probability distributions as they are not defined in the main paper, but they can be inferred from context.}:
\begin{align*}
\sum^D_{d=1} \sum_{z^{(d)}_1 } & p^{(d)}_{1|t}(z^{(d)}_1 | z^{1:D}_t) \log p^{\theta,{(d)}}_{1|t} (z^{(d)}_1 | z^{1:D}_t) =\\
&=  \sum^D_{d=1} \sum_{z^{(d)}_1 } \frac{p(z^{(d)}_1, z^{1:D}_t)}{p_t(z^{1:D}_t)} \log p^{\theta,{(d)}}_{1|t} (z^{(d)}_1 | z^{1:D}_t) \\
& = \frac{1}{p_t(z^{1:D}_t)} \sum^D_{d=1} \sum_{z^{(d)}_1 } \sum_{z_1^{1:D\setminus (d)}} p(z^{(d)}_1, z^{1:D}_t, z^{1:D\setminus (d)}_1) \log p^{\theta,{(d)}}_{1|t}  (z^{(d)}_1 | z^{1:D}_t) \\
& = \frac{1}{p_t(z^{1:D}_t)} \sum^D_{d=1} \sum_{z^{1:D}_1 } p(z^{1:D}_1, z^{1:D}_t) \log p^{\theta,{(d)}}_{1|t}  (z^{(d)}_1 | z^{1:D}_t) \\
& = \frac{1}{p_t(z^{1:D}_t)} \sum^D_{d=1} \sum_{z^{1:D}_1 } p_1(z^{1:D}_1)  p_{t|1}(z^{1:D}_t | z^{1:D}_1) \log p^{\theta,{(d)}}_{1|t}  (z^{(d)}_1 | z^{1:D}_t) \\
& = \frac{1}{p_t(z^{1:D}_t)}  \sum_{z^{1:D}_1 } p_1(z^{1:D}_1)  p_{t|1}(z^{1:D}_t | z^{1:D}_1) \sum^D_{d=1} \log p^{\theta, {(d)}}_{1|t}  (z^{(d)}_1 | z^{1:D}_t) \\
& = - C_4\; 
\mathbb{E}_{ p_1(z^{1:D}_1)} \left[  p_{t|1}(z^{1:D}_t | z^{1:D}_1) \underbrace{\sum^D_{d=1} -\log p^{\theta, {(d)}}_{1|t} (z^{(d)}_1 | z^{1:D}_t)}_{\textrm{Cross-entropy}} \right] ,
\end{align*}
where $C_4 = 1 / p(z^{1:D}_1)$.

Replacing back the obtained expression into the original equation, we obtain:
\begin{equation*}
    \resizebox{\textwidth}{!}{$
    \left| R_t (z^{1:D}_t, z^{1:D}_{t+d t}) -  R^\theta_t (z^{1:D}_t, z^{1:D}_{t+d t}) \right|^2 \leq  C_2 C_3  + C_2 C_4  
    \mathbb{E}_{ p_1(z^{1:D}_1)} \left[  p_{t|1}(z^{1:D}_t | z^{1:D}_1) \sum^D_{d=1} -\log p^{\theta,{(d)}}_{1|t} (z^{(d)}_1 | z^{1:D}_t) \right],
    $}
\end{equation*}
retrieving the intended result.
\end{proof}

\setcounter{theorem}{\numexpr\getrefnumber{corollary:error_R}-1\relax} %
\begin{corollary}[Bounded estimation error of unconditional rate matrix for graphs]\label{theorem:corollary_error_R_app}
    Given $t \in [0,1]$, graphs $G_t, G_{t+\mathrm{d} t}$, and $G_1 \sim p_1 (G_1)$, let $R_t (G_t, G_{t+\mathrm{d} t})$ be the groundtruth rate matrix of the CTMC, which we approximate with $R^\theta_t (G_t, G_{t+\mathrm{d} t})$. Then, there exist constants $\bar{C}_0, \bar{C}_1,\bar{C}_3$ such that:
\begin{align*}
    | R_t (G_t, G_{t+\mathrm{d} t}) -  &R^\theta_t (G_t, G_{t+\mathrm{d} t})|^2  \leq   \\
    \leq & \bar{C}_0 +  \bar{C}_1 \, \mathbb{E}_{p_1(G_1)} \left[  p_{{t|1}}(G_t | G_1) \sum^N_{n=1} -\log p^{\theta,{(n)}}_{1|t} (x^{(n)}_1 | G_t) \right] \\
    & +  \bar{C}_2 \, \mathbb{E}_{p_1 (G_1 )} \left[ p_{{t|1}}(G_t | G_1) \sum_{1\leq i < j \leq N} -\log p^{\theta,{(ij)}}_{1|t} (e^{(ij)}_1 | G_t) \right].
\end{align*}
\end{corollary}

\begin{proof}
    Recall that a graph $G$ is defined as the set of nodes and edges $G = (x^{1:n:N}, e^{1:i<j:N})$, such that $x^{(n)} \in \mathcal{X} = \{1, \dots, X\}$, $e^{(ij)} \in \mathcal{E} = \{1, \dots, E\}$, so we can develop \Cref{eq:def_multivariate_rate_matrix} as follows:
\begin{align*}
    R_t(G_t, G_{t+\mathrm{d}t}) 
     = & \underbrace{\sum_{n=1}^N \delta \left( G^{\setminus (n)}_t, G^{\setminus (n)}_{t+\mathrm{d}t} \right) \; \mathbb{E}_{p_{1|t}(x^{(n)}_1 | G_t)} \left[R^{(n)}_t(x^{(n)}_t, x^{(n)}_{t+\mathrm{d}t}|x_1^{(n)})\right]}_{R_t(x_t^{1:n:N}, x_{t+\mathrm{d}t}^{1:n:N}) } \\
    & + \underbrace{\sum_{1\leq i < j \leq X} \delta \left( G^{\setminus (ij)}_t, G^{\setminus (ij)}_{t+\mathrm{d}t} \right) \; \mathbb{E}_{p_{1|t}(e^{(ij)}_1 | G_t)} \left[R^{(ij)}_t(e^{(ij)}_t, e^{(ij)}_{t+\mathrm{d}t}|e_1^{(ij)})\right]}_{R_t(e_t^{1:i<j:N}, e_{t+\mathrm{d}t}^{1:i<j:N})} \stepcounter{equation} \tag{\theequation} \label{eq:def_multivariate_rate_matrix_graph},
\end{align*}
where $G^{\setminus (n)}$ and $G^{\setminus (ij)}$ denote the whole graph except the node $n$ and the edge connecting node $i$ to node $j$, respectively.
    
Therefore, the result from \Cref{theorem:error_R} implies:
    \begin{align*}
        | R_t (G_t, G_{t+\mathrm{d} t}) &-  R^\theta_t (G_t, G_{t+\mathrm{d} t})|  \leq  \\
        \leq &  | R_t(x_t^{1:n:N}, x_{t+\mathrm{d}t}^{1:n:N}) - R^\theta_t(x_t^{1:n:N}, x_{t+\mathrm{d}t}^{1:n:N})|  + | R_t(e_t^{1:i<j:N}, e_{t+\mathrm{d}t}^{1:i<j:N}) -  R^\theta_t(e_t^{1:i<j:N}, e_{t+\mathrm{d}t}^{1:i<j:N}) |  \\
        \leq & \sqrt{ C_0^\mathcal{X} + C_1^\mathcal{X} \, \mathbb{E}_{p_1(G_1)} \left[  p_{{t|1}}(G_t | G_1) \sum^N_{n=1} -\log p^{\theta,{(n)}}_{1|t} (x^{(n)}_1 | G_t) \right] } \\
        & + \sqrt{C_2^\mathcal{E} + C_3^\mathcal{E} \, \mathbb{E}_{p_1 (G_1 )} \left[  p_{{t|1}}(G_t | G_1) \sum_{1\leq i < j \leq N} -\log p^{\theta,{(ij)}}_{1|t} (e^{(ij)}_1 | G_t) \right]}
    \end{align*}

We now apply the identity $\sum_{d=1}^D\sqrt{x_d} \leq \sqrt{D \sum^D_{d=1} x_d}$, derived from the Cauchy-Schwarz inequality in the proof of \Cref{theorem:error_R}, with $D=2$ to obtain:
\begin{align*}
    | R_t (G_t, G_{t+\mathrm{d} t}) -  &R^\theta_t (G_t, G_{t+\mathrm{d} t})|  \leq  \\
    \leq & \Bigg( \bar{C}_0 +  \bar{C}_1 \, \mathbb{E}_{p_1(G_1)} \left[  p_{{t|1}}(G_t | G_1) \sum^N_{n=1} -\log p^{\theta,{(n)}}_{1|t} (x^{(n)}_1 | G_t) \right] \\
    & +  \bar{C}_2 \, \mathbb{E}_{p_1 (G_1 )} \left[ p_{{t|1}}(G_t | G_1) \sum_{1\leq i < j \leq N} -\log p^{\theta,{(ij)}}_{1|t} (e^{(ij)}_1 | G_t) \right] \Bigg)^{1/2},
\end{align*}
with $\bar{C}_0 = 2 (C_0^\mathcal{X} + C_2^\mathcal{E})$, $\bar{C}_1 = 2 C_1^\mathcal{X} $, and $\bar{C}_2 = 2 C_3^\mathcal{E}$, yielding the intended result. \end{proof}

\clearpage
\subsubsection{Bounded Deviation of the Generated Distribution}\label{app:proof_of_error_dist}

As in \Cref{app:proof_of_error_R}, we start by introducing the necessary concepts that will reveal useful for the proof of the intended result.

\paragraph{On the Choice of the CTMC Sampling Method} Generating new samples using DFM amounts to simulate a multivariate CTMC according to:
\begin{equation}\label{eq:first_order_approx_multivariate}
    p_{t+\mathrm{d}t | t} (z^{1:D}_{t+\mathrm{d}t} | z^{1:D}_t) = \delta(z^{1:D}_t, z^{1:D}_{t+\mathrm{d}t}) + R_t(z^{1:D}_t, z^{1:D}_{t+\mathrm{d}t}) \mathrm{d}t,
\end{equation}
where $R_t(z^{1:D}_t, z^{1:D}_{t+\mathrm{d}t})$ denotes the unconditional multivariate rate matrix defined in \Cref{eq:def_multivariate_rate_matrix}. This process can be simulated exactly using Gillespie's Algorithm~\citep{gillespie1976general,gillespie1977exact}. However, such an algorithm does not scale for large $D$~\citep{campbell2022continuous}. Although $\tau$-leaping is a widely adopted approximate algorithm to address this limitation~\citep{gillespie2001approximate}, it requires ordinal discrete state spaces, which is suitable for cases like text or images but not for graphs. Therefore, we cannot apply it in the context of this paper. 
Additionally, directly replacing the infinitesimal step $\text{d}t$ in \Cref{eq:def_multivariate_rate_matrix} with a finite time step $\Delta t$ \emph{à la} Euler method is inappropriate, as $R_t(z^{1:D}_t, z^{1:D}_{t+\mathrm{d}t})$ prevents state transitions in more than one dimension per step under the continuous framework.
Instead, \citet{campbell2024generative} propose an approximation where the Euler step is applied independently to each dimension, as seen in \Cref{eq:defog_sampling}. 

In this section, we theoretically demonstrate that, despite its approximation, the independent-dimensional Euler sampling method error remains bounded and can be made arbitrarily small by reducing the step size $\Delta t$ or by reducing the estimation error of the rate matrix.

\paragraph{Markov Kernel of a CTMC}

For this proof, we also introduce the notion of \textit{Markov kernel} of a CTMC. The Markov kernel, $\mathcal{R}_{t \to t + \Delta t}$, is a function that provides the transition probabilities between states over a \emph{finite time interval}, $\Delta t$:
\begin{equation}
    p_{t+\Delta t} =  \mathcal{R}_{t \to t + \Delta t}^\top\, p_t \, .
\end{equation}
For example, for a univariate CTMC with a state space $\mathcal{Z}$ of cardinality $Z$, the Markov kernel \( \mathcal{R}_{t \to t + \Delta t} \) is a matrix where each entry \( \mathcal{R}_{t \to t + \Delta t}^{(ij)} \) represents the probability that the single variable transitions from state \( i \) at time $t$ to state \( j \) at time $t+\Delta t$, i.e., $p_{t+\Delta t | t}{(j| i)}$. 

The definition of Markov kernel contrasts with the one of \textit{rate matrix} (or \textit{generator}), $R_t$, which instead characterizes the \textit{infinitesimal} transition rates between states at a given time $t$. Thus, while the rows of the rate matrix sum to 0, the Markov kernel matrices are stochastic, i.e., $\sum_{j \in Z} \mathcal{R}_{t \to t+\Delta t}^{(ij)} = 1, \; \forall i$. 
This contrasts with the rate matrix where rows sum to 1. Additionally, Markov kernels must also respect the initial condition $\mathcal{R}_{t \to t} = I$, where $I$ denotes the identity matrix. 

Recall from \Cref{sec:dfm} that the evolution of a CTMC is governed by the rate matrix through the equation:
\begin{equation*}
    \partial_t p_t = R^\top p_t,
\end{equation*}
which represents a first-order differential equation. Here, we focus on the time-homogeneous case, where \( R_t = R \) in the time interval $[t; \, t+\Delta t]$, i.e., the rate matrix remains constant within the time interval. In that case, its solution is given by:
\begin{equation*}
    p_{t+\Delta t} = e^{R^\top \Delta t} \, p_t
\end{equation*}
Therefore the result above sets, by definition, the corresponding Markov kernel of a constant rate matrix in a finite time interval $\Delta t$ as:
\begin{equation}\label{eq:markov_kernel_closed_form}
     \mathcal{R}_{t \to t+\Delta t} = ({e^{R^\top \Delta t}})^\top =  e^{R \Delta t},
\end{equation}
where the second equality is a direct consequence of the definition of the matrix exponential as a series expansion.

We are now in conditions of proceeding to the proof of \Cref{theorem:error_dist}. We start by first proving that, in the univariate case, the time derivatives of the \textit{conditional} rate matrices are upper bounded. 

\setcounter{theorem}{4} %
\begin{lemma}[Upper bound time derivative of conditional univariate rate matrix] \label{lemma:bound_derivative_univariate}
    
    For $t \in (0,1)$, $z_t, z_{t+\mathrm{d}t}, z_1 \in \mathbb{Z}$, with $z_t \neq z_{t+\mathrm{d}t}$, then we have:
    \begin{equation*}
        |\partial_t R^*_t (z_t, z_{t + \mathrm{d}t} |z_1)| \leq \frac{2}{p_{t|1}(z_t | z_1)^2}.
    \end{equation*}
\end{lemma}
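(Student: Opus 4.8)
The plan is to exploit the fact that the conditional noising process in \Cref{eq:noising_process} is affine in $t$, so that $\partial_t p_{t|1}(z|z_1) = \delta(z,z_1) - p_0(z)$ is \emph{independent of $t$} and takes values in $[-1,1]$. Consequently the argument of the $\operatorname{ReLU}$ in \Cref{eq:r_star}, namely $a := \partial_t p_{t|1}(z_{t+\mathrm{d}t}|z_1) - \partial_t p_{t|1}(z_t|z_1)$, is a fixed constant with $|a| \le 2$; in particular, over the whole interval $(0,1)$ the $\operatorname{ReLU}$ is either identically $0$ or identically equal to its argument. This single observation removes the only apparent non-smoothness of $R^*_t$ and reduces the lemma to a one-line derivative estimate.

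Next I would note that $Z^{>0}_t$ is likewise constant on $(0,1)$: for such $t$ one has $p_{t|1}(z_1|z_1) \ge t > 0$, while $p_{t|1}(z|z_1) = (1-t)p_0(z)$ when $z \neq z_1$, so $\{z : p_{t|1}(z|z_1) > 0\} = \{z_1\} \cup \{z : p_0(z) > 0\}$, which does not depend on $t$; in particular $Z^{>0}_t \ge 1$. (If $p_{t|1}(z_t|z_1) = 0$ — i.e. $z_t \neq z_1$ and $p_0(z_t) = 0$ — then $R^*_t(z_t,z_{t+\mathrm{d}t}|z_1) \equiv 0$ by the standing assumptions of \citet{campbell2024generative} and the claimed bound is vacuous, so we may assume $p_{t|1}(z_t|z_1) > 0$.)

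The argument then splits into two cases. If $a \le 0$ then $R^*_t(z_t, z_{t+\mathrm{d}t}|z_1) \equiv 0$, hence $\partial_t R^*_t = 0$ and the bound holds trivially. If $a > 0$ then $R^*_t(z_t, z_{t+\mathrm{d}t}|z_1) = \dfrac{a}{Z^{>0}_t\, p_{t|1}(z_t|z_1)}$, where the only $t$-dependent factor is $p_{t|1}(z_t|z_1)$ in the denominator; differentiating gives
\[
    \partial_t R^*_t(z_t, z_{t+\mathrm{d}t}|z_1) = -\,\frac{a\,\partial_t p_{t|1}(z_t|z_1)}{Z^{>0}_t\, p_{t|1}(z_t|z_1)^2}.
\]
Using $|a| \le 2$, $|\partial_t p_{t|1}(z_t|z_1)| = |\delta(z_t,z_1) - p_0(z_t)| \le 1$, and $Z^{>0}_t \ge 1$ then yields $|\partial_t R^*_t(z_t, z_{t+\mathrm{d}t}|z_1)| \le 2/p_{t|1}(z_t|z_1)^2$, which is the claim.

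There is no genuinely hard step: the whole content lies in the two ``constancy'' remarks — that $\partial_t p_{t|1}$ and $Z^{>0}_t$ do not vary with $t$ on the open interval — after which the estimate drops out of one application of the quotient rule together with the triangle inequality. The only point that requires a moment of care is the $\operatorname{ReLU}$ kink, which is dispatched by the observation that its argument is constant in $t$.
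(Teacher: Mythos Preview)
Your proof is correct and follows essentially the same route as the paper's: both arguments rest on the observation that, because $p_{t|1}(\cdot|z_1)$ is affine in $t$, the numerator of $R^*_t$ (the $\operatorname{ReLU}$ term) and $Z^{>0}_t$ are constant on $(0,1)$, so the only $t$-dependence sits in the denominator and the bound drops out of the quotient rule together with $|\partial_t p_{t|1}| \le 1$, $\operatorname{ReLU}(\cdot) \le 2$, $Z^{>0}_t \ge 1$. Your treatment is marginally more explicit (case split on the sign of $a$, explicit description of $\{z: p_{t|1}(z|z_1)>0\}$), but the content is the same.
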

\begin{proof}

Recall that $p_{t|1}(z_t | z_1) =  t \,\delta(z_t, z_1) + (1-t) \,p_0(z_t)$ (from \Cref{eq:noising_process}). Two different cases must then be considered. 

In the first case, $p_{t|1}(z_t | z_1) = 0$. This implies that both extremes of the linear interpolation are 0. In that case, the linear interpolation will be identically 0 for $t \in (0,1)$. Thus, by definition, $R^*_t (z_t, z_{t + \mathrm{d}t} |z_1) = 0$ for $t \in (0,1)$, which implies that $|\partial_t R^*_t (z_t, z_{t + \mathrm{d}t} |z_1)|=0$.

Otherwise ($p_{t|1}(z_t | z_1) >0$), we recall that $R^*_t (z_t, z_{t + \mathrm{d}t} |z_1)$ with $z_t \neq z_{t+\mathrm{d}t}$ has the following form:
\begin{equation}\label{eq:r*_def_from_bound_uni_derivative}
    R^*_t (z_t, z_{t + \mathrm{d}t} |z_1) = \frac{\operatorname{ReLU} \left( \partial_t p_{t|1}(z_{t+\mathrm{d}t}|z_1) - \partial_t p_{t|1}(z_{t}|z_1) \right)}{\mathbb{Z}^{> 0}_t p_{t|1}(z_t | z_1)},
\end{equation}
where $\mathbb{Z}^{> 0}_t = | \{ z_t: p_{t|1}(z_t | z_1) > 0 \} |$.

By differentiating the explicit form of $p_{t|1}(z_t | z_1)$, we have that $\partial^2 p_{t|1}(z_t | z_1) = 0$. As a consequence, the numerator of \cref{eq:r*_def_from_bound_uni_derivative} has zero derivative. Additionally, we also note that $\mathbb{Z}^{> 0}_t$ is constant. Again, since $p_{t|1}(z_t | z_1)$ is a linear interpolation between $z_1$ and $p_0$ and, therefore, it is impossible for $p_{t|1}(z_t | z_1)$ to suddenly become 0 for $t \in (0,1)$.

Consequently, we have:
\begin{align*}
    \partial_t R^*_t (z_t, z_{t + \mathrm{d}t} |z_1) 
    & = \frac{\operatorname{ReLU} \left( \partial_t p_{t|1}(z_{t+\mathrm{d}t}|z_1) - \partial_t  p_{t|1}(z_{t}|z_1) \right)}{\mathbb{Z}^{> 0}_t } \partial_t \left( \frac{1}{p_{t|1}(z_t | z_1)} \right) \\
    & = - \frac{\operatorname{ReLU} \left( \partial_t p_{t|1}(z_{t+\mathrm{d}t}|z_1) - \partial_t  p_{t|1}(z_{t}|z_1) \right)}{\mathbb{Z}^{> 0}_t }   \frac{\partial_t p_{t|1}(z_t | z_1)}{p_{t|1}(z_t | z_1) ^2}. 
\end{align*}

We necessarily have $|\partial_t p_{t|1}(z_t | z_1)| = |\delta(z_t, z_1) - p_0 (z_t)| \leq 1$, $\operatorname{ReLU} \left( \partial_t p(z_{t+\mathrm{d}t}|z_1) - \partial_t  p(z_{t}|z_1) \right) \leq 2$, $\mathbb{Z}^{> 0}_t \geq 1$, and, necessarily, $p(z_t | z_1)>0$. Thus:

\begin{align*}
    |\partial_t R^*_t (z_t, z_{t + \mathrm{d}t} |z_1)| 
    & \leq \frac{\operatorname{ReLU} \left( \partial_t p_{t|1}(z_{t+\mathrm{d}t}|z_1) - \partial_t  p_{t|1}(z_{t}|z_1) \right)}{\mathbb{Z}^{> 0}_t }   \frac{|\delta(z_t, z_1) - p_0(z_t)|}{p_{t|1}(z_t | z_1) ^2} \\ 
    & \leq \frac{2}{p_{t|1}(z_t | z_1)^2}.
\end{align*}
\end{proof}

\bigskip

We now upper bound the time derivative of the \textit{unconditonal} multivariate rate matrix. We use \Cref{lemma:bound_derivative_univariate} as an intermediate result to accomplish so. Additionally, we consider the following assumption.

\begin{assumption}\label{assumption:denoising_quadratic}
    For $z_t^{1:D} \in \mathcal{Z}^D, z_1^{(d)} \in \mathcal{Z}$ and $t \in [0,1]$, for each variable $z^{(d)}$ of a joint variable $z^{1:D}$, there exists a constant $B^{(d)}_{t}>0$ such that $p_{1|t}(z^{(d)}_1 | z^{1:D}_t) \leq B^{(d)}_{t}\, p_{t|1}(z^{(d)}_t|z^{(d)}_1)^2$.
\end{assumption}
This assumption states that the denoising process is upper bounded by a quadratic term on the noising process. This assumption is reasonable because, while the noising term applies individually to each component of the data, the denoising process operates on the joint variable, allowing for a more comprehensive and interdependent correction that reflects the combined influence of all components.

\begin{proposition}[Upper bound time derivative of unconditional multivariate rate matrix]\label{proposition:bound_derivative_multivariate}

    For $z^{1:D}_t, z^{1:D}_{t + \mathrm{d}t}\in \mathcal{Z}^D$ and $t \in (0, 1)$, under \Cref{assumption:denoising_quadratic}, we have:
    \begin{equation*}
        |\partial_t R^{1:D}_t (z^{1:D}_t, z^{1:D}_{t + \mathrm{d}t})| \leq 2 B_t Z D,
    \end{equation*}
    with $B_t = \underset{d \in {1, \ldots, D}}{\sup} B^{(d)}_t$.
\end{proposition}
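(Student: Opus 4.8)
The plan is to differentiate the decomposition of the unconditional multivariate rate matrix recorded in \Cref{eq:def_multivariate_rate_matrix} and then control each resulting piece using \Cref{lemma:bound_derivative_univariate} together with \Cref{assumption:denoising_quadratic}. Concretely, I would start from
\begin{equation*}
R^{1:D}_t (z^{1:D}_t, z^{1:D}_{t+\mathrm{d}t}) = \sum_{d=1}^D \delta(z^{1:D\setminus d}_t, z^{1:D\setminus d}_{t+\mathrm{d}t})\sum_{z^d_1 \in \mathcal{Z}} p^d_{1|t}(z^d_1 \mid z^{1:D}_t)\, R^{*d}_t(z^d_t, z^d_{t+\mathrm{d}t}\mid z^d_1),
\end{equation*}
and take $\partial_t$. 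The Kronecker factors $\delta(z^{1:D\setminus d}_t, z^{1:D\setminus d}_{t+\mathrm{d}t})$ are $t$-independent indicators bounded by $1$ (and they vanish unless at most one coordinate changes, leaving at most $D$ active dimensions — the diagonal case being what produces the factor $D$), so the triangle inequality reduces the problem to bounding, for each active $d$, the time derivative of the posterior-weighted average $\sum_{z^d_1} p^d_{1|t}(z^d_1 \mid z^{1:D}_t)\,R^{*d}_t(z^d_t, z^d_{t+\mathrm{d}t}\mid z^d_1)$.

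For the core estimate I would invoke \Cref{lemma:bound_derivative_univariate} to get $|\partial_t R^{*d}_t(z^d_t, z^d_{t+\mathrm{d}t}\mid z^d_1)| \le 2 / p^d_{t|1}(z^d_t\mid z^d_1)^2$, and then cancel the dangerous $1/p^2$ factor against the weight using \Cref{assumption:denoising_quadratic}, which gives exactly $p^d_{1|t}(z^d_1\mid z^{1:D}_t)/p^d_{t|1}(z^d_t\mid z^d_1)^2 \le B^d_t$ (the degenerate case $p^d_{t|1}(z^d_t\mid z^d_1)=0$ contributes zero on both sides). Summing over the $Z$ values of $z^d_1$ yields the bound $2 B^d_t Z$ for dimension $d$, and summing over the at most $D$ active dimensions, together with $B^d_t \le B_t := \sup_d B^d_t$, delivers the claimed $2 B_t Z D$.

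The main obstacle is the interchange of $\partial_t$ with the posterior expectation: since the weights $p^d_{1|t}(z^d_1 \mid z^{1:D}_t)$ are themselves $t$-dependent, the product rule produces an additional term $\sum_{z^d_1}\bigl(\partial_t p^d_{1|t}(z^d_1 \mid z^{1:D}_t)\bigr)R^{*d}_t(z^d_t, z^d_{t+\mathrm{d}t}\mid z^d_1)$ that \Cref{lemma:bound_derivative_univariate} does not touch. I would handle this by working with the quantity that actually enters the Grönwall-type estimate in the proof of \Cref{theorem:error_dist} — namely the expected magnitude $\sum_{z^d_1} p^d_{1|t}(z^d_1\mid z^{1:D}_t)\,|\partial_t R^{*d}_t(z^d_t, z^d_{t+\mathrm{d}t}\mid z^d_1)|$, for which no product-rule term is incurred — or, alternatively, by bounding the posterior-derivative term separately (using $\sum_{z^d_1}\partial_t p^d_{1|t}=0$ and the piecewise-linearity of $p^d_{t|1}$ in $t$, which also makes the ReLU numerator and the normalizing count constant on $(0,1)$) and folding its contribution into the constants; the former route is cleaner and is the one I would write up. Everything else (differentiating the linear interpolation $p_{t|1}$, boundedness of the ReLU numerator by $2$, the normalizer being $\ge 1$ and constant) is already established inside the proof of \Cref{lemma:bound_derivative_univariate} and only needs to be cited.
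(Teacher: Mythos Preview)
Your approach is essentially the same as the paper's: expand via \Cref{eq:def_multivariate_rate_matrix}, differentiate, apply the triangle inequality, then invoke \Cref{lemma:bound_derivative_univariate} and \Cref{assumption:denoising_quadratic} to cancel the $1/p^d_{t|1}(z^d_t\mid z^d_1)^2$ factor, and finally sum over $z^d_1\in\mathcal{Z}$ and over $d$.

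The one point of divergence is precisely the obstacle you flag. The paper's proof simply writes
\[
\partial_t R_t(z^{1:D}_t, z^{1:D}_{t+\mathrm{d}t}) = \sum_{d=1}^D \delta(z^{1:D\setminus d}_t, z^{1:D\setminus d}_{t+\mathrm{d}t}) \sum_{z^d_1} p^d_{1|t}(z^d_1\mid z^{1:D}_t)\,\partial_t R^{*d}_t(z^d_t, z^d_{t+\mathrm{d}t}\mid z^d_1)
\]
as an \emph{equality}, invoking only ``linearity of the time derivative'' --- i.e., it silently drops the product-rule term $\sum_{z^d_1}\bigl(\partial_t p^d_{1|t}(z^d_1\mid z^{1:D}_t)\bigr)R^{*d}_t$ that you correctly identify. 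So you are being more careful here than the paper itself; the paper does not address this term at all. Of your two proposed fixes, the first (bounding the expected magnitude $\sum_{z^d_1} p^d_{1|t}\,|\partial_t R^{*d}_t|$ directly) is exactly what the paper's displayed chain of inequalities actually establishes, and it is indeed all that is used downstream in the discretization-error step of \Cref{theorem:error_dist}.
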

\begin{proof}

From \Cref{eq:def_multivariate_rate_matrix}, the unconditional rate matrix is given by:
\begin{align*}
    R_t(z^{1:D}_t, z^{1:D}_{t + \mathrm{d}t}) 
    & = \sum^D_{d=1} \delta(z^{1:D\setminus (d)}_t, z^{1:D\setminus (d)}_{t + \mathrm{d}t}) \; R^{(d)}_t (z^{1:D}_t,  z^{(d)}_{t + \mathrm{d}t}) \\
    & = \sum^D_{d=1} \delta(z^{1:D\setminus (d)}_t, z^{1:D\setminus (d)}_{t + \mathrm{d}t}) \; \mathbb{E}_{p^{(d)}_{1|t}(z^{(d)}_1 | z^{1:D}_t)} \left[ R^{*(d)}_t (z^{(d)}_t, z^{(d)}_{t + \mathrm{d}t} |z^{(d)}_1) \right] \\
    & = \sum^D_{d=1} \delta(z^{1:D\setminus (d)}_t, z^{1:D\setminus (d)}_{t + \mathrm{d}t}) \; \sum_{z^{(d)}_1 \in \mathcal{Z}} p^{(d)}_{1|t}(z^{(d)}_1 | z^{1:D}_t) R^{*(d)}_t (z^{(d)}_t, z^{(d)}_{t + \mathrm{d}t} |z^{(d)}_1).
\end{align*}

So, by linearity of the time derivative, we have:
\begin{align*}
    \left| \partial_t R_t(z^{1:D}_t, z^{1:D}_{t + \mathrm{d}t}) \right|
    & = \left| \sum^D_{d=1} \delta(z^{1:D\setminus (d)}_t, z^{1:D\setminus (d)}_{t + \mathrm{d}t}) \; \sum_{z^{(d)}_1 \in \mathcal{Z}} p^{(d)}_{1|t}(z^{(d)}_1 | z^{1:D}_t) \, \partial_t R^{*(d)}_t (z^{(d)}_t, z^{(d)}_{t + \mathrm{d}t} |z^{(d)}_1) \right| \\
    & \leq \sum^D_{d=1} \delta(z^{1:D\setminus (d)}_t, z^{1:D\setminus (d)}_{t + \mathrm{d}t}) \; \sum_{z^{(d)}_1 \in \mathcal{Z}} p^{(d)}_{1|t}(z^{(d)}_1 | z^{1:D}_t) \, 
    \left| \partial_t R^{*(d)}_t (z^{(d)}_t, z^{(d)}_{t + \mathrm{d}t} |z^{(d)}_1) \right| \\
    & \leq \sum^D_{d=1} \delta(z^{1:D\setminus (d)}_t, z^{1:D\setminus (d)}_{t + \mathrm{d}t}) \; \sum_{z^{(d)}_1 \in \mathcal{Z}} B^{(d)}_t\, p_{t|1}(z^{(d)}_t|z^{(d)}_1)^2  \frac{2}{p_{t|1}(z^{(d)}_t | z^{(d)}_1)^2}\\
    & \leq \sum^D_{d=1} \delta(z^{1:D\setminus (d)}_t, z^{1:D\setminus (d)}_{t + \mathrm{d}t}) \;2 B_t Z\\
    & \leq 2 B_t Z D,
\end{align*}
where in the first inequality triangular we apply triangular inequality; in the second inequality, we use \Cref{lemma:bound_derivative_univariate} and \Cref{assumption:denoising_quadratic} to upper bound $ |\partial_t R^{*(d)}_t (z^{(d)}_t, z^{(d)}_{t + \mathrm{d}t} |z^{(d)}_1) |$ and $p_{1|t}(z^{(d)}_1 | z^{1:D}_t)$, respectively.
\end{proof}

\bigskip

Now, we finally start the proof of \Cref{theorem:error_dist}.

\begin{theorem}[Bounded deviation of the generated distribution]\label{theorem:error_dist}

Let $\{z^{1:D}_t\}_{t \in [0, 1]} \in \mathcal{Z}^D \times [0, 1]$ be a CTMC starting with $p_0(z^{1:D}_0) = p_\epsilon $ and ending with $p_1(z^{1:D}_1) = p_\text{data}$, whose groundtruth rate matrix is $R_t$. Additionally, let $(y^{1:D}_k)_{k=0, \frac{1}{K}, \ldots, 1}$ be a Euler sampling approximation of that CTMC, with maximum step size $\Delta t = \sup_k \Delta t_k$ and an approximate rate matrix $R^\theta_t$. Then, under \Cref{assumption:denoising_quadratic}, the following total variation bound holds:
\begin{equation*}
    \| p(y^{1:D}_1) - p_\text{data} \|_\text{TV} \leq UZD \; + \;  B (Z D)^2 \Delta t +\; O(\Delta t),
\end{equation*}
where $ U =  \underset{\substack{t \in [0, 1], \; \\ z^{1:D}_t, \; z^{1:D}_{t + \mathrm{d}t} \in \mathcal{Z}^D}}{\sup} \sqrt{C_0  + C_1 
\mathbb{E}_{p_1(z^{1:D}_1)} \left[  p_{t|1}(z^{1:D}_t | z^{1:D}_1) \sum^D_{d=1} - \log p^\theta_{1|t} (z^{(d)}_1 | z^{1:D}_t) \right]}$ and $B = \underset{\substack{t \in [0, 1], \, z_1^{(d)} \in \mathcal{Z} \; \\ z_t^{1:D} \in \mathcal{Z}^D}}{\sup} B^{(d)}_t$.
\end{theorem}
\begin{proof}
    We start the proof by clarifying the notation for the Euler sampling approximation process. We denote its discretization timesteps by $0=t_0 < t_1 < \ldots < t_K =1$, with $\Delta t_k = t_k - t_{k-1}$. It is initiated at the same limit distribution as the groundtruth CTMC, $p_\epsilon$, and the bound to be proven will quantify the deviation that the approximated procedure incurs in comparison to the groundtruth CTMC. To accomplish so, we define $\mathcal{R}^{\theta, E}_k = \mathcal{R}^{\theta, E}_{t_{k-1} \to t_{k}}$ as the Markov kernel that corresponds to applying Euler sampling with the approximated rate matrix $R^\theta_t$, moving from $t_{k-1}$ to $t_k$. Therefore, $\mathcal{R}^{\theta, E} = \mathcal{R}^{\theta, E}_1 \, \mathcal{R}^{\theta, E}_2 \, \ldots \, \mathcal{R}^{\theta, E}_K$ and $p(y^{1:D}_K) = {\mathcal{R}^{\theta, E}}^T p_\epsilon$.
    
    We first apply the same decomposition to the left-hand side of \Cref{theorem:error_dist}, as \cite{campbell2022continuous}, Theorem 1:
\begin{align*}
    \| p(y^{1:D}_K) - p_\text{data} \|_\text{TV}
    &= \| {\mathcal{R}^{\theta, E}}^\top p_\epsilon - p_\text{data} \|_\text{TV} \\
    &\leq \| {\mathcal{R}^{\theta, E}}^\top p_\epsilon - \mathbb{P}_{1|0}^\top p_\epsilon \|_\text{TV} + \| p_\epsilon - \underbrace{p(z^{1:D}_0)}_{= p_\epsilon} \|_\text{TV}  \stepcounter{equation} \tag{\theequation} \label{eq:path_measure}\\
    &\leq \| {\mathcal{R}^{\theta, E}}^\top p_\epsilon -  \mathbb{P}_{1|0}^\top p_\epsilon \|_\text{TV} \\
    & \leq \sum^K_{k=1} \underset{\nu}{\sup} \; \|  {\mathcal{R}^{\theta, E}_k}^\top \nu - \mathcal{P}_k^\top \nu  \|_\text{TV} ,\stepcounter{equation} \tag{\theequation} \label{eq:step_path_measures}
\end{align*}

where, in \Cref{eq:path_measure}, $\mathbb{P}_{1|0}$ denotes the path measure of the exact groundtruth CTMC and the difference between limit distributions (second term from \Cref{eq:path_measure}) is zero since in flow matching the convergence to the limit distribution via linear interpolation is not asymptotic, in constrast to diffusion models~\citep{austin2021structured}, but actually attained at $t=0$. In \Cref{eq:step_path_measures}, we introduce the stepwise path measure, i.e., $\mathcal{P}_k = \mathbb{P}_{t_k|t_{k-1}}$, such that $\mathbb{P}_{T|0} = \mathcal{P}_1 \mathcal{P}_2 \ldots \mathcal{P}_K$. Therefore, finding the intended upper bound amounts to establish bounds on the total variation distance for each interval \([t_{k-1}, t_k]\).

For any distribution $\nu$:
\begin{align*}
    \|  {\mathcal{R}^{\theta, E}_k}^\top \nu -  \mathcal{P}_k^\top \nu  \|_\text{TV} 
    & \leq \| {\mathcal{R}^{\theta, E}_k}^\top \nu  - {\mathcal{R}^\theta_k}^\top \, \nu + {\mathcal{R}^\theta_k}^\top \, \nu - \mathcal{P}_k^\top \, \nu \|_\text{TV} \\
    & \leq \| \mathcal{P}_k^\top  \nu -  {\mathcal{R}^\theta_k}^\top \nu \|_\text{TV} + \|  {\mathcal{R}^\theta_k}^\top \nu - {\mathcal{R}^{\theta, E}_k}^\top \nu \|_\text{TV} \stepcounter{equation} \tag{\theequation} \label{eq:distribution_deviation_proof_decomposition},
\end{align*}

where $\mathcal{R}^\theta_k$ denotes the resulting Markov kernel of running a CTMC with constant rate matrix $R^\theta_{t_{k-1}}$ between $t_{k-1}$ and $t_k$.

For the first term, we use Proposition 5 from \cite{campbell2022continuous} to relate the total variation distance imposed by the Markov kernels with the difference between the corresponding rate matrices:

\begin{align*}
     \| \mathcal{P}_k^\top \nu -  {\mathcal{R}^\theta_k}^\top \nu \|_\text{TV} 
     & \leq & \int_{t_{k-1}}^{t_k} \sup_{z^{1:D}_t \in \mathcal{Z}^D}\left\{\sum_{z^{1:D}_{t+\mathrm{d}t} \neq z^{1:D}_t}\left|R_t(z^{1:D}_t, z^{1:D}_{t+\mathrm{d}t})-R_{t_{k-1}}^\theta(z^{1:D}_t, z^{1:D}_{t+\mathrm{d}t})\right|\right\} \mathrm{d}t \\
     & \leq & \underbrace{\int_{t_{k-1}}^{t_k} \sup _{z^{1:D}_t \in \mathcal{Z}^D}\left\{\sum_{z^{1:D}_{t+\mathrm{d}t} \neq z^{1:D}_t}\left|R_t(z^{1:D}_t, z^{1:D}_{t+\mathrm{d}t})-R_{t_{k-1}}(z^{1:D}_t, z^{1:D}_{t+\mathrm{d}t})\right|\right\} \mathrm{d}t}_\text{Discretization Error} \\ 
     & & + \underbrace{\int_{t_{k-1}}^{t_k} \sup _{z^{1:D}_t \in \mathcal{Z}^D}\left\{\sum_{z^{1:D}_{t+\mathrm{d}t} \neq z^{1:D}_t}\left|R_{t_{k-1}}(z^{1:D}_t, z^{1:D}_{t+\mathrm{d}t})-R_{t_{k-1}}^\theta(z^{1:D}_t, z^{1:D}_{t+\mathrm{d}t})\right|\right\} \mathrm{d}t}_\text{Estimation Error}
\end{align*}

The first term consists of the discretization error, where we compare the chain with groundtruth rate matrix changing continuously between $t_{k-1}$ and $t_k$ with its discretized counterpart, i.e., a chain where the rate matrix is held constant to its value at the beginning of the interval. The second corresponds to the estimation error, where we compare the chain generated by the discretized groundtruth rate matrix with an equally discretized chain but that uses an estimated rate matrix instead. For the former, we have:

\begin{align*}
    \int_{t_{k-1}}^{t_k}  \sup _{z^{1:D}_t \in \mathcal{Z}^D} 
    & \left\{\sum_{z^{1:D}_{t+\mathrm{d}t} \neq z^{1:D}_t}  \left|R_t(z^{1:D}_t, z^{1:D}_{t+\mathrm{d}t})-R_{t_{k-1}}(z^{1:D}_t, z^{1:D}_{t+\mathrm{d}t})\right|\right\} \mathrm{d}t \\
    & \leq \int_{t_{k-1}}^{t_k} \sup _{z^{1:D}_t \in \mathcal{Z}^D}\left \{ \sum_{z^{1:D}_{t+\mathrm{d}t} \neq z^{1:D}_t}\left|\partial_t R_{t_c} (z^{1:D}_t, z^{1:D}_{t+\mathrm{d}t}) (t - t_{k-1})) \right| \right\} \mathrm{d}t \stepcounter{equation} \tag{\theequation} \label{eq:mean_value_theorem} \\ 
    & \leq \int_{t_{k-1}}^{t_k} Z D \sup_{z^{1:D}_t, z^{1:D}_{t+\mathrm{d}t} \in \mathcal{Z}^D} \left\{\left|\partial_t R_{t_c} (z^{1:D}_t, z^{1:D}_{t+\mathrm{d}t})\right|\right\} \left| t-t_{k-1}\right| \mathrm{d}t \stepcounter{equation} \tag{\theequation} \label{eq:values_at_most_one_different_coord} \\
    & \leq 2 Z^2 D^2 \int_{t_{k-1}}^{t_k} B_t \left| t-t_{k-1}\right| \mathrm{d}t, \stepcounter{equation} \tag{\theequation} \label{eq:use_multivariate_upper_bound} \\
    & = B_k (Z D \Delta t_k)^2, \stepcounter{equation} \tag{\theequation} \label{eq:tau_definition}
\end{align*}

where, in \Cref{eq:mean_value_theorem}, we use the Mean Value Theorem, with $t_c \in (t_{k-1}, t_k)$; in \Cref{eq:values_at_most_one_different_coord}, we use the fact that there are $Z D$ values of $z^{1:D}_{t + \mathrm{d}t}$ that differ at most in only one coordinate from $z^{1:D}_t$; in \Cref{eq:use_multivariate_upper_bound}, we use the result from \Cref{proposition:bound_derivative_multivariate} to upper bound the time derivative of the multivariate unconditional rate matrix; and finally, in \Cref{eq:tau_definition}, we define $B_{k} = \sup_{t \in (t_{k-1}, t_k)} B_t$.

For the estimation error term, we have:
\begin{align*}
    \int_{t_{k-1}}^{t_k} \sup _{z^{1:D}_t \in \mathcal{Z}^D} & \left\{\sum_{z^{1:D}_{t+\mathrm{d}t} \neq z^{1:D}_t}\left|R_{t_{k-1}}(z^{1:D}_t, z^{1:D}_{t+\mathrm{d}t})-R_{t_{k-1}}^\theta(z^{1:D}_t, z^{1:D}_{t+\mathrm{d}t})\right|\right\} \mathrm{d}t \\
    & \leq \int_{t_{k-1}}^{t_k} U_k Z D \, \mathrm{d}t, \stepcounter{equation} \tag{\theequation} \label{eq:use_upper_bound_estimation_error}\\
    & \leq U_k Z D\Delta t_k \stepcounter{equation} \tag{\theequation} \label{eq:estimation_error},
\end{align*}

where, in \Cref{eq:use_upper_bound_estimation_error}, we use again the fact that there are $Z D$ values of $z^{1:D}_{t + \mathrm{d}t}$ that differ at most in only one coordinate from $z^{1:D}_t$ along with the estimation error upper bound from \Cref{theorem:error_R}. In particular, we consider $U_k = \underset{\substack{t \in [t_{k-1}, t_k], \; \\ z^{1:D}_t, \; z^{1:D}_{t + \mathrm{d}t} \in \mathcal{Z}^D}}{\sup} U^{z^{1:D}_t \to z^{1:D}_{t + \mathrm{d}t}}_k$, with:
\begin{equation*}
    U^{z^{1:D}_t \to z^{1:D}_{t + \mathrm{d}t}}_k = \sqrt{C_0  + C_1 
\mathbb{E}_{p_1(z^{1:D}_1)} \left[  p_{t|1}(z^{1:D}_t | z^{1:D}_1) \sum^D_{d=1} - \log p^{\theta,{(d)}}_{1|t} (z^{(d)}_1 | z^{1:D}_t) \right]},
\end{equation*}
i.e., the square root of the right-hand side of \Cref{eq:theorem_error_R_proof}.

It remains to bound the second term from \Cref{eq:distribution_deviation_proof_decomposition}. We start by analyzing the Markov kernel $\mathcal{R}^\theta_k$ corresponding to a Markov chain with \textit{constant} rate matrix $R^\theta_{t_{k-1}}$ between $t_{k-1}$ and $t_k$. In that case, from \Cref{eq:markov_kernel_closed_form} we obtain:
\begin{align*}
    \mathcal{R}^\theta_k
    &= e^{R^\theta_{t_{k-1}} \Delta t_k} \\
    & = \sum^{\infty}_{i=0} \frac{(R^\theta_{t_{k-1}} \Delta t_k)^i}{i!} \\
    & = I + R^\theta_{t_{k-1}} \Delta t_k + \frac{(R^\theta_{t_{k-1}} \Delta t_k)^2}{2!} + \frac{(R^\theta_{t_{k-1}} \Delta t_k)^3}{3!} + \cdots
\end{align*}

On the other hand, we have from \Cref{eq:defog_sampling} that sampling with the Euler approximation in multivariate Markov chain corresponds to:

\begin{align*}
    \Tilde{p}_{t_{k} | t_{k-1}} &(z^{1:D}_{t_{k} } | z^{1:D}_{t_{k-1}}) = \\
    & = \prod^D_{d=1} \Tilde{p}^{(d)}_{t_{k}| t_{k-1}} (z^{(d)}_{t_k}| z^{1:D}_{t_{k-1}} ) \\
    & = \prod^D_{d=1} \delta(z^{(d)}_{t_{k-1}}, z^{(d)}_{t_k}) + R^{ \theta, (d)}_t \left(z^{1:D}_{t_{k-1}}, z^{(d)}_{t_{k}}\right) \,\Delta t_k \stepcounter{equation} \tag{\theequation} \label{eq:approximate_euler_rate_matrix_step} \\
    & = \delta \left(z^{1:D}_{t_{k-1}}, z^{1:D}_{t_k}\right) \;+ \;\Delta t_k \sum_{d=1} \delta \left( z^{1:D\setminus (d)}_{t_{k-1}}, z^{1:D\setminus (d)}_{t_k} \right) R^{\theta, (d)}_t \left(z^{1:D}_{t_{k-1}}, z^{(d)}_{t_{k}} \right) \; + \; O({\Delta t_k}^2), 
\end{align*}
where in \Cref{eq:approximate_euler_rate_matrix_step} we have that the approximated transition rate matrix is computed according to \Cref{eq:def_multivariate_rate_matrix} but using $p^{\theta,{(d)}}_{1|t}(z^{(d)}_1 | z^{1:D}_t)$ instead of $p^{(d)}_{1|t}(z^{(d)}_1 | z^{1:D}_t)$. To obtain $\mathcal{R}^{\theta, E}_k$, we just need to convert $\Tilde{p}_{t_{k} | t_{k-1}} (z^{1:D}_{t_{k} } | z^{1:D}_{t_{k-1}})$ into matrix form. Note that:
\begin{itemize}
    \item $\delta(z^{1:D}_{t_{k-1}}, z^{1:D}_{t_k})$ corresponds to $I$ in matrix form;
    \item From \Cref{eq:def_multivariate_rate_matrix_no_expectation}, $\sum_{d=1} \delta \left( z^{1:D\setminus (d)}_{t_{k-1}}, z^{1:D\setminus (d)}_{t_k} \right) R^{\theta, (d)}_t \left(z^{1:D}_{t_{k-1}}, z^{(d)}_{t_{k}} \right)$ corresponds to $R^\theta_{t_{k-1}}$ in matrix form.
\end{itemize}

These correspondences yield:
\begin{equation*}
    \mathcal{R}^{\theta, E}_k = I + \Delta t_k\,  R^\theta_{t_{k-1}} + O\left( {\Delta t_k}^2 \right)
\end{equation*}

Consequently, we have:
\begin{equation}
    \label{eq:quadratic_error_defog_sampling}
    \left\|  {\mathcal{R}^\theta_k}^\top \nu - {\mathcal{R}^{\theta, E}_k}^\top \nu \right\|_{\text{TV}} \in O \left({\Delta t_k}^2 \right).    
\end{equation}

Therefore, we get the intended result by gathering the results from \Cref{eq:tau_definition}, \Cref{eq:estimation_error}, and \Cref{eq:quadratic_error_defog_sampling}.
\begin{align*}
    \| p(y^{1:D}_1) - p_\text{data} \|_\text{TV} 
    & \leq \sum^K_{k=1} \left( U_k Z D\Delta t_k \;+\; B_{k}(Z D \Delta t_k)^2\; +\; O({\Delta t_k}^2) \right) \\
    & \leq UZD \; + \;  B (Z D)^2 \Delta t +\; O(\Delta t),
\end{align*}

where $\Delta t = \sup_k \Delta t_k$ is the maximum step size, $\sum^K_{k=1} \Delta t_k = 1$, $U = \sup_k U_k$ and $B= \sup_k B_{k}$.
\end{proof}

Below we provide a corollary of \Cref{theorem:error_dist}, which consists of its instantiation for graphs.

\setcounter{theorem}{\numexpr\getrefnumber{corollary:error_dist}-1\relax} %
\begin{corollary}[Bounded deviation of the generated graph distribution]
\label{corollary:error_dist_app}

Let $\{G_t\}_{t \in [0, 1]}$ be a CTMC over graphs starting with $p_0(G_0) = p_\epsilon $ and ending with $p_1(G_1) = p_\text{data}$, whose groundtruth rate matrix is $R_t$. Additionally, let $(\bar{G}_k)_{k=0, 1, \ldots, K}$ be a Euler sampling approximation of that CTMC. Then, we have:
\begin{equation*}
    \| p(G_1) - p_\text{data} \|_\text{TV} \leq \bar{U} \left( X N + E \frac{N(N-1)}{2} \right) \; + \;  \bar{B} \left( X N +   E \frac{N(N-1)}{2} \right)^2 {\Delta t} +\; O(\Delta t)
\end{equation*}
where $\bar{B}$ is defined similarly to $B$ from \Cref{theorem:error_dist} but for graphs: $\bar{B} = \underset{\substack{t \in [0, 1]}}{\sup} \; \underset{\substack{ x_1^{(n)} \in \mathcal{X},\, e_1^{(ij)} \in \mathcal{E} \\ \, G_t}}{\sup} \{ \bar{B}^{(n)}_t, \bar{B}^{(ij)}_t \}$, with $\bar{B}^{(n)}_t, \bar{B}^{(ij)}_t > 0$ defined according to \Cref{assumption:denoising_quadratic}:
\begin{align*}
    &p_{1|t}(x^{(n)}_1 | G_t) \leq \bar{B}^{(n)}_{t}\, p_{t|1}(x^{(n)}_t|x^{(n)}_1)^2 \\
    &p_{1|t}(e^{(ij)}_1 | G_t) \leq \bar{B}^{(ij)}_{t}\, p_{t|1}(e{(ij)}_t|e^{(ij)}_1)^2,
\end{align*}
and 
\begin{align*}
    U =  \underset{\substack{t \in [0, 1], \; \\ G_t, \; G_{t + \mathrm{d}t} \in \mathcal{Z}^D}}{\sup} \Bigg( \bar{C}_0 +  \bar{C}_1 \, \mathbb{E}_{p_1(G_1)} &\left[  p_{{t|1}}(G_t | G_1) \sum^N_{n=1} -\log p^{\theta,{(n)}}_{1|t} (x^{(n)}_1 | G_t) \right] \\
        & +  \bar{C}_2 \, \mathbb{E}_{p_1 (G_1 )} \left[ p_{{t|1}}(G_t | G_1) \sum_{1\leq i < j \leq N} -\log p^{\theta,{(ij)}}_{1|t} (e^{(ij)}_1 | G_t) \right] \Bigg)^{\frac{1}{2}}
\end{align*}

\end{corollary}

\begin{proof}
    Considering the definition of a graph as $G = \left( x^{1:n:N, e^{1\leq i <j \leq N}} \right)$, we proceed through the different steps of the proof of \Cref{theorem:error_dist}. 
    
    In particular, for the discretization error, we obtain:
    \begin{align*}
        \int_{t_{k-1}}^{t_k}  \sup _{G_t } 
        \left\{\sum_{G_{t+\mathrm{d}t} \neq G_{t}}  \left|R_t(G_t, G_{t+\mathrm{d}t})-R_{t_{k-1}}(G_t, G_{t+\mathrm{d}t})\right|\right\} \mathrm{d}t = \bar{B}_k \left( X N +   E \frac{N(N-1)}{2} \right)^2 {\Delta t_k}^2 , 
    \end{align*}
    since there are $X N + E \frac{N(N-1)}{2} $ values of $G_{t + \mathrm{d}t}$ that differ at most in only one coordinate (node or edge) from $G_t$, with $\bar{B}_{k} = \sup_{t \in (t_{k-1}, t_k)} \bar{B}_t$.

    For the estimation error:
    \begin{align*}
        \int_{t_{k-1}}^{t_k} \sup _{G_t} & \left\{\sum_{G_{t+\mathrm{d}t} \neq G_t}\left|R_{t_{k-1}}(G_t, G_{t+\mathrm{d}t})-R_{t_{k-1}}^\theta(G_t, G_{t+\mathrm{d}t})\right|\right\} \mathrm{d}t
         \leq \bar{U_k} \left( X N + E \frac{N(N-1)}{2} \right) \Delta t_k,
    \end{align*}
    since, again, there are $X N + E \frac{N(N-1)}{2} $ values of $G_{t + \mathrm{d}t}$ that differ at most in only one coordinate from $G_t$ and $U_k = \underset{\substack{t \in [t_{k-1}, t_k], \; \\ G_t, \; G_{t + \mathrm{d}t}}}{\sup} \bar{U}^{G_t \to G_{t + \mathrm{d}t}}_k$, where:
    \begin{align*}
        \bar{U}^{G_t \to G_{t + \mathrm{d}t}}_k =  \Bigg( \bar{C}_0 +  \bar{C}_1 \, \mathbb{E}_{p_1(G_1)} & \left[  p_{{t|1}}(G_t | G_1) \sum^N_{n=1} -\log p^{\theta,{(n)}}_{1|t} (x^{(n)}_1 | G_t) \right] \\
        & +  \bar{C}_2 \, \mathbb{E}_{p_1 (G_1 )} \left[ p_{{t|1}}(G_t | G_1) \sum_{1\leq i < j \leq N} -\log p^{\theta,{(ij)}}_{1|t} (e^{(ij)}_1 | G_t) \right] \Bigg)^{\frac{1}{2}},
    \end{align*}
     i.e., the square root of the right-hand side of \Cref{theorem:corollary_error_R_app}.

    The term $\|  {\mathcal{R}^\theta_k}^\top \nu - {\mathcal{R}^{\theta, E}_k}^\top \nu \|_\text{TV}$ remains $O \left({\Delta t_k}^2\right)$.

    Therefore, by finally aggregating the terms above as in \Cref{theorem:error_dist}, we obtain:
    \begin{equation*}
        \| p(G_1) - p_\text{data} \|_\text{TV} \leq \bar{U} \left( X N + E \frac{N(N-1)}{2} \right) \; + \;  \bar{B} \left( X N +   E \frac{N(N-1)}{2} \right)^2 {\Delta t} +\; O(\Delta t)
    \end{equation*}
    \end{proof}

\subsubsection{Critical Analysis and Positioning of \Cref{corollary:error_R} and \Cref{corollary:error_dist}}

\Cref{corollary:error_R} establishes that minimizing the cross-entropy (CE) loss directly corresponds to minimizing an upper bound on the rate matrix estimation error. This result provides a direct and principled justification for using the CE loss, as it promotes accurate sampling from the underlying CTMC.

Prior work derive an ELBO loss and motivate using only the CE loss based on approximations upon the derived ELBO by dropping the rate matrix-dependent terms (see Appendix C.2 in \cite{campbell2024generative}). In contrast, our bounds do not require such simplifications and are agnostic to rate matrix design choices (e.g., stochasticity level), reinforcing our training-sampling decoupling claim. 
Additionally, we note that concurrent work also derives a tractable ELBO \citep{shaul2024flow}; however, it addresses a different family of conditional rate matrices.

\Cref{corollary:error_dist} complements our theoretical analysis by bounding the deviation of the generated distribution from the target distribution, combining two sources of error: the discretization error from CTMC sampling, which is $\mathcal{O}(\Delta t)$, and the rate matrix estimation error, captured by \Cref{corollary:error_R}. This type of bound is a standard objective in generative modeling, aiming to guarantee that the generation process remains faithful to the underlying data distribution. For instance, Theorem 1 in \citet{campbell2022continuous} presents an analogous bound in the context of discrete diffusion, though with significant differences (e.g., they assume a bounded rate matrix estimation error, while we explicitly prove it).

\subsection{Theoretical Results on Architectural Expressivity}\label{app:graph_specific_theory}

We now proceed to the graph specific theoretical results.

\subsubsection{Node Permutation Equivariance and Invariance Properties}\label{app:model_equi_proof}
The different components of a graph generative model have to respect different graph symmetries.
For example, the permutation equivariance of the model architecture ensures the output changes consistently with any reordering of input nodes, while permutation-invariant loss evaluates the model’s performance consistently across isomorphic graphs, regardless of node order. We provide a proof for related properties included in Lemma \ref{lemma:defog_permutation} as follows.

\setcounter{theorem}{\numexpr\getrefnumber{lemma:defog_permutation}-1\relax}
\begin{lemma}[Node Permutation Equivariance and Invariance Properties of DeFoG]\label{lemma:defog_permutation_app}
For any permutation-equivariant denoising neural network, the loss function of DeFoG is permutation invariant, and its sampling probability is permutation invariant.
\end{lemma}
\looseness=-1

\begin{proof} \label{app:equi_proof}

Recall that we denote an undirected graph with $N$ nodes by $G = (x^{1:n:N}, e^{1:i<j:N})$. Here, each node variable is represented as $x^n \in \mathcal{X} = \{1, \dots, X\}$, and each edge variable as $e^{(ij)} \in \mathcal{E} = \{1, \dots, E\}$. We also treat $G$ as a multivariate data point consisting of $D$ discrete variables including all nodes and all edges.

We then consider a permutation function $\sigma$, which is applied to permute the graph's node ordering. 
Under this permutation, the index $n$ will be mapped to $\sigma(n)$. We denote the ordered set of nodes and edges in the original ordering by $x^{(1:n:N)}$ and $e^{(1:i<j:N)}$, respectively, and by $x'^{(1:n:N)}$ and $e'^{(1:i<j:N)}$ after permutation. Additionally, $x^{(n)}$ denotes the $n$-th entry of the corresponding ordered set (and analogously for edges).
By definition, the relationship between the original and permuted entries of the ordered sets is given by: $x'^{(n)} = x^{\sigma^{-1}(n)}$ and $e'^{(ij)} = e^{(\sigma^{-1}(i), \sigma^{-1}(j))}$.

\paragraph{Permutation Equivariant Model}
We begin by proving that the DeFoG architecture is permutation-equivariant, including the network architecture and the additional features employed.

\begin{itemize}
    \item Permutation Equivariance of RRWP Features: Recall that the RRWP features until $K-1$ steps are defined as \(\operatorname{RRWP}(M) = P = [I, M, \dots, M^{K-1}] \in \mathbb{R}^{n \times n \times K}\), where $M^k = (D^{-1}A)^k,\; 0\leq k < K$.

    We first prove that $M(A)=D^{-1}A$ is permutation equivariant: 
    \begin{align*}
    {M(A')}^{(ij)} &= (D'^{-1}A')^{(ij)}\\
    &= (1/(D')^{(ii)})(A')^{(ij)}\\
    &= (1/D^{(\sigma^{-1}(i), \sigma^{-1}(i))})(A)^{(\sigma^{-1}(i), \sigma^{-1}(j))}\\
    &= (D^{-1}A)^{(\sigma^{-1}(i), \sigma^{-1}(j))}\\
    &= {(M(A)')}^{(ij)}.
    \end{align*}
    To facilitate notation, in the following proofs, we consider the matrix $\pi \in \{0,1\}^{N \times N}$ representing the same permutation function $\sigma$, with the permuted features represented as $\pi M \pi^{T}$.
    We then prove that $\operatorname{RRWP}$ is permutation equivariant.
    \begin{align*}
    {P(\pi M \pi^T)} &= [\pi I \pi^T, \pi M \pi^T, \dots, (\pi M \pi^T)^{K-1}]\\
     &= [\pi I \pi^T, \pi M \pi^T, \dots, (\pi M^{K-1}\pi^T)], \quad \text{since $\pi^T\pi=\mathbf{I}_N$}\\
    &= \pi [I, M, \dots, M^{K-1}] \pi^T \\
    &= {\pi P(M) \pi^T}.
    \end{align*}
    \item Permutation Equivariance of Model Layers:
    The model layers (MLP, FiLM, PNA, and self-attention) preserve permutation equivariance, as shown in prior work (e.g., \citet{vignac2022digress} in Lemma 3.1).
\end{itemize}

Hence, since all of its components are permutation-equivariant, so is the DeFoG full architecture. 
The next results hold for any permutation equivariant denoising neural network.

\paragraph{Permutation Invariant Loss Function}
DeFoG's loss consists of summing the cross-entropy loss between the predicted clean graph and the true clean graph (node and edge-wise). \citet{vignac2022digress}, Lemma 3.2, provide a concise proof that this loss is permutation invariant.

\paragraph{Permutation Invariant Sampling Probability}
Given a noisy graph $G_0$ sampled from the initial distribution, $p_0$, the rate matrix at any time point $t \in [0, 1]$ defines the denoising process.

Recall that the conditional rate matrix for a variable $z_t$ at each time step $t$ is defined as:
\begin{equation*}
    R^*_t(z_t, z_{t+\mathrm{d}t} | z_1) = \frac{\operatorname{ReLU}\left[\partial_t p_{t|1} (z_{t+\mathrm{d}t} | z_1) - \partial_t p_{t|1} (z_t | z_1)\right]}{Z^{>0}_t \, p_{t|1} (z_t | z_1)}.
\end{equation*}

In our multivariate formulation, we compute this rate matrix independently for each variable inside the graph. We denote the concatenated rate matrix entries for all nodes with $R^*_t(x'^{(1:N)}_t, x'^{(1:N)}_{t+\mathrm{d}t} | x'^{(1:N)}_1)$.

In the following part, we demonstrate the node permutation equivariance of the rate matrix predicted by the trained equivariant network, denoted by $f_\theta$. The proof for edges follows a similar logic. Suppose that the noisy graph $G_t$ is permuted, and the permuted graph has nodes denoted by $x'^{(1:N)}_t$, i.e., $x'^{(n)}_t = x^{\sigma^{-1}(n)}_t$. We have:

\begin{align*}
    &R^*_t  \left(x'^{(1:N)}_t, x'^{(1:N)}_{t+\mathrm{d}t} | x'^{(1:N)}_1 \right )^{(n)}\\
    =& R^*_t \left(x'^{(n)}_t, x'^{(n)}_{t+\mathrm{d}t} | x'^{(n)}_1 \right) \\
    =&  \frac{\operatorname{ReLU}\left[\partial_t p_{t|1} (x'^{(n)}_{t+\mathrm{d}t} | x'^{(n)}_1) - \partial_t p_{t|1} (x'^{(n)}_t | x'^{(n)}_1)\right]}{Z^{>0}_t \, p_{t|1} (x'^{(n)}_t | x'^{(n)}_1)}\\
    =& \frac{\operatorname{ReLU}\left[\partial_t p_{t|1} (x^{\sigma^{-1}(n)}_{t+\mathrm{d}t} | x'^{(n)}_1) - \partial_t p_{t|1} (x^{\sigma^{-1}(n)}_t | x'^{(n)}_1)\right]}{Z^{>0}_t \, p_{t|1} (x^{\sigma^{-1}(n)}_t | x'^{(n)}_1)}, \stepcounter{equation} \tag{\theequation} \label{eq:rate_matrix_permutation_equivariance_1}\\
    =& \frac{\operatorname{ReLU}\left[\partial_t p_{t|1} (x^{\sigma^{-1}(n)}_{t+\mathrm{d}t} | x^{\sigma^{-1}(n)}_1) - \partial_t p_{t|1} (x^{\sigma^{-1}(n)}_t | x^{\sigma^{-1}(n)}_1)\right]}{Z^{>0}_t \, p_{t|1} (x^{\sigma^{-1}(n)}_t | x^{\sigma^{-1}(n)}_1)}, \stepcounter{equation} \tag{\theequation} \label{eq:rate_matrix_permutation_equivariance_2} \\
    =& R^*_t(x^{\sigma^{-1}(n)}_t, x^{\sigma^{-1}(n)}_{t+\mathrm{d}t} | x^{\sigma^{-1}(n)}_1)\\
    =& R^*_t(x^{(1:N)}_t, x^{(1:N)}_{t+\mathrm{d}t} | x^{(1:N)}_1)^{\sigma^{-1}(n)}.
    \end{align*}
In \Cref{eq:rate_matrix_permutation_equivariance_1}, we use the definition of permuted ordered set for $x'_{t}$ and $x'_{t+\mathrm{d}t}$, and, in \Cref{eq:rate_matrix_permutation_equivariance_2}, we use that $f_\theta$ is  equivariant.

Furthermore, the transition probability at each time step $t$ is given by:
\begin{align*}
    \Tilde{p}_{t + \Delta t | t} (G^{1:D}_{t + \Delta t}| G^{1:D}_t ) &= \prod_{d=1}^D \Tilde{p}^{(d)}_{t + \Delta t | t} (G^{(d)}_{t + \Delta t}| G^{1:D}_t) \\ 
    &= \prod_{d=1}^D \left( \delta (G^{(d)}_t, G^{(d)}_{t + \Delta t}) + \mathbb{E}_{p^\theta (G^{(d)}_1 | G^{1:D}_t)} \left[ R^{(d)}_t(G^{(d)}_t, G^{(d)}_{t + \Delta t} | G^{(d)}_1) \right] \, \Delta t \right).
\end{align*}
The transition probability $\Tilde{p}_{t + \Delta t |t}(G^{1:D}_{t + \Delta t}| G^{1:D}_t)$ is expressed as a product over all nodes and edges, an operation that is inherently a permutation invariant function with respect to node ordering. Furthermore, as demonstrated earlier, the term $\mathbb{E}_{p^\theta (G^{(d)}_1 | G^{1:D}t)} \left[ R^{(d)}_t(G^{(d)}_t, G^{(d)}_{t + \Delta t} | G^{(d)}_1) \right]$ is permutation equivariant since $R^{(d)}_t$ and $p_{t|1}^\theta (G^{(d)}_1 | G^{1:D}_t)$ are both permutation equivariant if model $f_\theta$ is permutation equivariant. Consequently, since the composition of these components yields a permutation invariant function, we conclude that the transition probability of the considered CTMC is permutation invariant.

We finally verify the final sampling probability, $\Tilde{p}_{1} (G^{1:D}_{1})$, is permutation-invariant.
To simulate $G_1^{1:D}$ over $K$ time steps $[0 = t_0, t_1, \ldots, t_K=1]$, we can marginalize it first by taking the expectation over the state at the last time step $t_{K-1}$. Specifically, we have $p_{1} (G^{1:D}_{1}) = \mathbb{E}_{p_{t_{K-1}}(G^{1:D}_{t_{K-1}})}\left[\Tilde{p}_{1 | t_{K-1}} (G^{1:D}_{1}| G^{1:D}_{t_{K-1}} )\right]$.
Since the process is Markovian, this expression can be sequentially extended over the $T$ steps through successive expectations. 
\begin{align*}
    p_{1} (G^{1:D}_{1}) &= \mathbb{E}_{p_{t_{K-1}}(G^{1:D}_{t_{K-1}})}\left[\Tilde{p}_{1 | t_{K-1}} (G^{1:D}_{1}| G^{1:D}_{t_{K-1}} )\right]\\
    &= \mathbb{E}_{p_{t_{K-2}}(G^{1:D}_{t_{K-2}})}\underbrace{\left[\mathbb{E}_{\Tilde{p}_{t_{K-1} | t_{K-2}} (G^{1:D}_{t_{K-1}}| G^{1:D}_{t_{K-2}} )}\left[\Tilde{p}_{1 | t_{K-1}} (G^{1:D}_{1}| G^{1:D}_{t_{K-1}} )\right]\right]}_{\Tilde{p}_{1 | t_{K-2}} (G^{1:D}_{1}| G^{1:D}_{t_{K-2}} )}\\
    &\ldots\\
    &= \mathbb{E}_{p_0(G^{1:D}_{0})}\left[\mathbb{E}_{\Tilde{p}_{t_{1} | 0} (G^{1:D}_{t_{1}}| G^{1:D}_{0} )}\left[\ldots \mathbb{E}_{\Tilde{p}_{t_{K-1} | t_{K-2}} (G^{1:D}_{t_{K-1}}| G^{1:D}_{t_{K-2}} )}\left[\Tilde{p}_{1 | t_{K-1}} (G^{1:D}_{1}| G^{1:D}_{t_{K-1}} )\right]\right]\right]\\
\end{align*}
Due to the fact that each function in the sequence is itself permutation-invariant and that the initial distribution $p_0(G^{1:D}_{0})$ is permutation invariant (see \Cref{app:initial_distributions}), the composition of permutation-invariant functions preserves this invariance throughout. Thus, the final sampling probability is invariant over isomorphic graphs.
\end{proof}

\subsubsection{Additional Features Expressivity}
\label{app:additional_feat}

This section explains the expressivity of the RRWP features used in DeFoG.
We summarize the findings of \citet{ma2023graph} in \Cref{thm:ma}, who establish that, by encoding random walk probabilities, the RRWP positional features can be used to arbitrarily approximate several essential graph properties when fed into an MLP. Specifically, point 1 shows that RRWP with $K-1$ steps encodes all shortest path distances for nodes up to $K-1$ hops. Additionally, points 2 and 3 indicate that RRWP features effectively capture diverse graph propagation dynamics.

\setcounter{theorem}{6} %
\begin{proposition}[Expressivity of an MLP with RRWP encoding~\citep{ma2023graph}]
\label{thm:ma}
For any $n \in \mathbb{N}$, let $G_n \subseteq \{0, 1\}^{n \times n}$ denote all adjacency matrices of $n$-node graphs. For $K \in \mathbb{N}$, and $A \in \mathbb{G}_n$, consider the RRWP: 
\[
P = [I, M, \dots, M^{K-1}] \in \mathbb{R}^{n \times n \times K}
\]
Then, for any $\epsilon > 0$, there exists an $\operatorname{MLP}: \mathbb{R}^{K-1} \to \mathbb{R}$ acting independently across each $n$ dimension such that $\operatorname{MLP}(P)$ approximates any of the following to within $\epsilon$ error:
\begin{enumerate}
    \item $\operatorname{MLP}(P)_{ij} \approx \text{SPD}_{K-1}(i, j)$
    \item $\operatorname{MLP}(P) \approx \sum_{k=0}^{K-1} \theta_k (D^{-1}A)^k$
    \item $\operatorname{MLP}(P) \approx \theta_0 I + \theta_1 A$
\end{enumerate}
in which $\text{SPD}_{K-1}(i, j)$ is the $K-1$ truncated shortest path distance, and $\theta_k \in \mathbb{R}$ are arbitrary coefficients.
\end{proposition}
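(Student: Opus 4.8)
The statement is quoted from \citet{ma2023graph}; my plan is to reconstruct the argument, which reduces to combining two ingredients: (i) elementary identities that write each of the three target quantities as a \emph{fixed}, finite composition of affine maps, the hard threshold $x\mapsto\mathbf{1}[x>0]$, and binary maxima, applied coordinatewise to the feature vector $P_{ij}\in\mathbb{R}^{K}$ whose $k$-th entry is $(M^{k})_{ij}$ ($k=0,\dots,K-1$, $M=D^{-1}A$); and (ii) the observation that any such composition is realized \emph{exactly} by a ReLU network whose width and depth depend only on $K$ (hence within any prescribed $\epsilon$). The only place where exactness is not automatic is the threshold, since no continuous network approximates $\mathbf{1}[x>0]$ uniformly across $0$; the fix I would use is a uniform separation bound. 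Writing $M_{uv}=A_{uv}/d_u$, a walk $i=i_0\to i_1\to\cdots\to i_m=j$ contributes weight $\prod_{l=0}^{m-1}d_{i_l}^{-1}\ge n^{-m}$, so for fixed $n$ every entry $(M^{m})_{ij}$ lies in $\{0\}\cup[n^{-(K-1)},1]$. On such a well-separated set the piecewise-linear map $x\mapsto 1-\operatorname{ReLU}\!\bigl(1-n^{K-1}x\bigr)$ — a depth-two ReLU map, legitimate since its inputs are nonnegative — equals $\mathbf{1}[x>0]$ exactly; binary maxima are exact via $\max(a,b)=a+\operatorname{ReLU}(b-a)$, and affine combinations are a single linear layer, so any composition of these is again an MLP of the required form.

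For points 2 and 3 little work is needed. Point 2 is immediate: $\sum_{k=0}^{K-1}\theta_k(M^{k})_{ij}$ is already an affine function of $P_{ij}$, so a single linear layer gives equality. For point 3 I would first recover the raw adjacency from the first power, using $M_{ij}=A_{ij}/d_i$ with $d_i\le n$: the nonzero values of $M_{ij}$ lie in $[1/n,1]$, so applying the threshold gadget (now with separation constant $1/n$) to the coordinate $(M^{1})_{ij}$ yields $A_{ij}=\mathbf{1}[M_{ij}>0]$ exactly; then $\theta_0 I+\theta_1 A$ at entry $(i,j)$ is the affine combination $\theta_0(M^{0})_{ij}+\theta_1 A_{ij}$.

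Point 1 is the substantive step. The combinatorial fact I would establish first is that, all walk weights being strictly positive, $(M^{m})_{ij}>0$ iff there exists a walk of length $m$ from $i$ to $j$; since any $i$--$j$ walk contracts to an $i$--$j$ path of no greater length, $\min\{m\ge 0:(M^{m})_{ij}>0\}$ equals the shortest-path distance $d_G(i,j)$ (empty minimum $=+\infty$), whence $\max_{0\le m\le k}\mathbf{1}[(M^{m})_{ij}>0]=\mathbf{1}[d_G(i,j)\le k]$. This gives the closed form
\begin{equation*}
    \text{SPD}_{K-1}(i,j)\;=\;\min\bigl(d_G(i,j),\,K-1\bigr)\;=\;\sum_{k=0}^{K-2}\Bigl(1-\max_{0\le m\le k}\mathbf{1}[(M^{m})_{ij}>0]\Bigr),
\end{equation*}
which I would verify by splitting into the cases $d_G(i,j)\le K-1$ and $d_G(i,j)>K-1$ (the latter including disconnected pairs, where every indicator is $1$ and the sum caps at $K-1$). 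The right-hand side is a composition of $K$ threshold gadgets, a balanced cascade of binary maxima of depth $\lceil\log_2 K\rceil$, and a final affine sum, so by ingredient (ii) it is computed exactly by an MLP of size polynomial in $K$, hence within any $\epsilon$.

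The main obstacle is concentrated in the threshold step of point 1 (and, less delicately, in recovering $A$ for point 3): approximating an indicator by a continuous net is impossible uniformly on a set containing $0$, so the argument genuinely relies on the graph family being finite for fixed $n$, which forces the strict separation $(M^{m})_{ij}\in\{0\}\cup[n^{-(K-1)},1]$ and in fact upgrades ``within $\epsilon$'' to exact equality. The rest is bookkeeping: checking the walk-to-path reduction, checking the truncation formula for $\text{SPD}_{K-1}$ in both cases, and confirming that the primitives (affine layer, $\operatorname{ReLU}$-based clamp, binary max) compose into a single MLP with the stated signature — the only caveat being the harmless off-by-one between the $K$ channels of $P$ and the $\mathbb{R}^{K-1}$ domain stated for the MLP, which I would resolve by retaining the $M^{0}=I$ slice so that the self-distances $d_G(i,i)=0$ are read directly off that coordinate.
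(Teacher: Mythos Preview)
The paper does not supply its own proof of this proposition: it is stated as a summary of the results of \citet{ma2023graph}, with the surrounding text only paraphrasing the conclusions (``RRWP with $K-1$ steps encodes all shortest path distances for nodes up to $K-1$ hops'' and that RRWP ``capture[s] diverse graph propagation dynamics''). There is therefore nothing in the present paper to compare your reconstruction against.

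That said, your reconstruction is sound and is essentially the argument one would expect from the original source. The two substantive ingredients you isolate --- the separation bound $(M^{m})_{ij}\in\{0\}\cup[n^{-(K-1)},1]$ for fixed $n$, which turns the discontinuous threshold into an exact piecewise-linear clamp, and the telescoping identity $\text{SPD}_{K-1}(i,j)=\sum_{k=0}^{K-2}\bigl(1-\mathbf{1}[d_G(i,j)\le k]\bigr)$ together with $\mathbf{1}[d_G(i,j)\le k]=\max_{m\le k}\mathbf{1}[(M^{m})_{ij}>0]$ --- are precisely what is needed for point 1; points 2 and 3 are, as you note, immediate once the threshold gadget is in hand. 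Your handling of the $\mathbb{R}^{K-1}$ versus $\mathbb{R}^{K}$ mismatch is also the correct diagnosis: the stated MLP domain is off by one, and the $M^{0}=I$ channel is genuinely needed to read off the diagonal.
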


\citet{siraudin2024cometh} experimentally validate the effectiveness of RRWP features for graph diffusion models and propose extending their proof to additional graph properties that GNNs fail to capture \citep{xu2018powerful,morris2019weisfeiler}. For example, an MLP with input $M^k$ with $k = N-1$ for an $N$-node graph can approximate the connected component of each node and the number of vertices in the largest connected component. Additionally, RRWP features can be used to capture cycle-related information.

\clearpage
\section{Conditional Generation}\label{app:conditional_generation}

In this section, we describe how to seamlessly integrate DeFoG with existing methods for CTMC-based conditioning mechanisms. In this setting, all the examples are assumed to have a label. The objective of conditional generation is to steer the generative process based on that label, so that at sampling time we can guide the model to which class of samples we are interested in obtaining.  

We focus on classifier-free guidance methods, as these models streamline training by avoiding task-specific classifiers. This approach has been widely adopted for continuous state-space models, e.g., for image generation, where it has been shown to enhance the generation quality of the generative model~\citep{ho2022classifier,sanchezstay}. Recently, \citet{nisonoff2024unlocking} extended this method to discrete flow matching in a principled manner. In this paper, we adopt their formulation.
 
In this framework, 90\% of the training is performed with the model having access to the label of each noisy sample. This allows the model to learn the conditional rate matrix, $R^\theta_t(z_t, z_{t+\mathrm{d}t}|y)$. In the remaining 10\% of the training procedure, the labels of the samples are masked, forcing the model to learn the unconditional generative rate matrix $R^\theta_t(z_t, z_{t+\mathrm{d}t})$. The conditional training enables targeted and accurate graph generation, while the unconditional phase ensures robustness when no conditions are specified. The combination of both conditional and unconditional training offers a more accurate pointer to the conditional distribution, typically described by the distance between the conditional and unconditional prediction. In our framework, this pointer is defined through the ratio between the conditional and unconditional rate matrices, as follows:

$$
R_t^{\theta, \gamma}(x, \Tilde{x}|y) = R^\theta_t(x, \Tilde{x}|y)^\gamma R^\theta_t(x, \Tilde{x})^{1-\gamma} = R^\theta_t(x, \Tilde{x}|y) {\left(\frac{R^\theta_t(x, \Tilde{x}|y)}{R^\theta_t(x, \Tilde{x})}\right)}^{\gamma-1},
$$
where $\gamma$ denotes the guidance weight.
In particular, the case with $\gamma=1$ corresponds to standard conditional generation, while $\gamma=0$ represents standard unconditional generation. As $\gamma$ increases, the conditioning effect described by ${\left(\frac{R^\theta_t(x, \Tilde{x}|y)}{R^\theta_t(x, \Tilde{x})}\right)}^{\gamma-1}$ is strengthened, thereby enhancing the quality of the generated samples. We observed $\gamma = 2.0$ to be the best performing value for our digital pathology experiments (\Cref{app:additional_res_cond}), as detailed in \Cref{tab:hyperparam}.

Overall, conditional generation is pivotal for guiding models to produce graphs that meet specific requirements, offering tailored solutions for complex real-world tasks. The flexibility of DeFoG, being well-suited for conditional generation, marks an important step forward in advancing this direction, promising greater adaptability and precision in future graph-based applications.

\clearpage
\section{Experimental Details}\label{app:experimental_details}

This section provides further details on the experimental settings used in the paper.

\subsection{Architecture and Additional Features}
\label{app:arch_feats}

\paragraph{Denoising Neural Architecture}

DeFoG's denoising neural network takes a noisy graph $G_t$ as input and predicts the clean marginal probability for each node $x^{(n)}$ via $p^{\theta,(n)}_{1|t} (\cdot|G_t)$ and for each edge $e^{(ij)}$ via $p^{\theta,(ij)}_{1|t} (\cdot|G_t)$.  This formulation boils down the graph generative task to a graph-to-graph mapping.
While both message-passing layers and graph transformers can be used for this task, graph transformers have empirically outperformed message-passing layers in graph generation~\citep{qin2023sparse}.
DeFoG thus adopts the transformer architecture of~\citet{vignac2022digress}, using multi-head attention layers which encode node, edge, and graph-level features while preserving node permutation equivariance.

\paragraph{Enhancing Model Expressivity}
Graph neural networks, including graph transformers, have inherent limitations in their expressive power~\citep{xu2018powerful,zhu2023structural}. An usual approach to overcome the limited representation power of graph neural networks consists of explicitly augmenting the inputs with features that the networks would otherwise struggle to learn.
We adopt Relative Random Walk Probabilities (RRWP) encodings that are proved to be expressive for both discriminative~\citep{ma2023graph} and generative settings~\citep{siraudin2024cometh}.
RRWP encodes the likelihood of traversing from one node to another in a graph through random walks of varying lengths, offering insights into graph dynamics across different hop distances.
In particular, given a graph with an adjacency matrix $A$, we generate $K-1$ powers of its degree-normalized adjacency matrix, $M=D^{-1}A$, i.e., $\left[I, M, M^2, \ldots, M^{K-1}\right]$. 
We concatenate the diagonal entries of each power to their corresponding node embedding, while combining and appending the non-diagonal to their corresponding edge embeddings. 
In addition to their enhanced expressiveness, RRWP features are easy to compute and thus offer a notably more efficient and scalable alternative to the resource-intensive spectral and cycle features \cite{vignac2022digress} commonly employed in prior works (see \Cref{app:efficiency_additional_features}).

\subsection{Dataset Details}\label{app:datasets_details}

\subsubsection{Synthetic Datasets}
\label{app:synthetic_datasets_details}
Here, we describe the datasets employed in our experiments and outline the specific metrics used to evaluate model performance on each dataset. Additional visualizations of example graphs from each dataset, along with generated graphs, are provided in \Cref{fig:synthetic_graphs,fig:molecule_graphs,fig:tls_graphs}.

\paragraph{Description}
We use three synthetic datasets with distinct topological structures. The first is the \textit{planar} dataset \citep{martinkus2022spectre}, which consists of connected planar graphs—graphs that can be drawn on a plane without any edges crossing. The second dataset, \textit{tree} \citep{bergmeister2023efficient}, contains tree graphs, which are connected graphs with no cycles. Lastly, the \textit{Stochastic Block Model (SBM)} dataset \citep{martinkus2022spectre} features synthetic clustering graphs where nodes within the same cluster have a higher probability of being connected.

The \textit{planar} and \textit{tree} datasets exhibit well-defined deterministic graph structures, while the \textit{SBM} dataset, commonly used in the literature, stands out due to its stochasticity, resulting from the random sampling process that governs its connectivity.

\paragraph{Metrics}

We follow the evaluation procedures described by \citet{martinkus2022spectre, bergmeister2023efficient}, using both dataset-agnostic and dataset-specific metrics.

First, dataset-agnostic metrics assess the alignment between the generated and training distributions for specific general graph properties. We map the graphs to their node degrees (Deg.), clustering coefficients (Clus.), orbit count (Orbit), eigenvalues of the normalized graph Laplacian (Spec.), and statistics derived from a wavelet graph transform (Wavelet). We then compute the distance to the corresponding statistics calculated for the test graphs. For each statistic, we measure the distance between the empirical distributions of the generated and test sets using Maximum Mean Discrepancy (MMD). These distances are aggregated into the \emph{Ratio} metric. To compute this, we first calculate the MMD distances between the training and test sets for the same graph statistics. The final Ratio metric is obtained by dividing the average MMD distance between the generated and test sets by the average MMD distance between the training and test sets. A Ratio value of 1 is ideal, as the distance between the training and test sets represents a lower-bound reference for the generated data's performance.

Next, we report dataset-specific metrics using the V.U.N. framework, which assesses the proportion of graphs that are valid (V), unique (U), and novel (N). Validity is assessed based on dataset-specific properties: the graph must be planar, a tree, or statistically consistent with an SBM for the planar, tree, and SBM datasets, respectively. Uniqueness captures the proportion of non-isomorphic graphs within the generated graphs, while novelty measures how many of these graphs are non-isomorphic to any graph in the training set.

\subsubsection{Molecular datasets}\label{app:molecular_datasets_details}

\paragraph{Description}

Molecular generation is a key real-world application of graph generation. It poses a challenging task to current graph generation models to their rich chemistry-specific information, involving several nodes and edges classes and leaning how to generate them jointly, and more complex evaluation pipelines.
To assess DeFoG's performance on molecular datasets, we use three benchmarks that progressively increase in molecular complexity and size.

First, we use the QM9 dataset \citep{wu2018moleculenet}, a subset of GDB9 \citep{ruddigkeit2012enumeration}, which contains molecules with up to 9 heavy atoms.

Next, we evaluate DeFoG on the Moses benchmark \citep{polykovskiy2020molecular}, derived from the ZINC Clean Leads collection \citep{sterling2015zinc}, featuring molecules with 8 to 27 heavy atoms, filtered by specific criteria.

Then, we include the Guacamol benchmark \citep{brown2019guacamol}, based on the ChEMBL 24 database \citep{mendez2019chembl}. This dataset comprises synthesized molecules, tested against biological targets, with sizes ranging from 2 to 88 heavy atoms.

Lastly, we also include the ZINC250k dataset \citep{sterling2015zinc}, which contains 249,455 molecules with up to 38 heavy atoms from 9 element types. 
We evaluate DeFoG’s performance under the same setting of previous works \citep{jo2024graphgenerationdiffusionmixture}.

\paragraph{Metrics}
For the QM9 dataset, we follow the dataset splits and evaluation metrics outlined by \citet{vignac2022digress}. For the Moses and Guacamol benchmarks, we adhere to the training setups and evaluation metrics proposed by \citet{polykovskiy2020molecular} and \citet{brown2019guacamol}, respectively. Note that Guacamol includes molecules with charges; therefore, the generated graphs are converted to charged molecules based on the relaxed validity criterion used by \citet{jo2022score} before being translated to their corresponding SMILES representations. The validity, uniqueness, and novelty metrics reported by the Guacamol benchmark are actually V, V.U., and V.U.N., and are referred to directly as V, V.U., and V.U.N. in the table for clarity.
For ZINC250k, we adopt the standard evaluation metrics commonly used for this benchmark (see, e.g., \citet{jo2024graphgenerationdiffusionmixture,eijkelboom2024variational}), which comprise a subset of the metrics described above.

\subsubsection{Digital Pathology Datasets}\label{app:digipath_dataset_details}

\paragraph{Description} Graphs, with their natural ability to represent relational data, are widely used to capture spatial biological dependencies in tissue images. This approach has proven successful in digital pathology tasks such as microenvironment classification \citep{wu2022graph}, cancer classification \citep{pati2022hierarchical}, and decision explainability \citep{jaume2021quantifying}. More recently, graph-based methods have been applied to generative tasks \citep{madeira2023tertiary}, and an open-source dataset was made available by \citet{madeira2024generative}. This dataset consists of cell graphs where the nodes represent biological cells, categorized into 9 distinct cell types (node classes), and edges model local cell-cell interactions (a single class). For further details, refer to \citet{madeira2024generative}.

\paragraph{Metrics} Each cell graph in the dataset can be mapped to a TLS (Tertiary Lymphoid Structure) embedding, denoted as 
\( \kappa = [\kappa_0, \ldots, \kappa_5] \in \mathbb{R}^6 \), which quantifies its TLS content. A graph \( G \) is classified as having low TLS content if \( \kappa_1(G) < 0.05 \), and high TLS content if \( \kappa_2(G) > 0.05 \). Based on these criteria, the dataset is split into two subsets: high TLS and low TLS. In prior work, TLS generation accuracy was evaluated by training generative models on these subsets separately, and verifying if the generated graphs matched the corresponding TLS content label. We compute TLS accuracy as the average accuracy across both subsets. For DeFoG, we conditionally train it on both subsets simultaneously, as described in \Cref{app:conditional_generation}, and compute TLS accuracy based on whether the generated graphs adhere to the conditioning label. Additionally, we report the V.U.N. metric (valid, unique, novel), similar to what is done for the synthetic datasets (see \Cref{app:synthetic_datasets_details}). A graph is considered valid in this case if it is a connected planar graph, as the graphs in these datasets were constructed using Delaunay triangulation.

\subsection{Resources}\label{app:resources}

The training and sampling times for the different datasets explored in this paper are provided in \Cref{tab:full_time}.
All the experiments in this work were run on a single NVIDIA A100-SXM4-80GB GPU. 

DeFoG’s memory usage matches existing diffusion models, with quadratic complexity in node number due to complete-graph modeling. Rate matrix overhead is negligible, and RRWP features are more efficient to compute than previous alternatives \cite{vignac2022digress,xu2024discrete}, as shown in \Cref{app:efficiency_additional_features}.

\begin{table*}[t]
    \caption{Training and sampling time on each dataset.}
    \label{tab:full_time}
    \centering
    \resizebox{0.85\linewidth}{!}{
    \begin{tabular}{lccccc}
    \toprule
    Dataset & Min Nodes & Max Nodes & Training Time (h) & Graphs Sampled & Sampling Time (h) \\
    \midrule
    Planar & 64 & 64 & 29 & 40 & 0.07 \\
    Tree & 64 & 64 & 8 & 40 & 0.07 \\
    SBM  & 44 & 187 & 75 & 40 & 0.07 \\
    QM9 & 2 & 9 & 6.5 & 10000 & 0.2 \\
    QM9(H) & 3 & 29 & 55 & 10000 & 0.4  \\
    Moses &  8 & 27 & 46 & 25000 & 5 \\
    Guacamol & 2 & 88 & 141& 10000 & 7 \\
    TLS & 20 & 81 & 38 & 80 & 0.15 \\
    ZINC250k & 6 & 38 & 14 & 10000 & 4.8 \\ %
    \bottomrule
    \end{tabular}}
\end{table*}

\subsection{Hyperparameter Tuning}\label{app:hyperparameter_tuning}

The default hyperparameters for training and sampling for each dataset can be found in the provided code repository. In \Cref{tab:hyperparam}, we specifically highlight their values for the proposed training and sampling strategies (\Cref{sec:method_graphs} and \Cref{app:initial_distributions}), and conditional guidance parameter (see \Cref{app:conditional_generation}).
As the training process is by far the most computationally costly stage, we aim to minimize changes to the default model training configuration. Nevertheless, we demonstrate the effectiveness of these modifications on certain datasets:
\begin{enumerate}
    \item SBM performs particularly well with absorbing distributions, likely due to its distinct clustering structure, which differs from other graph properties. Additionally, when tested with a marginal model, SBM can achieve a V.U.N. of 80.5\% and an average ratio of 2.5, which also reaches state-of-the-art performance.
    \item Guacamol and MOSES are trained directly with polydec distortion to accelerate convergence, as these datasets are very large and typically require a significantly longer training period.
    \item For the tree dataset, standard training yielded suboptimal results (85.3\% for V.U.N. and 1.8 for average ratio). However, a quick re-training using polydec distortion achieved state-of-the-art performance with 7 hours of training.
\end{enumerate}

\begin{table*}[t]
    \caption{Training and sampling parameters for full-step sampling (500 or 1000 steps for molecular and synthetic datasets, respectively).}
    \label{tab:hyperparam}
    \centering
    \resizebox{1.0\linewidth}{!}{
    \begin{tabular}{lcccccc}
    \toprule
    & \multicolumn{2}{c}{Train} & \multicolumn{3}{c}{Sampling} & Conditional \\
    \cmidrule(lr){2-3} \cmidrule(lr){4-6} \cmidrule(lr){7-7}
    Dataset & Initial Distribution & Train Distortion & Sample Distortion & $\omega$ (Target Guidance) & $\eta$ (Stochasticity) & $\gamma$ \\ 
    \midrule
    Planar & Marginal & Identity & Polydec & 0.05 & 50& {---} \\
    Tree & Marginal & Polydec & Polydec & 0.0 & 0.0 & {---} \\
    SBM & Absorbing & Identity & Identity & 0.0 & 0.0 & {---} \\
    QM9 & Marginal & Identity & Polydec & 0.0 & 0.0 & {---} \\
    QM9(H) & Marginal & Identity & Polydec & 0.05 & 0.0 & {---} \\
    Moses & Marginal & Polydec & Polydec & 0.5 & 200 & {---} \\
    Guacamol & Marginal & Polydec & Polydec & 0.1 & 300 & {---} \\
    TLS & Marginal & Identity & Polydec & 0.05 & 0.0 & 2.0 \\
    ZINC250k & Marginal & Identity & Polydec & 0.0 & 200 & {---} \\
    \bottomrule
    \end{tabular}}
\end{table*}

\begin{figure*}[t]
    \centering
    \begin{subfigure}[b]{0.99\linewidth}
        \centering
        \begin{subfigure}[b]{0.99\linewidth}
            \includegraphics[width=0.99\linewidth]{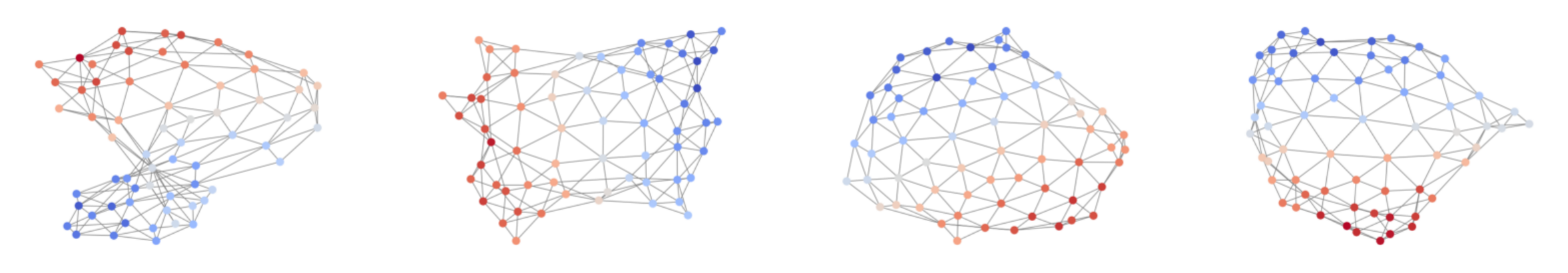}
        \end{subfigure}
        \hfill
        \begin{subfigure}[b]{0.99\linewidth}
            \includegraphics[width=0.99\linewidth]{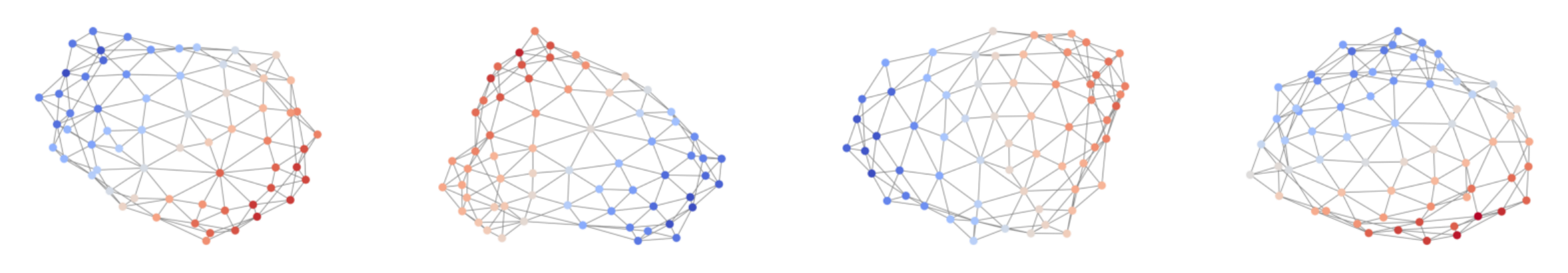}
        \end{subfigure}
        \caption{Planar dataset.}  %
    \end{subfigure}
    \hfill
    \begin{subfigure}[b]{0.99\linewidth}
        \centering
        \begin{subfigure}[b]{0.99\linewidth}
            \includegraphics[width=0.99\linewidth]{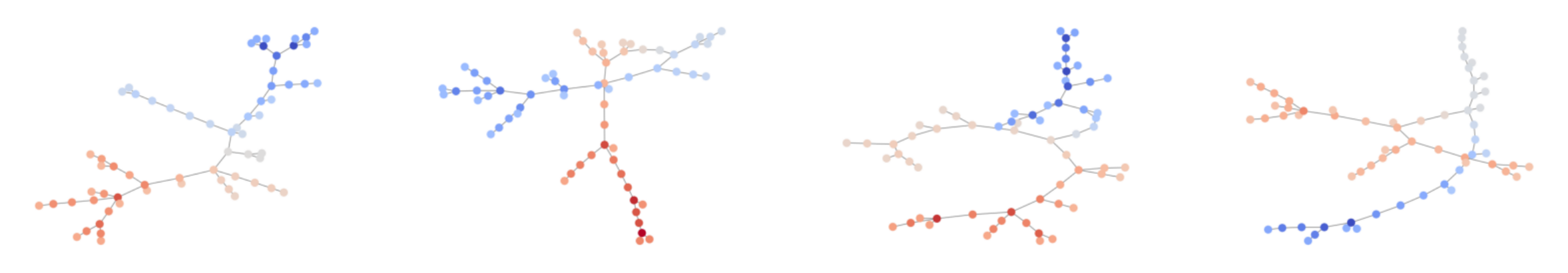}
        \end{subfigure}
        \hfill
        \begin{subfigure}[b]{0.99\linewidth}
            \includegraphics[width=0.99\linewidth]{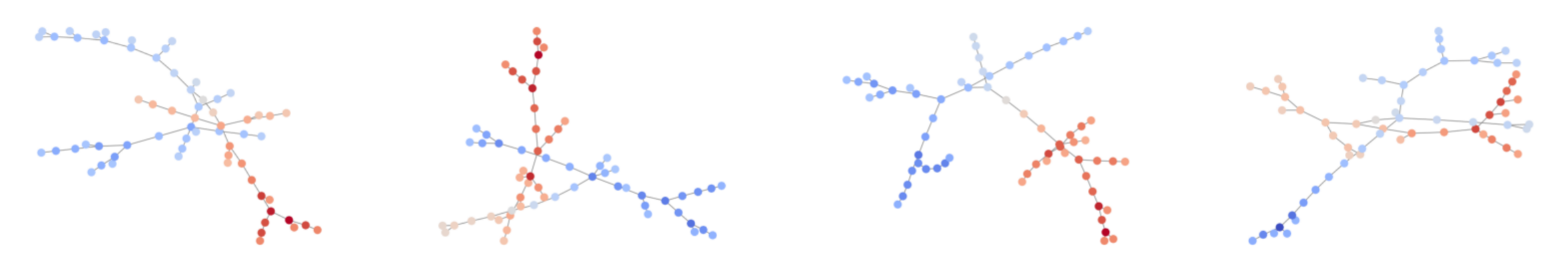}
        \end{subfigure}
        \caption{Tree dataset.}  %
    \end{subfigure}
    \hfill
    \begin{subfigure}[b]{0.99\linewidth}
        \centering
        \begin{subfigure}[b]{0.99\linewidth}
            \includegraphics[width=\linewidth]{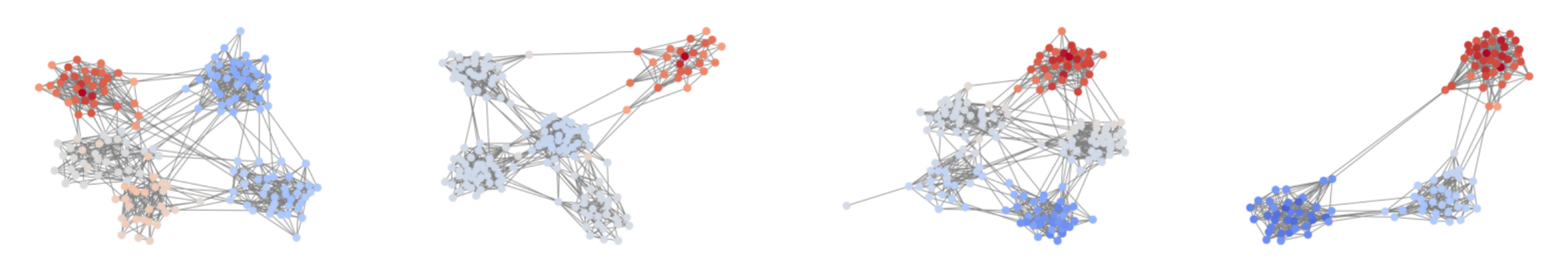}
        \end{subfigure}
        \hfill
        \begin{subfigure}[b]{0.99\linewidth}
            \includegraphics[width=\linewidth]{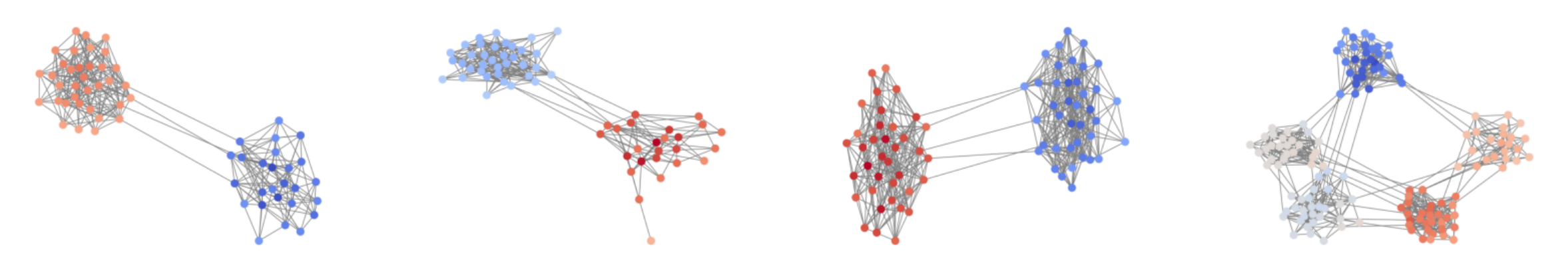}
        \end{subfigure}
        \caption{SBM dataset.}  %
    \end{subfigure}
    \caption{Uncurated set of dataset graphs (top) and generated graphs by DeFoG (bottom) for the synthetic datasets.}
    \label{fig:synthetic_graphs}
\end{figure*}

\begin{figure*}[t]
    \centering
    \begin{subfigure}[b]{0.9\linewidth}
        \centering
        \begin{subfigure}[b]{0.99\linewidth}
            \includegraphics[width=0.99\linewidth]{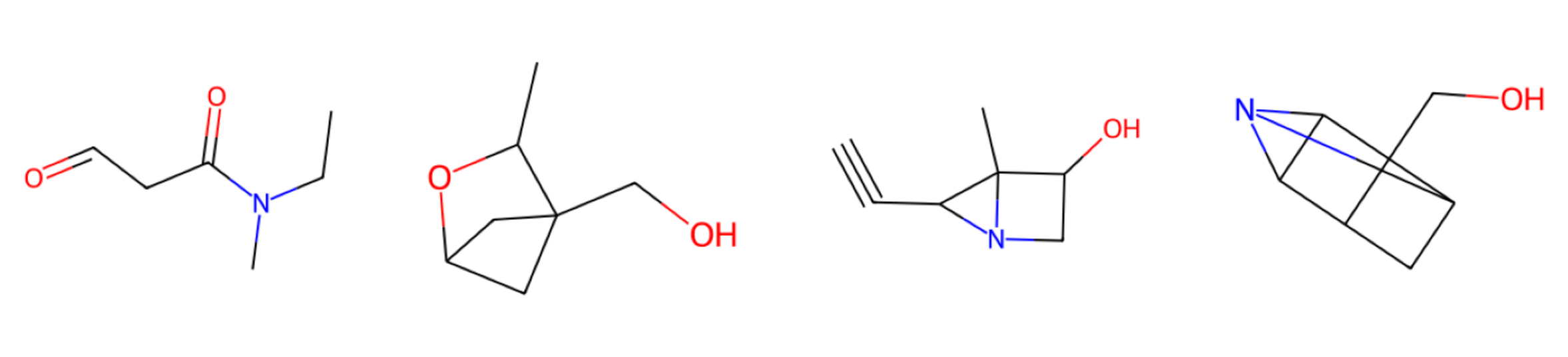}
        \end{subfigure}
        \hfill
        \begin{subfigure}[b]{0.99\linewidth}
            \includegraphics[width=0.99\linewidth]{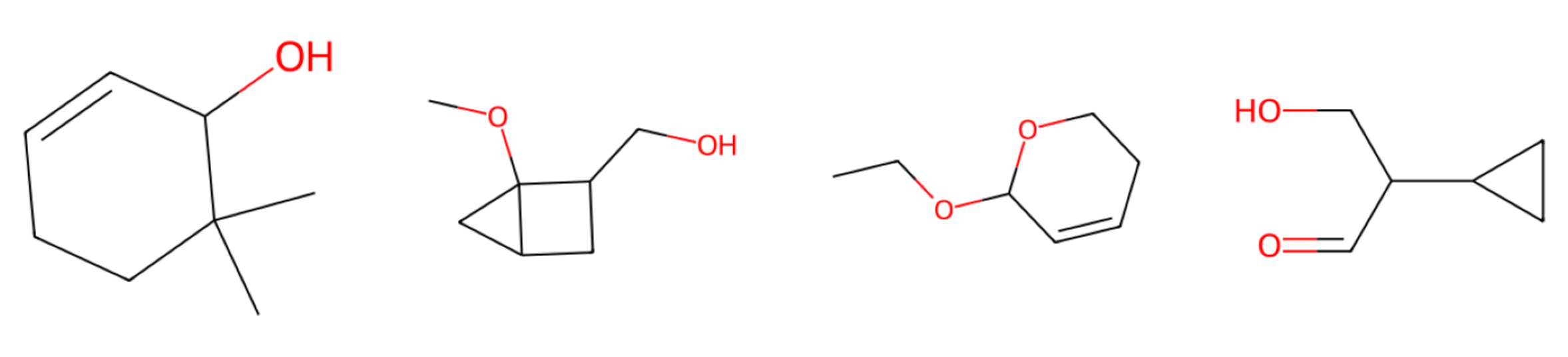}
        \end{subfigure}
        \caption{QM9 dataset.}  %
    \end{subfigure}
    \hfill
    \begin{subfigure}[b]{0.9\linewidth}
        \centering
        \begin{subfigure}[b]{0.99\linewidth}
            \includegraphics[width=0.99\linewidth]{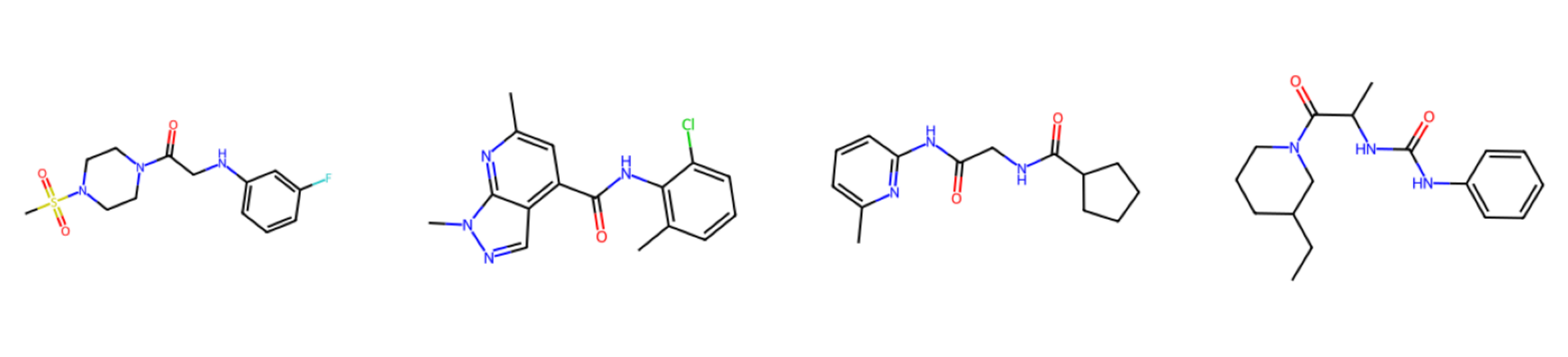}
        \end{subfigure}
        \hfill
        \begin{subfigure}[b]{0.99\linewidth}
            \includegraphics[width=0.99\linewidth]{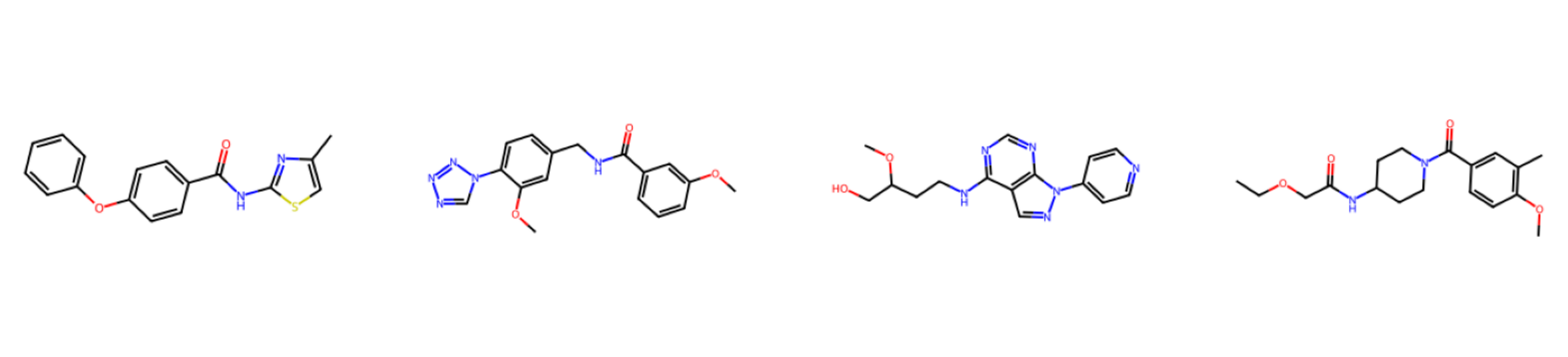}
        \end{subfigure}
        \caption{MOSES dataset.}  %
    \end{subfigure}
    \hfill
    \begin{subfigure}[b]{0.9\linewidth}
        \centering
        \begin{subfigure}[b]{0.99\linewidth}
            \includegraphics[width=\linewidth]{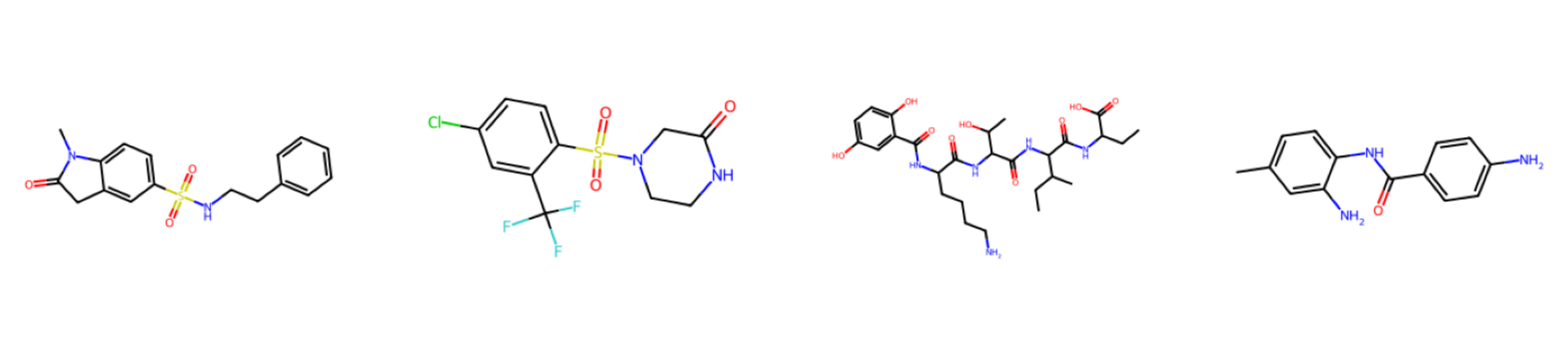}
        \end{subfigure}
        \hfill
        \begin{subfigure}[b]{0.99\linewidth}
            \includegraphics[width=\linewidth]{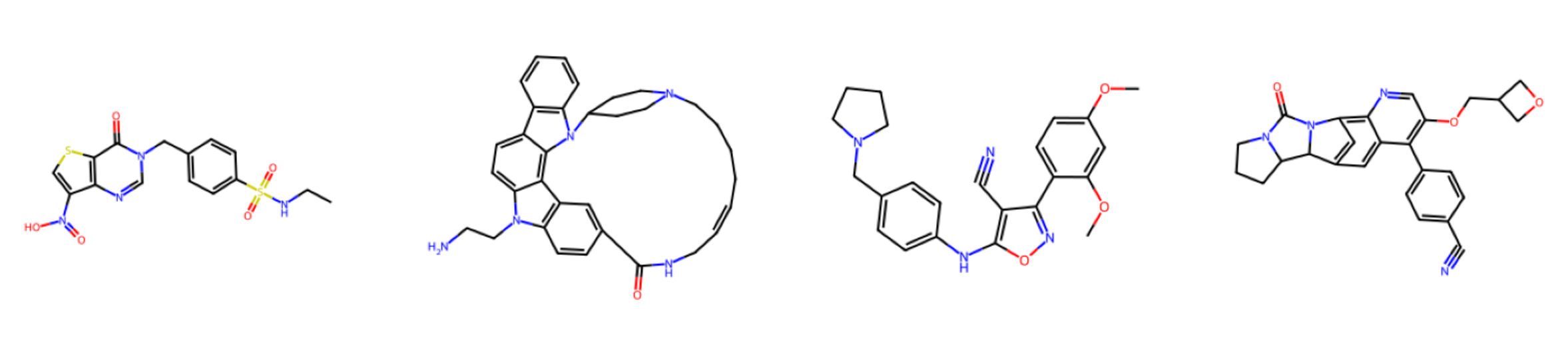}
        \end{subfigure}
        \caption{Guacamol dataset.}  %
    \end{subfigure}
    \caption{Uncurated set of dataset graphs (top) and generated graphs by DeFoG (bottom) for the molecular datasets.}
    \label{fig:molecule_graphs}
\end{figure*}

\begin{figure*}[t]
    \centering
    \begin{subfigure}[b]{0.99\linewidth}
        \centering
        \begin{subfigure}[b]{0.99\linewidth}
            \includegraphics[width=0.99\linewidth]{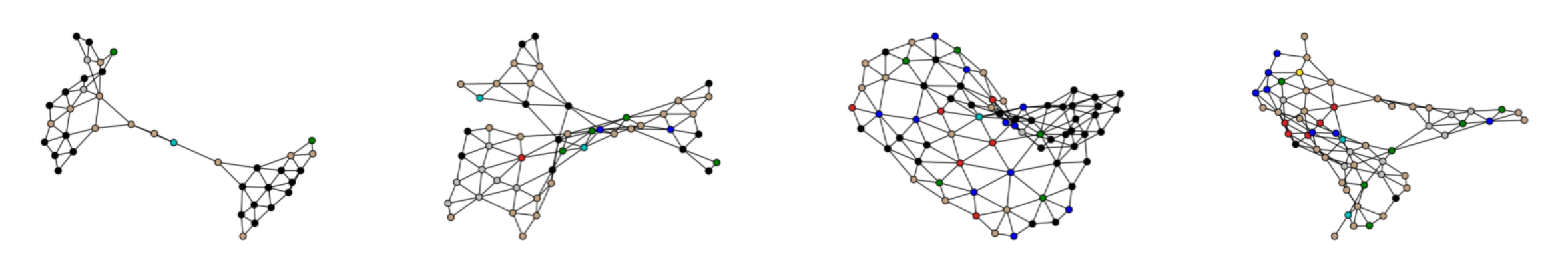}
        \end{subfigure}
        \hfill
        \begin{subfigure}[b]{0.99\linewidth}
            \includegraphics[width=0.99\linewidth]{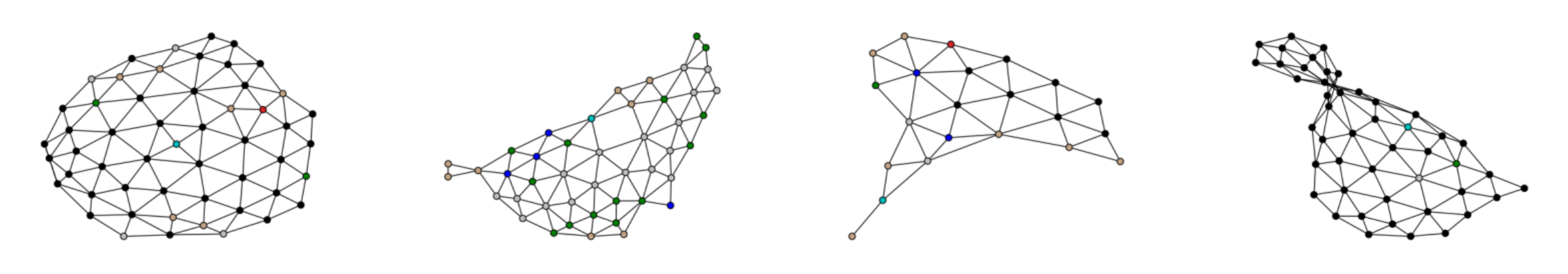}
        \end{subfigure}
        \caption{Low TLS dataset.}  %
    \end{subfigure}
    \hfill
    \begin{subfigure}[b]{0.99\linewidth}
        \centering
        \begin{subfigure}[b]{0.99\linewidth}
            \includegraphics[width=0.99\linewidth]{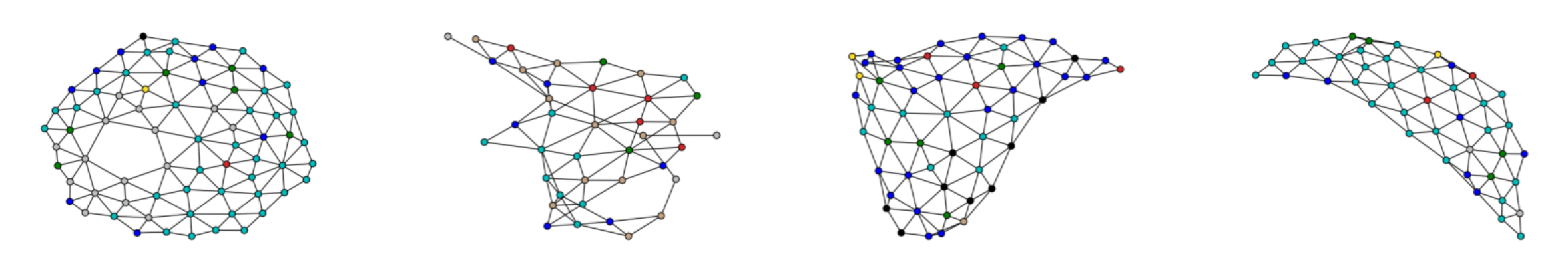}
        \end{subfigure}
        \hfill
        \begin{subfigure}[b]{0.99\linewidth}
            \includegraphics[width=0.99\linewidth]{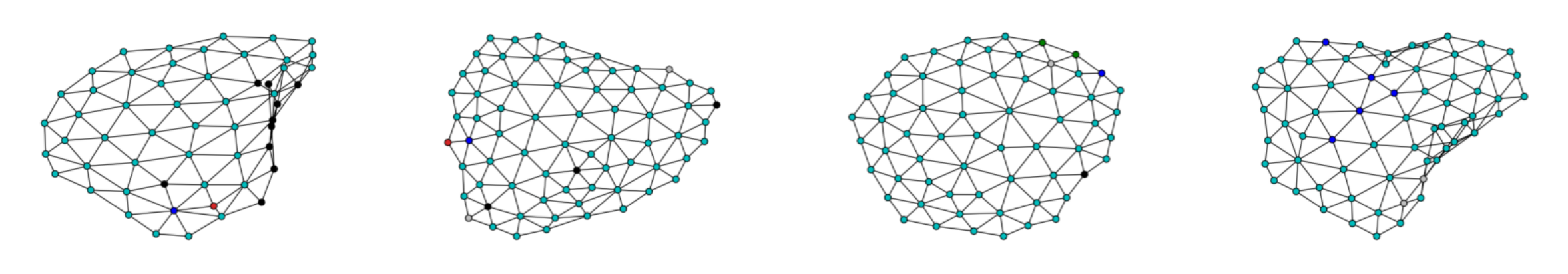}
        \end{subfigure}
        \caption{High TLS dataset.}  %
    \end{subfigure}
    \hfill
    \caption{Uncurated set of dataset graphs (top) and generated graphs by DeFoG (bottom) for the TLS dataset.}
    \label{fig:tls_graphs}
\end{figure*}

\clearpage
\section{Additional Results}
\label{app:additional_res}

First, we discuss conditional generation for real-world digital pathology graph generation. Next, we provide comprehensive tables detailing results on synthetic datasets in \Cref{app:additional_results_synthetic}. Then, \Cref{app:additional_molecular_results} focuses on molecular generation tasks, including results for the QM9 dataset and the full tables for MOSES and Guacamol. Finally, in \Cref{app:efficiency_additional_features}, we analyze the time complexity of various additional features that enhance expressivity in graph diffusion models.

\subsection{Conditional Generation}
\label{app:additional_res_cond}

\vspace{-3pt}
\paragraph{Setup}  Tertiary Lymphoid Structure (TLS) graph datasets have been recently released with graphs built from digital pathology data \citep{madeira2024generative}. These graphs are split into two subsets — low TLS and high TLS — based on their TLS content, a biologically informed metric reflecting the cell organization in the graph structure. Previously, models were trained and evaluated separately for each subset. To demonstrate the flexibility of DeFoG, we conditionally train it across both datasets simultaneously, using the low/high TLS content as a binary label for each graph. More details on the used conditional framework in \Cref{app:conditional_generation}.

\begin{table}[ht]
    \centering
    \caption{TLS conditional generation results.}
    \label{tab:digipath_graphs}
    \resizebox{0.4\linewidth}{!}{
    \begin{tabular}{l*{2}{S}}
        \toprule
        & \multicolumn{2}{c}{TLS Dataset} \\
        \cmidrule(lr){2-3}
        {Model} & {V.U.N.\,$\uparrow$} & {TLS Val.\,$\uparrow$} \\
        \midrule
        {Train set}   & {0.0} & {100} \\
        \midrule
        {GraphGen~\citep{goyal2020graphgen}}   & {40.2\scriptsize{±3.8}} & {25.1\scriptsize{±1.2}} \\
        {BiGG~\citep{dai2020scalable}}         & {0.6\scriptsize{±0.4}} & {16.7\scriptsize{±1.6}} \\
        {SPECTRE~\citep{martinkus2022spectre}} & {7.9\scriptsize{±1.3}}  & {25.3\scriptsize{±0.8}} \\
        {DiGress+~\citep{madeira2024generative}} & {13.2\scriptsize{±3.4}} & {12.6\scriptsize{±3.0}} \\
        {ConStruct~\citep{madeira2024generative}} & {\textbf{99.1}\scriptsize{±1.1}} & {92.1\scriptsize{±1.3}} \\
        \midrule
        \rowcolor{Gray}
        {DeFoG (\# steps = 50)} & {44.5\scriptsize{±4.2}}  & {\underline{93.0}\scriptsize{±5.6}} \\
        \rowcolor{Gray}
        {DeFoG (\# steps = 1,000)} & {\underline{94.5}\scriptsize{±1.8}}  & {\textbf{95.8}\scriptsize{±1.5}} \\ 
        \bottomrule
    \end{tabular}
    }
\end{table}

We evaluate two main aspects: first, how frequently the conditionally generated graphs align with the provided labels (TLS Validity); and, second, the validity, uniqueness, and novelty of the generated graphs (V.U.N.). Graphs are considered valid if they are planar and connected. For comparison, we report the average results of existing models across the two subsets, as they were not trained conditionally.
\looseness=-1

\vspace{-6pt}
\paragraph{Results}
From \Cref{tab:digipath_graphs}, DeFoG significantly outperforms the unconstrained models (all but ConStruct). Notably, we outperform ConStruct on TLS validity with even $50$ steps. 
For V.U.N., while ConStruct is hard-constrained to achieve 100\% graph planarity, making it strongly biased toward high validity, DeFoG remarkably approaches these values without relying on such rigid constraints.

\subsection{Synthetic Graph Generation}\label{app:additional_results_synthetic}

In \Cref{tab:synthetic_graphs_full}, we present the full results for DeFoG for the three different datasets: planar, tree, and SBM.

\vspace{10pt}
\begin{table*}[ht]
    \caption{
    Graph Generation Performance on Synthetic Graphs. We present the results of DeFoG across five sampling runs, each generating 40 graphs, reported as mean ± standard deviation. The remaining values are obtained from \citet{bergmeister2023efficient}. Additionally, we include results for Cometh \citep{siraudin2024cometh} and DisCo \citep{xu2024discrete}. For the average ratio computation, we adhere to the method outlined by \cite{bergmeister2023efficient}, excluding statistics where the training set MMD is zero.
    }
    \label{tab:synthetic_graphs_full}
    \centering
    \resizebox{\linewidth}{!}{
    \begin{tabular}{l*{5}{S}g*{3}{S}g*{1}{S}}
        \toprule
        & \multicolumn{10}{c}{Planar Dataset} \\
        \cmidrule(lr){2-11}
        {Model} & {Deg.\,$\downarrow$} & {Clus.\,$\downarrow$} & {Orbit\, $\downarrow$} & {Spec.\,$\downarrow$} & {Wavelet\, $\downarrow$}  & {Ratio\,$\downarrow$} & {Valid\,$\uparrow$} & {Unique\, $\uparrow$} & {Novel\,$\uparrow$} & {V.U.N.\, $\uparrow$} \\
        \midrule
        {Train set} & {0.0002} & {0.0310} & {0.0005} & {0.0038} & {0.0012} & {1.0}   & {100}  & {100}  & {0.0}   & {0.0}  \\
        \midrule
        {GraphRNN \citep{you2018graphrnn}}         & {0.0049} & {0.2779} & {1.2543} & {0.0459} & {0.1034} & {490.2} & {0.0}   & {100}  & {100}  & {0.0}\\
        {GRAN \citep{liao2019efficient}}                & {0.0007} & {0.0426} & {0.0009} & {0.0075} & {0.0019} & {2.0}   & {97.5}  & {85.0} & {2.5}  & {0.0}  \\
        {SPECTRE \citep{martinkus2022spectre}}     & {0.0005} & {0.0785} & {0.0012} & {0.0112} & {0.0059} & {3.0}   & {25.0}  & {100}  & {100}  & {25.0} \\
        {DiGress \citep{vignac2022digress}}        & {0.0007} & {0.0780} & {0.0079} & {0.0098} & {0.0031} & {5.1}   & {77.5}  & {100}  & {100}  & {77.5}  \\
        {EDGE \citep{chen2023efficient}} & {0.0761} & {0.3229} & {0.7737} & {0.0957} & {0.3627} & {431.4} & {0.0}   & {100}  & {100}  & {0.0}\\
        {BwR \citep{diamant2023improving}}      & {0.0231} & {0.2596} & {0.5473} & {0.0444} & {0.1314} & {251.9} & {0.0}   & {100}  & {100}  & {0.0} \\
        {BiGG \citep{dai2020scalable}}  & {0.0007} & {0.0570} & {0.0367} & {0.0105} & {0.0052} & {16.0}  & {62.5}  & {85.0} & {42.5} & {5.0} \\
        {GraphGen \citep{goyal2020graphgen}}  & {0.0328} & {0.2106} & {0.4236} &{0.0430} &{0.0989} &{210.3} &{7.5} &{100} &{100} &{7.5} \\
        {HSpectre (one-shot) \citep{bergmeister2023efficient}}   & {0.0003} & {0.0245} & {0.0006} & {0.0104} & {0.0030} & {1.7}   & {67.5}  & {100}  & {100}  & {67.5} \\
        {HSpectre \citep{bergmeister2023efficient}}              & {0.0005} & {0.0626} & {0.0017} & {0.0075} & {0.0013} & {2.1}   & {95.0}  & {100}  & {100}  & {95.0} \\
        {GruM \citep{jo2024graphgenerationdiffusionmixture}} & 
        {0.0004} & {0.0301} & {0.0002} & 
        {0.0104} & {0.0020} & {1.8} & 
        {---} & {---} & {---} & {90.0} \\
        
        {CatFlow \citep{eijkelboom2024variational}} & 
        {0.0003} & {0.0403} & {0.0008} & 
        {---} & {---} & {---} & 
        {---} & {---} & {---} & {80.0} \\
        
        {DisCo \citep{xu2024discrete}} & {0.0002 \scriptsize{±0.0001}} & {0.0403 \scriptsize{±0.0155}} & {0.0009 \scriptsize{±0.0004}} & {---} & {---} & {---} & {83.6 \scriptsize{±2.1}} & {100.0 \scriptsize{±0.0}} & {100.0 \scriptsize{±0.0}} & {83.6 \scriptsize{±2.1}}\\
        {Cometh - PC \citep{siraudin2024cometh}} & {0.0006 \scriptsize{±0.0005}} & {0.0434 \scriptsize{±0.0093}} & {0.0016 \scriptsize{±0.0006}} & {0.0049\scriptsize{±0.0008}} & {---} & {---} & {99.5 \scriptsize{±0.9}} & {100.0 \scriptsize{±0.0}} & {100.0 \scriptsize{±0.0}} & {\textbf{99.5} \scriptsize{±0.9}}\\

        \midrule
        \rowcolor{Gray}
        {DeFoG} & {0.0005 \scriptsize{±0.0002}} & {0.0501 \scriptsize{±0.0149}} & {0.0006 \scriptsize{±0.0004}} & {0.0072 \scriptsize{±0.0011}} & {0.0014 \scriptsize{±0.0002}} & {\textbf{1.6} \scriptsize{±0.4}} & {99.5 \scriptsize{±1.0}} & {100.0 \scriptsize{±0.0}} & {100.0 \scriptsize{±0.0}} & {\textbf{99.5} \scriptsize{±1.0}} \\
        \midrule

        & \multicolumn{10}{c}{Tree Dataset} \\
        \cmidrule(lr){2-11}
        {Train set}      & {0.0001} & {0.0000} & {0.0000} & {0.0075} & {0.0030} & {1.0}   & {100}  & {100}  & {0.0}   & {0.0} \\
        \midrule
        {GRAN \citep{liao2019efficient}}    & {0.1884} & {0.0080} & {0.0199} & {0.2751} & {0.3274} & {607.0} & {0.0}   & {100}  & {100}  & {0.0} \\
        {DiGress \citep{vignac2022digress}}                    & {0.0002} & {0.0000} & {0.0000} & {0.0113} & {0.0043} & {\textbf{1.6}}   & {90.0}  & {100}  & {100} & {90.0}  \\
        {EDGE \citep{chen2023efficient}}   & {0.2678} & {0.0000} & {0.7357} & {0.2247} & {0.4230} & {850.7} & {0.0}   & {7.5}   & {100}  & {0.0} \\
        {BwR \citep{diamant2023improving}}    & {0.0016} & {0.1239} & {0.0003} & {0.0480} & {0.0388} & {11.4}  & {0.0}   & {100}  & {100}  & {0.0} \\
        {BiGG \citep{dai2020scalable}}                  & {0.0014} & {0.0000} & {0.0000} & {0.0119} & {0.0058} & {5.2}   & {100}   & {87.5} & {50.0} & {75.0}  \\
        {GraphGen \citep{goyal2020graphgen}}                  & {0.0105} & {0.0000} & {0.0000} & {0.0153} & {0.0122} & {33.2}   & {95.0}   & {100} & {100} & {95.0} \\
        {HSpectre (one-shot) \citep{bergmeister2023efficient}}   & {0.0004} & {0.0000} & {0.0000} & {0.0080} & {0.0055} & {2.1}   & {82.5}  & {100}  & {100}  & {82.5} \\
        {HSpectre \citep{bergmeister2023efficient}}              & {0.0001} & {0.0000} & {0.0000} & {0.0117} & {0.0047} & {4.0}   & {100}   & {100}  & {100}  & {\textbf{100}}\\
        \midrule
        \rowcolor{Gray}
        {DeFoG} & {0.0002 \scriptsize{±0.0001}} & {0.0000 \scriptsize{±0.0000}} & {0.0000 \scriptsize{±0.0000}} & {0.0108 \scriptsize{±0.0028}} & {0.0046 \scriptsize{±0.0004}} & {\textbf{1.6} \scriptsize{±0.4}} & {96.5 \scriptsize{±2.6}} & {100.0 \scriptsize{±0.0}} & {100.0 \scriptsize{±0.0}} & {96.5 \scriptsize{±2.6}}\\

        \midrule
        & \multicolumn{10}{c}{Stochastic Block Model ($n_{\text{max}} = 187$, $n_{\text{avg}} = 104$)} \\
        \cmidrule(lr){2-11}
        {Model} & {Deg.\,$\downarrow$} & {Clus.\,$\downarrow$} & {Orbit\, $\downarrow$} & {Spec.\,$\downarrow$} & {Wavelet\, $\downarrow$}  & {Ratio\,$\downarrow$} & {Valid\,$\uparrow$} & {Unique\, $\uparrow$} & {Novel\,$\uparrow$} & {V.U.N.\, $\uparrow$} \\
        \midrule
        {Training set}                                  & {0.0008} & {0.0332} & {0.0255} & {0.0027} & {0.0007} & {1.0}   & {85.9}  & {100}  & {0.0}   & {0.0}  \\
        \midrule
        {GraphRNN \citep{you2018graphrnn}}                 & {0.0055} & {0.0584} & {0.0785} & {0.0065} & {0.0431} & {14.7}  & {5.0}   & {100}  & {100}  & {5.0}  \\
        {GRAN \citep{liao2019efficient}}                        & {0.0113} & {0.0553} & {0.0540} & {0.0054} & {0.0212} & {9.7}   & {25.0}  & {100}  & {100}  & {25.0} \\
        {SPECTRE \citep{martinkus2022spectre}}             & {0.0015} & {0.0521} & {0.0412} & {0.0056} & {0.0028} & {2.2}   & {52.5}  & {100}  & {100}  & {52.5} \\
        {DiGress \citep{vignac2022digress}}                & {0.0018} & {0.0485} & {0.0415} & {0.0045} & {0.0014} & {1.7}   & {60.0}  & {100}  & {100}  & {60.0} \\
        {EDGE \citep{chen2023efficient}}              & {0.0279} & {0.1113} & {0.0854} & {0.0251} & {0.1500} & {51.4}  & {0.0}   & {100}  & {100}  & {0.0}  \\
        {BwR \citep{diamant2023improving}}    & {0.0478} & {0.0638} & {0.1139} & {0.0169} & {0.0894} & {38.6}  & {7.5}   & {100}  & {100}  & {7.5}  \\
        {BiGG \citep{dai2020scalable}}                  & {0.0012} & {0.0604} & {0.0667} & {0.0059} & {0.0370} & {11.9}  & {10.0}  & {100}  & {100}  & {10.0} \\
        {GraphGen \citep{goyal2020graphgen}}                   & {0.0550} & {0.0623} & {0.1189} & {0.0182} & {0.1193} & {48.8}  & {5.0}   & {100}  & {100}  & {5.0}  \\
        {HSpectre (one-shot) \citep{bergmeister2023efficient}}   & {0.0141} & {0.0528} & {0.0809} & {0.0071} & {0.0205} & {10.5}  & {75.0}  & {100}  & {100}  & {75.0}\\
        {HSpectre \citep{bergmeister2023efficient}}              & {0.0119} & {0.0517} & {0.0669} & {0.0067} & {0.0219} & {10.2}  & {45.0}  & {100}  & {100}  & {45.0} \\
        {GruM \citep{jo2024graphgenerationdiffusionmixture}} & 
        {0.0015} & {0.0589} & {0.0450} & 
        {0.0077} & {0.0012} & {\textbf{1.1}} & 
        {---} & {---} & {---} & {85.0} \\
        
        {CatFlow \citep{eijkelboom2024variational}} & 
        {0.0012} & {0.0498} & {0.0357} & 
        {---} & {---} & {---} & 
        {---} & {---} & {---} & {85.0} \\
        
        {DisCo \citep{xu2024discrete}} & {0.0006 \scriptsize{±0.0002}} & {0.0266 \scriptsize{±0.0133}} & {0.0510 \scriptsize{±0.0128}} & {---} & {---} & {---} & {66.2 \scriptsize{±1.4}} & {100.0 \scriptsize{±0.0}} & {100.0 \scriptsize{±0.0}} & {66.2 \scriptsize{±1.4}} \\
        {Cometh \citep{siraudin2024cometh}} & {0.0020 \scriptsize{±0.0003}} & {0.0498 \scriptsize{±0.0000}} & {0.0383 \scriptsize{±0.0051}} & {0.0024 \scriptsize{±0.0003}} & {---} & {---} & {75.0 \scriptsize{±3.7}} & {100.0 \scriptsize{±0.0}} & {100.0 \scriptsize{±0.0}} & {75.0 \scriptsize{±3.7}}\\
        \midrule
        \rowcolor{Gray}
        {DeFoG} & {0.0006 \scriptsize{±0.0023}} & {0.0517 \scriptsize{±0.0012}} & {0.0556 \scriptsize{±0.0739}} & {0.0054 \scriptsize{±0.0012}} & {0.0080 \scriptsize{±0.0024}} & {4.9\textbf{ }\scriptsize{±1.3}} & {90.0 \scriptsize{±5.1}} &  {90.0 \scriptsize{±5.1}} &  {90.0 \scriptsize{±5.1}} & {\textbf{90.0} \scriptsize{±5.1}} \\
    \bottomrule
    \end{tabular}}
\end{table*}

\clearpage
\subsection{Molecular Graph Generation}\label{app:additional_molecular_results}

For the molecular generation tasks, we begin by examining the results for QM9, considering both implicit and explicit hydrogens \citep{vignac2022digress}. In the implicit case, hydrogen atoms are inferred to complete the valencies, while in the explicit case, hydrogens must be explicitly modeled, making it an inherently more challenging task. The results are presented in \Cref{tab:qm9}. Notably, DeFoG achieves training set validity in both scenarios, representing the theoretical maximum. Furthermore, DeFoG consistently outperforms other models in terms of FCD. Remarkably, even with only 10\% of the sampling steps, DeFoG surpasses many existing methods.

\begin{table*}[ht]
\centering
\vspace{0.5cm}
\caption{Molecule generation on QM9. We present the results over five sampling runs of 10000 generated graphs each, in the format mean ± standard deviation. We include the results of Relaxed Validity, which accounts for charged molecules to facilitate comparison, as different methods may report varying types of validity.} \label{tab:qm9_results}
\resizebox{0.99\linewidth}{!}{
\begin{tabular}{l *{4}{c} *{4}{c}}
    \toprule
    & \multicolumn{4}{c}{Without Explicit Hydrogenes} & \multicolumn{4}{c}{With Explicit Hydrogenes} \\
    \cmidrule(lr){2-5} \cmidrule(lr){6-9}
    Model & Valid $\uparrow$ & Relaxed Valid $\uparrow$ & Unique $\uparrow$ & FCD $\downarrow$ & Valid $\uparrow$ & Relaxed Valid $\uparrow$  & Unique $\uparrow$ & FCD $\downarrow$ \\
    \midrule
    Training set & 99.3 & 99.5 & 99.2 & 0.03 & 97.8 & 98.9 & 99.9 & 0.01 \\  %
    \midrule
    SPECTRE \citep{martinkus2022spectre}       & $87.3$ & {---} & $35.7$ & {---}  & {---} & {---} & {---} & {---} \\
    GraphNVP \citep{madhawa2019graphnvp}      & $83.1$ & {---} & $\underline{99.2}$ & {---}  & {---} & {---} & {---} & {---} \\
    GDSS \citep{jo2022score}          & $95.7$ & {---} & $98.5$ & $2.9$ & {---} & {---} & {---} & {---} \\
    {DiGress \citep{vignac2022digress}} & 99.0\scriptsize{±0.0} & {---} & 96.2\scriptsize{±0.1} & {---} & 95.4\scriptsize{±1.1} & {---}  & \textbf{97.6}\scriptsize{±0.4} & {---} \\
    {GruM\citep{jo2024graphgenerationdiffusionmixture}} & 
    {99.2} & {---} & {96.7} & {\textbf{0.11}} & 
    {---} & {---} & {---} & {---} \\
    {CatFlow\citep{eijkelboom2024variational}} & 
    {---} & \textbf{99.8} & \textbf{100.0} & 0.44 & 
    {---} & {---} & {---} & {---} \\
    {DisCo \citep{xu2024discrete}} & \underline{99.3}\scriptsize{±0.6} & {---} &{---} & {---} & {---}  & {---} & {---} & {---} \\
    {Cometh \citep{siraudin2024cometh}} & \textbf{99.6}\scriptsize{±0.1} & {---} & 96.8\scriptsize{±0.2} & 0.25 \scriptsize{±0.01} &  {---}  & {---} &  {---} & {---} \\
    \midrule
    \rowcolor{Gray}
    DeFoG (\# sampling steps = 50) & 98.9\scriptsize{±0.1} & 99.2\scriptsize{±0.0} & 96.2\scriptsize{±0.2} & 0.26\scriptsize{±0.00} & \underline{97.1}\scriptsize{±0.0} & \underline{98.1}\scriptsize{±0.0}  & 94.8\scriptsize{±0.0} & \underline{0.31}\scriptsize{±0.00}
    \\
    \rowcolor{Gray}
    DeFoG (\# sampling steps = 500) & \underline{99.3}\scriptsize{±0.0} & \underline{99.4}\scriptsize{±0.1} & 96.3\scriptsize{±0.3} & \underline{0.12}\scriptsize{±0.00} & \textbf{98.0}\scriptsize{±0.0}  & \textbf{98.8}\scriptsize{±0.0} & \underline{96.7}\scriptsize{±0.0} & \textbf{0.05}\scriptsize{±0.00}
     \\
    \bottomrule
\end{tabular}
\label{tab:qm9}
}
\vspace{0.5cm}
\end{table*}

Additionally, we provide the complete version of \Cref{tab:moses_guacamol}, presenting the results for MOSES, Guacamol and ZINC250k datasets separately in \Cref{tab:moses}, \Cref{tab:guacamol} and \Cref{tab:zinc}, respectively. We include models from classes beyond diffusion models to better contextualize the performance achieved by DeFoG. We analyze the performance of diffusion and flow-based methods on MOSES and Guacamol in the main paper (see \Cref{sec:synthetic}). Here, we focus on the same analysis for ZINC250k. As shown in \Cref{tab:zinc}, DeFoG achieves state-of-the-art performance on this benchmark. Notably, it also attains superior FCD scores with only 50 sampling steps, surpassing existing diffusion and flow-based methods.

\begin{table}[ht]
    \centering
    \vspace{0.5cm}
    \caption{Molecule generation on MOSES. }
    \label{tab:moses}
    \resizebox{0.85\linewidth}{!}{
    \begin{tabular}{l c c c c c c c c c} \toprule
    Model &  Class &  Val.$\uparrow$ & Unique. $\uparrow$ & Novelty$\uparrow$ & Filters $\uparrow$ & FCD $\downarrow$ & SNN $\uparrow$ & Scaf $\uparrow$\\ 
    \midrule
    Training set & {---} & 100.0 & 100.0 & 0.0 & 100.0 & 0.01 & 0.64 & 99.1 \\
    \midrule
    {VAE \citep{kingma2013auto}} & Smiles & 97.7 & 99.8 & 69.5 & \textbf{99.7} &\textbf{ 0.57} & \textbf{0.58} & 5.9 \\
    {JT-VAE \citep{jin2018junction}} & Fragment & \textbf{100.0} &\textbf{ 100.0} & \textbf{99.9} & 97.8 & 1.00 & 0.53 & 10.0 \\
    {GraphInvent \citep{mercado2021graph}} & Autoreg. & 96.4 & 99.8 & {---}- & 95.0 & 1.22 & 0.54 & 12.7 \\
    {DiGress \citep{vignac2022digress}} & One-shot & 85.7 & \textbf{100.0} & 95.0 & 97.1 & 1.19 & 0.52 & 14.8 \\
    {DisCo \citep{xu2024discrete}} & One-shot & 88.3 & \textbf{100.0} & 97.7 & 95.6 & 1.44 & 0.50 & 15.1 \\
    {Cometh \citep{siraudin2024cometh}} & One-shot & 90.5 & 99.9 & 92.6 & 99.1 & 1.27 & 0.54 & 16.0 \\
    \midrule
    \rowcolor{Gray}
    DeFoG (\# sampling steps = 50) & One-shot & 83.9 & 99.9 & 96.9 & 96.5 & 1.87 & 0.50 & \textbf{23.5}\\
    \rowcolor{Gray}
    DeFoG (\# sampling steps = 500) & One-shot & 92.8 & 99.9 & 92.1 & 98.9 & 1.95 & 0.55 & 14.4 \\
    \bottomrule
    \end{tabular}}
    \vspace{0.5cm}
\end{table}

\begin{table}[ht]
    \centering
    \vspace{0.5cm}
    \caption{Molecule generation on GuacaMol. We present the results over five sampling runs of 10000 generated graphs each, in the format mean ± standard deviation.}
    \resizebox{0.75\linewidth}{!}{
    \begin{tabular}{l c c c c c c} \toprule
    Model & Class &  Val. $\uparrow$ & V.U. $\uparrow$ & V.U.N. $\uparrow$ & KL div$\uparrow$ & FCD$\uparrow$ \\
    \toprule
    Training set  & {---} & 100.0 & 100.0 & 0.0 & 99.9 & 92.8 \\
    \midrule
    {LSTM \citep{hochreiter1997long}} & Smiles & 95.9 & 95.9 & 87.4 & \textbf{99.1} & \textbf{91.3} \\
    {NAGVAE \citep{kwon2020compressed}} & One-shot & 92.9 & 88.7 & 88.7 & 38.4 & 0.9\\
    {MCTS \citep{brown2019guacamol}} & One-shot & \textbf{100.0} & \textbf{100.0} & 95.4 & 82.2 & 1.5 \\
    {DiGress \citep{vignac2022digress}} & One-shot & 85.2 & 85.2 & 85.1 & 92.9 & 68.0 \\
    {DisCo \citep{xu2024discrete}} & One-shot & 86.6 & 86.6 & 86.5 & 92.6 & 59.7 \\
    {Cometh \citep{siraudin2024cometh}} & One-shot & 98.9 & {98.9} & {97.6} & 96.7 & 72.7 \\
    \midrule
    \rowcolor{Gray}
    \rowcolor{Gray}
    DeFoG (\# steps = 50) & One-shot & 91.7 & 91.7 & 91.2 & 92.3 & 57.9 \\
    \rowcolor{Gray}
    DeFoG (\# steps = 500) & One-shot &  99.0  & 99.0  & \textbf{97.9}  &  97.7 & 73.8\\
    \bottomrule
    \end{tabular}}
    \vspace{0.5cm}
    \label{tab:guacamol}
    \vspace{0.5cm}
\end{table}

\begin{table}[ht]
  \centering
  \caption{{Molecular generation on ZINC250k dataset.}}
    \resizebox{0.85\linewidth}{!}{
  \label{tab:zinc}
  \begin{tabular}{@{}lccccc@{}}
    \toprule
    \textbf{Model} & Val. $\uparrow$ & Uniqueness $\uparrow$ & FCD {$\downarrow$} & NSPDK {$\downarrow$} & Scaffold {$\uparrow$} \\
    \midrule
    GruM \citep{jo2024graphgenerationdiffusionmixture}                  & 98.65          & --             & 2.257          & 0.0015        & 0.5299          \\
    GBD \citep{liu2024advancing}                  & 97.87          & --             & 2.248          & 0.0018        & 0.5042          \\
    CatFlow \citep{eijkelboom2024variational}              & 99.21          & \textbf{100.00}         & 13.211         & --            & --              \\
    \midrule
    \rowcolor{Gray}
    DeFoG (50 steps)      & 96.65±0.16   & 99.99±0.01   & 2.123±0.029  & 0.0022±0.0001 & 0.4245±0.0109 \\
    \rowcolor{Gray}
    
    DeFoG                 & \textbf{99.22}±0.08   & 99.99±0.01   & \textbf{1.425}±0.022  & \textbf{0.0008}±0.0001 & \textbf{0.5903}±0.0099 \\
    \bottomrule
  \end{tabular}
  }
\end{table}

\clearpage
\subsection{Impact of Additional Features}\label{app:efficiency_additional_features}

In graph diffusion methods, the task of graph generation is decomposed into a mapping of a graph to a set of marginal probabilities for each node and edge. This problem is typically addressed using a Graph Transformer architecture, which is augmented with additional features to capture structural aspects that the base architecture might struggle to model effectively \citep{vignac2022digress,xu2024discrete,siraudin2024cometh} otherwise. 

In this section, we evaluate the impact of using RRWP encodings as opposed to the spectral and cycle encodings (up to $6$-cycles) proposed in DiGress \citep{vignac2022digress}.

In \Cref{tab:rrwp_compare}, we present a performance comparison of these two variants across three datasets: QM9, Planar, and SBM. The results show that RRWP achieves comparable or superior performance within the DiGress framework, validating its effectiveness as a graph encoding method. Notably, despite these improvements, DiGress's performance remains significantly below that of DeFoG on the Planar and SBM datasets, while achieving similar validity and uniqueness on the QM9 dataset.

\begin{table*}[t]
    \caption{Performance comparison of RRWP-based graph encoding within the DiGress framework.}
    \label{tab:rrwp_compare}
    \centering
    \resizebox{0.9\linewidth}{!}{
    \begin{tabular}{lcccccccc}
    \toprule
    Method & \multicolumn{2}{c}{Planar} & \multicolumn{2}{c}{SBM} & \multicolumn{3}{c}{QM9} \\
    \cmidrule(lr){2-3} \cmidrule(lr){4-5} \cmidrule(lr){6-8}
    & V.U.N. $\uparrow$ & Ratio $\downarrow$ & V.U.N. $\uparrow$ & Ratio $\downarrow$ & Valid $\uparrow$ & Unique $\uparrow$ & FCD $\downarrow$ \\
    \midrule
    DiGress & $77.5$ & $5.1$ & $60.0$ & $1.7$ & $99.0\pm0.0$ & $96.2\pm0.1$ & - \\
    DiGress (RRWP) & $90.0$ & $4.0$ & $70.0$ & $1.7$ & $99.1\pm{0.1}$ & $96.6\pm{0.2}$ & - \\
    DeFoG (RRWP, 50 steps) & $95.0\pm 3.2$ & $3.2\pm1.1$ & $86.5\pm5.3$ & $2.2\pm0.3$ & $98.9\pm0.1$ & $96.2\pm0.2$ & $0.26\pm0.00$ \\
    DeFoG (RRWP) & $99.5\pm 1.0$ & $1.6\pm0.4$ & $90.0\pm5.1$ & $4.9\pm1.3$ & $99.3\pm0.0$ & $96.3\pm0.3$ & $0.12\pm0.00$ \\
    \bottomrule
    \end{tabular}}
\end{table*}

To further demonstrate the impact of using RRWP on sampling efficiency, we compare the performances of DeFoG, DiGress, and DiGress augmented with RRWP (replacing the original additional features) across a varying number of sampling steps. These results are shown in \Cref{fig:search_rrwp}.

\begin{figure*}[ht]
    \centering
    \begin{subfigure}[b]{0.999\linewidth}
        \centering
        \begin{subfigure}[b]{0.49\linewidth}
            \includegraphics[width=\linewidth]{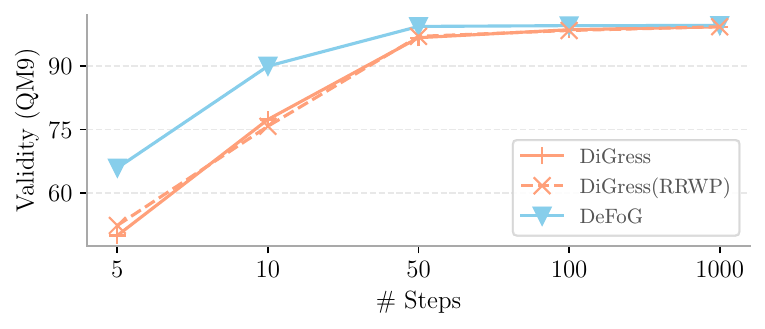}
        \end{subfigure}
        \hfill
        \begin{subfigure}[b]{0.49\linewidth}
            \includegraphics[width=\linewidth]{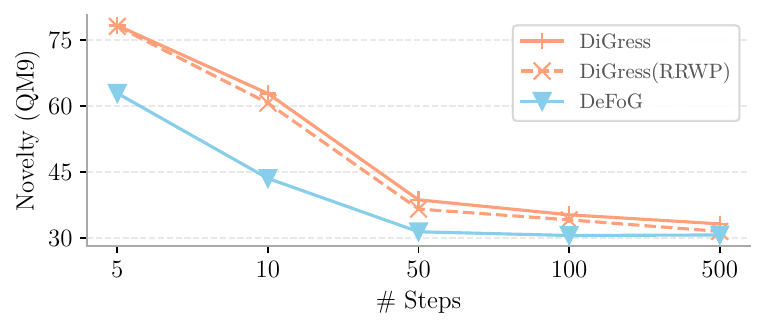}
        \end{subfigure}
        \vspace{-5pt}
        \caption{QM9 dataset.}  %
        \label{fig:rrwp_step_opt_qm9}
    \end{subfigure}
    \hfill
    \begin{subfigure}[b]{0.99\linewidth}
        \centering
        \begin{subfigure}[b]{0.49\linewidth}
            \includegraphics[width=\linewidth]{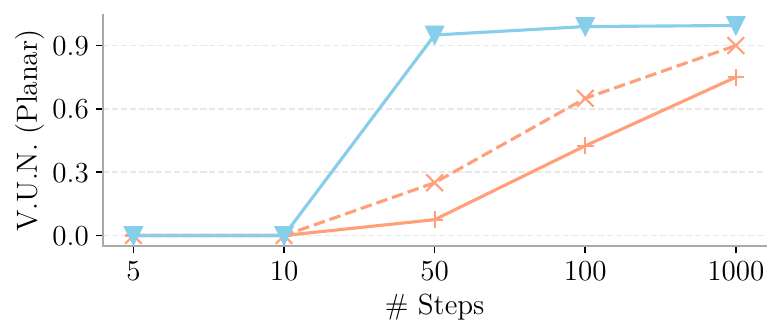}
        \end{subfigure}
        \hfill
        \begin{subfigure}[b]{0.49\linewidth}
            \includegraphics[width=\linewidth]{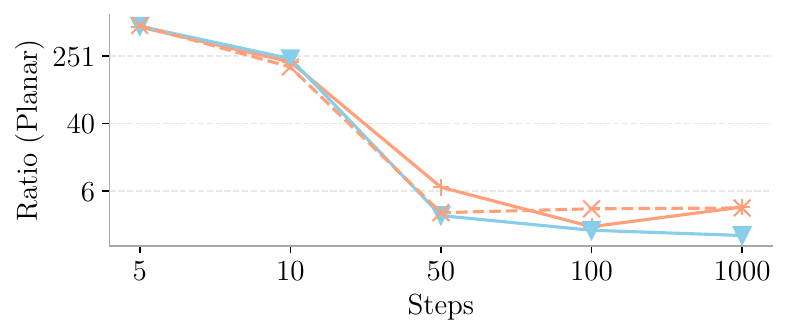}
        \end{subfigure}
        \vspace{-5pt}
        \caption{Planar dataset.}  %
        \label{fig:rrwp_step_opt_planar}
    \end{subfigure}
    \hfill
    \begin{subfigure}[b]{0.99\linewidth}
        \centering
        \begin{subfigure}[b]{0.49\linewidth}
            \includegraphics[width=\linewidth]{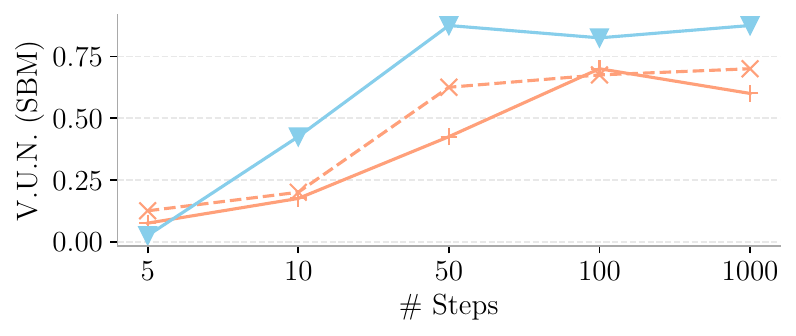}
        \end{subfigure}
        \hfill
        \begin{subfigure}[b]{0.49\linewidth}
            \includegraphics[width=\linewidth]{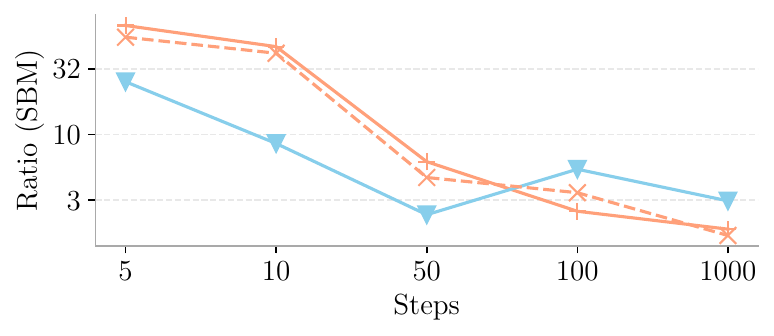}
        \end{subfigure}
        \vspace{-5pt}
        \caption{SBM dataset.}  %
        \label{fig:rrwp_step_opt_sbm}
    \end{subfigure}
    \hfill
    \caption{Impact of RRWP features for sampling efficiency.}
    \label{fig:search_rrwp}
\end{figure*}

We observe that, while RRWP provides improvements on the Planar and SBM datasets with fewer generation steps, it is still significantly outperformed by the optimized DeFoG framework. This highlights that although RRWP is an efficient and effective graph encoding method, the primary performance gains of DeFoG stem from its continuous-time formulation featuring fully decoupled training and sampling stages.

\color{black}
We then perform a time complexity analysis of these methods. 
While both cycle and RRWP encodings primarily involve matrix multiplications, spectral encodings require more complex algorithms for eigenvalue and eigenvector computation. As shown in \Cref{tab:add_feat_speed}, cycle and RRWP encodings are more computationally efficient, particularly for larger graphs where eigenvalue computation becomes increasingly costly. These results also support the use of RRWP encodings over the combined utilization of cycle and spectral features.

For the graph sizes considered in this work, the additional feature computation time remains relatively small compared to the model’s forward pass and backpropagation. However, as graph sizes increase - a direction beyond the scope of this paper - this computational gap could become significant, making RRWP a suitable encoding for scalable graph generative models.

\begin{table*}[t]
    \caption{Computation time for different additional features. The RRWP features are computed with 12 steps.}
    \label{tab:add_feat_speed}
    \centering
    \resizebox{0.65\linewidth}{!}{
    \begin{tabular}{lccccc}
    \toprule
    Dataset & Min Nodes & Max Nodes& RRWP (ms) & Cycles (ms) & Spectral (ms)  \\
    \midrule
    Moses &  8 & 27 & 0.6 & 1.2 & 2.5 \\
    Planar & 64 & 64 & 0.5 & 1.2 & 93.0 \\
    SBM  & 44 & 187 & 0.6 & 2.7 & 146.9 \\
    \bottomrule
    \end{tabular}}
\end{table*}

\end{document}